\newcommand{\scrbar}[1]{\overline{\mathcal{#1}}}
	\newcommand{\mymathbold}{\symbf}%
	\newcommand{\mymathbold}{\bm}%
\DeclareMathOperator{\E}{\mathbf{E}}
\renewcommand{\P}{\operatorname{\mathbf{P}}}
\newcommand{\tr}{\operatorname{tr}}
\newcommand{\argmin}{\operatornamewithlimits{arg~min}}
\newcommand{\argmax}{\operatornamewithlimits{arg~max}}
\newcommand{\diag}{\operatorname{diag}}
\DeclarePairedDelimiter{\norm}{\lVert}{\rVert}
\DeclarePairedDelimiter{\abs}{\lvert}{\rvert}
\DeclarePairedDelimiter{\parens}{(}{)}
\DeclarePairedDelimiterX{\ip}[2]{\langle}{\rangle}{#1,#2}
\DeclarePairedDelimiterXPP{\normsub}[2]{}{\lVert}{\rVert}{_{#2}}{#1}
\DeclarePairedDelimiterXPP{\ipsub}[3]{}{\langle}{\rangle}{_{#3}}{#1,#2}
\DeclarePairedDelimiterXPP{\ipHS}[2]{}{\langle}{\rangle}{_{\mathrm{HS}}}{#1, #2}
\DeclarePairedDelimiterXPP{\normHS}[1]{}{\lVert}{\rVert}{_{\mathrm{HS}}}{#1}
\DeclarePairedDelimiterXPP{\ipF}[2]{}{\langle}{\rangle}{_{\mathrm{F}}}{#1, #2}
\DeclarePairedDelimiterXPP{\normF}[1]{}{\lVert}{\rVert}{_{\mathrm{F}}}{#1}
\DeclarePairedDelimiterXPP{\dkl}[2]{\operatorname{D_{KL}}}{(}{)}{}{#1 \: \delimsize\Vert \: #2}
\DeclarePairedDelimiterXPP{\restr}[2]{}{{}}{\vert}{_{#2}}{#1}
\newcommand{\R}{\mathbf{R}}
\newcommand{\spn}{\operatorname{span}}
\newcommand{\given}{\:\vert\:}
\newcommand{\sign}{\operatorname{sign}}
\newcommand{\simiid}{\overset{\mathclap{\text{i.i.d.}}}{\sim}}
\newcommand\inputpgf[2]{{
\let\pgfimageWithoutPath\pgfimage
\renewcommand{\pgfimage}[2][]{\pgfimageWithoutPath[##1]{#1/##2}}
\let\includegraphicsWithoutPath\includegraphics
\renewcommand{\includegraphics}[2][]{\includegraphicsWithoutPath[##1]{#1/##2}}
\input{#1/#2}
}}
\setlist[enumerate]{leftmargin=.5in}
\setlist[itemize]{leftmargin=.5in}
\newtheorem{lemma}{Lemma}
\newtheorem{theorem}{Theorem}
\newtheorem{corollary}{Corollary}
\newtheorem{proposition}{Proposition}
\newtheorem{assumption}{Assumption}
\Crefname{assumption}{Assumption}{Assumptions}
\title{New Equivalences Between Interpolation and SVMs: Kernels and Structured Features}
\author{Chiraag Kaushik\thanks{School of Electrical and Computer Engineering, Georgia Institute of Technology, Atlanta, GA (ckaushik7@gatech.edu, mdav@gatech.edu, vmuthukumar8@gatech.edu).}
\and Andrew D.\ McRae\thanks{Institute of Mathematics, EPFL, Lausanne, Switzerland 
  (andrew.mcrae@epfl.ch)},
\and Mark A.\ Davenport\footnotemark[2]
\and Vidya Muthukumar\footnotemark[2] \thanks{School of Industrial \& Systems Engineering, Georgia Institute of Technology, Atlanta, GA.}}
\begin{document}

\maketitle

\begin{abstract}
The support vector machine (SVM) is a supervised learning algorithm that finds a maximum-margin linear classifier, often after mapping the data to a high-dimensional feature space via the kernel trick. Recent work has demonstrated that in certain sufficiently overparameterized settings, the SVM decision function coincides exactly with the minimum-norm label interpolant. This phenomenon of support vector proliferation (SVP) is especially interesting because it allows us to understand SVM performance by leveraging recent analyses of harmless interpolation in linear and kernel models. However, previous work on SVP has made restrictive assumptions on the data/feature distribution and spectrum. In this paper, we present a new and flexible analysis framework for proving SVP in an arbitrary reproducing kernel Hilbert space with a flexible class of generative models for the labels. We present conditions for SVP for features in the families of general bounded orthonormal systems (e.g. Fourier features) and independent sub-Gaussian features. In both cases, we show that SVP occurs in many interesting settings not covered by prior work, and we leverage these results to prove novel generalization results for kernel SVM classification.
\end{abstract}



\section{Introduction}\label{sec:introduction}
Recent empirical and theoretical efforts in supervised machine learning have discovered a wide range of surprising phenomena that arise in the modern \emph{overparameterized regime} (i.e., where the number of free parameters in the model is much larger than the number of training examples \cite{Dar2021, Belkin2021}). For example, after it was observed that deep neural networks can perfectly fit noisy training data and still generalise well to new data (see, e.g., \cite{neyshabur2014search, Zhang2017}), several theoretical efforts have demonstrated that this “harmless interpolation” phenomenon can in fact occur even in the simpler settings of linear and kernel regression~\cite{Belkin2018, Belkin2019a, Bartlett2020}. A separate, but equally surprising observation in this overparameterized regime is that training procedures that optimize different loss functions can still yield similar test performance. For example, the empirical studies of \cite{rifkin2002everything, Hui2021, kline2005revisiting, golik2013cross} demonstrate that kernel machines and deep neural networks trained using the squared loss, which is traditionally reserved for regression problems with continuous labels, can result in comparable classification performance to those trained with the more popular cross-entropy loss.

Motivated by these observations, recent work has sought to deepen theoretical understanding of the impact of the loss function in overparameterized classification tasks, starting with linear models. The hard-margin support vector machine (SVM) \cite{cortes1995support} is one popular classification algorithm which, given $n$ linearly separable training samples $(x_i, y_i) \in \R^d \times \{-1, 1\}$, aims to find a decision function $\fhat(x) = \ip{\thetahat}{x}$ by solving the convex program:
\[
\hat{\theta}  = \argmin_{\theta \in \R^d}\norm*{\theta}_2 ~~~\text{s.t. } y_i \theta^\top x_i \geq 1 \text{ for } i = 1, \dots, n.
\]
Equivalently, this solution maximizes the smallest distance from the hyperplane $\{x \colon \fhat(x)=0\}$ to any training example. It was first shown in~\cite{Muthukumar2021} that, in overparameterized settings, the SVM decision function can coincide exactly with the \emph{minimum norm interpolant} of the training data (i.e., that the learned $\hat{\theta} = \argmin_{\theta \in \R^d} \norm*{\theta}_2$ results in $\fhat(x_i) = y_i$ for all $i=1,\dots, n$). This phenomenon, termed \emph{support vector proliferation (SVP)}, has several important implications. First, since it is known that these two solutions can be obtained via gradient descent on the squared and logistic loss respectively, this gives theoretical backing to the empirical observation that different loss functions can lead to similar behavior in overparameterized settings \cite{rifkin2002everything, Hui2021}. This equivalence has also been exploited in several works (e.g., \cite{Muthukumar2021, Wang2021,wang2021multiclass}) to study the \textit{generalization} performance of the SVM by understanding the minimum-norm interpolant of the discrete labels, which can be more easily analyzed via a convenient closed-form solution. This approach can in fact be leveraged to prove classification consistency results for the SVM in settings which classical generalization bounds do not cover, as discussed in~\cite{Muthukumar2021, Hsu2021}. 

Despite considerable subsequent progress in our understanding of SVP~\cite{Hsu2021,ardeshir2021support,Wang2021,wang2021multiclass,Cao2021}, current characterizations crucially depend on strong distributional assumptions on the data or features. Specifically, conditions under which SVP occurs have only been given in settings where the features have Gaussian or independent sub-Gaussian entries.
Thus, they are essentially only applicable when a linear model is learned directly on input data which is itself high-dimensional. By contrast, in the more general setting of \emph{kernel SVM}, the training data $x \in \R^d$ is often first transformed (either explicitly or implicitly, using the kernel trick) via some feature map $\phi \colon \R^d \to \R^D$, and the SVM solution described above learns a function of the form $\fhat(x) = \ip{\hat{\theta}}{\phi(x)}$. For essentially any choice of kernel or non-linear feature map, the entries of $\phi(x_i)$ are \textit{not} independent. Indeed, in many common learning settings, $D \gg d$; i.e., training data of \textit{low or fixed} dimension (or effectively low dimension, e.g., from a low-dimensional manifold) are mapped to some higher- or even infinite-dimensional features which are intricately dependent on each other. As a case study, SVP was first empirically observed in the experiments of \cite{Muthukumar2021}, which used 1-dimensional data and a Fourier feature map whose entries have considerable structure. Existing theoretical results do not apply in such settings.

\subsection{Contributions}
In this paper, we study SVP for kernel methods (i.e., for general classes of feature maps) and provide precise non-asymptotic conditions under which it occurs. We consider a generative label model for classification of the form $\eta^*(x) = 2\P(y=1 \given x) - 1$, where $\eta^*$ is a function in an arbitrary RKHS and represents a natural affine transformation of the classification conditional probability. Our results are briefly summarized below.

\paragraph{SVP for general bounded features} Under the label model described above, we provide sufficient conditions on the spectrum of the kernel integral operator and training examples under which the minimum RKHS-norm interpolant of the training data is identical to the decision function learned by the kernel SVM.  In Section \ref{sec:assumps} we formally provide the main assumptions under which our result holds. At a high level, we only require that the leading kernel eigenfunctions comprise a \emph{bounded orthonormal system (BOS)}, and we leverage recent conditions from the literature for harmless interpolation of noise to ensure that the minimum-norm interpolant is close to an \emph{attenuated version}\footnote{Specifically, we define an operator $\scrSbr$ and require that $\etahat \approx \scrSbr \eta^*$ and that $\scrSbr$ is an $L_\infty$ contraction operator.} of $\eta^*$. 
Then, in Section \ref{sec:bosresults}, we characterize the settings in which SVP occurs under these assumptions. Compared to previous work, our requirements on the feature map and kernel spectrum are considerably weaker and allow us to show that SVP can occur in general RKHS settings without independence or sub-Gaussianity of features. Our proof techniques for these results depart significantly from prior literature that assumes worst-case labels~\cite{Muthukumar2021,Hsu2021,Wang2021,wang2021multiclass,Cao2021,ardeshir2021support}, and make novel conceptual connections between SVP and tools for sharp error analysis of interpolating methods.

\paragraph{Refined bounds for independent, sub-Gaussian features} In Section \ref{sec:subgresults}, we specialize and sharpen our results to the case where the features are independent and sub-Gaussian. Notably, unlike prior works, our use of a probabilistic generative label model and weaker spectral assumptions allow us to show that SVP can occur even when the minimum-norm interpolant generalizes well for the regression task. To the best of our knowledge, this is the first instance in which SVP has been established for such cases.

\paragraph{Generalization of the kernel SVM in new regimes} As in previous work on SVP, our results can be combined with analysis of the minimum-norm interpolant to analyze the generalization performance of the SVM. In Section \ref{sec:genresults}, we show that our analysis can be used to prove asymptotic classification consistency of the kernel SVM in new settings which are too (effectively) high-dimensional for classical generalization bounds to be informative. Compared to prior works on SVP which consider linear models on the input data, our results demonstrate that the kernel SVM can generalize well for a range of interesting kernels that correspond to more structured feature mappings.  

\subsection{Related Work}
We organize our discussion of related work under two verticals.
\paragraph{Support vector proliferation} The empirical observation that the squared and cross-entropy losses result in comparable classification performance in modern deep learning and kernel models \cite{rifkin2002everything, Hui2021, kline2005revisiting, golik2013cross} has led to increased theoretical interest in the phenomenon of support vector proliferation~\cite{Muthukumar2021, Hsu2021, ardeshir2021support, Cao2021, Wang2021, wang2021multiclass}, and more generally equivalences between the solutions obtained by various loss functions~\cite{lai2023general}. In particular, since the SVM and minimum-norm interpolant can be obtained by gradient descent on the cross-entropy (or similar exponential-tailed) \cite{soudry2018implicit, ji2019implicit, ji2020gradient, gunasekar2018implicit, ji2020directional} and squared losses, SVP provides an avenue for understanding the types of overparameterized settings in which this loss function equivalence can occur. 

Prior works provide conditions (typically in terms of the data covariance and/or the signal-to-noise ratio) under which the phenomenon of SVP occurs with high probability, for an arbitrary (i.e.~worst-case) choice of labels. However, these results all require strong assumptions on the data distribution and dimensionality. SVP was first shown under Gaussian design in \cite{Muthukumar2021}, and sufficient conditions were given in \cite{Hsu2021} for independent sub-Gaussian and Haar design, along with a converse result for the Gaussian case. Furthermore, \cite{ardeshir2021support} provided a converse result for SVP for anisotropic sub-Gaussian data and proved that a data dimension $d = \Theta(n \log{n})$ is needed for SVP in the isotropic case. We provide a more detailed contextualization between our upper bounds and the converse results of~\cite{ardeshir2021support} in Section~\ref{sec:bilevel}. The work of \cite{Wang2021, Cao2021,wang2021multiclass} develops similar results under Gaussian and sub-Gaussian mixture models for the binary, multiclass, and one-versus-all SVM settings. In our work, we focus on a particular generative model for binary labels and do not explicitly make sub-Gaussian tail assumptions on the covariate distribution in our main result. A second key difference with the existing literature is that, because of our probabilistic generative model on the labels, we obtain results under weaker random matrix conditions (and hence conditions on the spectrum of the kernel integral operator). 
These spectral conditions are similar to those used in analyses of minimum-norm interpolants (e.g., \cite{Bartlett2020, Muthukumar2020a, Tsigler2020, Mei2022, McRae2022}) and highlight a novel connection between SVP and harmless interpolation of noise. 

\paragraph{Generalization analysis of classifiers in overparameterized settings} 
Classical bounds on the generalization error of the SVM rely on sample compression (e.g., \cite{graepel2005pac, germain2011pac}) or Rademacher complexity (e.g., \cite{bartlett1999generalization, mcallester2003simplified}), which typically yield upper bounds on the test error in terms of the fraction of training samples which are support vectors or the margin achieved by the learned hyperplane on the training set, respectively. Moreover, the fraction of support vectors in the SVM was shown to be $o(1)$ in a range of settings where the number of training points is proportional to or much greater than the data dimension \cite{dietrich1999statistical, buhot2001robust, malzahn2005statistical}, implying good generalization. When the data or feature dimension is much larger than the number of training points, however, these results no longer apply. In fact, it has recently been shown that the SVM decision function can interpolate the training labels in highly overparameterized settings (i.e., that every point becomes a support vector) and that, as a result, traditional bounds become vacuous, in the sense that they are no better than a constant with high probability \cite{Muthukumar2021, Hsu2021}.

Recent analyses of benign overfitting in linear and kernel regression models have provided an alternative path to studying the behavior and generalization of overparameterized classifiers. These results have been exploited\footnote{We note that there is a wide body of work on asymptotic error analysis and for general $\ell_p$-regularizers, and accordingly restrict our discussion to the most related setting, i.e.~non-asymptotic error analyses of the SVM with $\ell_2$-regularization, here.} to provide error bounds in a signed Gaussian model \cite{Muthukumar2021}, high-dimensional linear discriminant analysis \cite{Chatterji2021}, and Gaussian and sub-Gaussian mixture models \cite{Wang2021, wang2021multiclass, Cao2021}. Most of these works (with the exception of~\cite{Chatterji2021}, which also requires very high-dimensional data) explicitly use SVP to link the performance of a classifier to that of an interpolating regression task which is easier to analyze. Hence, they only apply in the restrictive settings in which SVP has been shown to hold. 

By contrast, our analysis most closely resembles that of \cite{McRae2022}, who give a bound for the classification loss of the label interpolant obtained by minimizing the squared loss in a general RKHS setting. The analysis tools provided in~\cite{McRae2022} in turn have some precedent in the approach of~\cite{Hsu2014,Zhang2005} who considered explicitly regularized kernel regression under random design and minimal assumptions on the features. Notably, our results for bounded orthonormal systems are also stated in terms of spectral conditions on the kernel integral operator, rather than particular properties of the data distribution. The resultant error expressions are not as sharp as those obtained via Gaussian or sub-Gaussian assumptions, but they apply for more general choices of kernel and feature map. 

\section{Background and setup}\label{sec:setup}
Our main results build on the analytic framework for kernel regression and classification in \cite{McRae2022}. We summarize the main components of this framework here. Specifically, we introduce important concepts relating to the spectral properties of reproducing kernel Hilbert spaces and set up the supervised learning framework under which we will study the equivalence of the support vector machine (SVM) and minimum-norm interpolation (MNI) classification problems. 

\subsection{Notation}
For a probability space $(X, \mu)$ and $f \colon X \to \R$, we define the norms $\norm{f}_{L_p} \coloneqq  \parens*{ \E_{x \sim \mu} \abs{f(x)}^p }^{1/p}$ and $\norm{f}_{L_\infty} \coloneqq \text{ess sup}_x |f(x)|$. For $u \in \R^n$, $\norm{u}_{\ell_2}$ or $\norm{u}_{2}$ is the Euclidean norm. More generally, for a function $f$ in a Hilbert space $\scrH$, $\norm{f}_\scrH$ is the Hilbert norm. We define $L_2(X, \mu)$ to be the Hilbert space of square-integrable functions $f \colon X \to \R$ with inner product $\ip{f}{g}_{L_2} = \E_{x \sim \mu} f(x)g(x)$.
We denote the $\ell_2$ and $\scrH$ inner products by $\ip{\cdot}{\cdot}_{\ell_2}$ and $\ip{\cdot}{\cdot}_{\scrH}$, respectively. 
We will write the operator norm of an operator $T\colon \scrH_1 \to \scrH_2$ (for distinct Hilbert spaces $\scrH_1$, $\scrH_2$) with respect to the $\scrH_1$ and $\scrH_2$ norms as $\norm{T}_{\scrH_1 \to \scrH_2}  := \sup_{\norm{f}_{\scrH_1} \leq 1} \norm{Tf}_{\scrH_2}$, or, when $T: \scrH_1 \to \scrH_1$, simply as $\norm{T}_{\scrH_1}$. For a subspace $A$ of a Hilbert space $\scrH$, we denote the orthogonal projection operator onto $A$ as $\scrP_A$, and for an operator $T$ on $\scrH$, we denote $T_A := T\scrP_A$. Let $\diag^\perp(Z)$ denote the projection of a matrix $Z$ onto the set of matrices with zero diagonal. The identity operator is denoted by $\scrI$ (or $I$ when it is an ordinary matrix). The notation $e_i$ denotes the $i^\text{th}$ canonical basis vector in $\R^n$. Finally, the notation $a \lesssim b$ means $a \leq Cb$ for some universal constant $C>0$, $a \asymp b$ means $a \lesssim b$ and $b \lesssim a$, and $a \ll b$ means $a < cb$ for some (sufficiently small) constant $c>0$. 

\subsection{Reproducing kernel Hilbert spaces}
Let $\scrH$ be a reproducing kernel Hilbert space (RKHS) of real-valued functions on a probability space $(X, \mu)$. We denote the reproducing kernel of $\scrH$ as $k:X \times X \to \R$. By definition, $k$ satisfies the property that, for any $x \in X$ and $f \in \scrH$, $f(x) = \ip{k_x}{f}_\scrH$, and $k_x := k(x, \cdot)$ is the unique element in $\scrH$ with this property.

Our results rely crucially on the \textit{spectral} properties of the kernel used during the learning process. To this end, we first assume that samples of $X$ are drawn according to the probability measure $\mu$. Then, the kernel integral operator $\scrT: L_2(X, \mu) \to L_2(X, \mu)$ is given by
\[
\scrT f(\cdot) = \int_X k(\cdot, y)f(y)d\mu(y).
\]
Assuming $k$ is continuous, Mercer's theorem yields
\[
k(x, y) = \sum_{\ell=1}^{\infty} \lambda_\ell v_{\ell}(x) v_{\ell}(y),
\]
where $\{\lambda_\ell\}_{\ell=1}^\infty$ and $\{v_\ell\}_{\ell=1}^\infty$ are the eigenvalues and eigenfunctions of $\scrT$, respectively (a discussion of the relatively weak conditions under which such a decomposition holds can be found in \cite{Steinwart2012}). Furthermore, the eigenfunctions $\{v_\ell\}_{\ell=1}^\infty$ form an orthonormal basis for $L_2(X, \mu)$. Without loss of generality, we will assume the eigenvalues are sorted so that $\lambda_1 \geq \lambda_2 \geq \dots \geq 0$. For some index $p$, which we can select in our analysis, let $G := \text{span}(v_1, \dots, v_p)$ be the subspace of $L_2(X, \mu)$ corresponding to the largest $p$ eigenvalues, and let $\scrT_G$ be the restriction of the kernel integral operator to $G$.

\subsection{Classification setting}
Assume we observe a training set $\{x_i, y_i \}_{i=1}^n$, consisting of i.i.d. copies of $(x, y)$, where $x \sim \mu$ and $y \in \{-1, 1\}$. We assume the relationship between $x$ and $y$ is governed by the regression function
\begin{equation}\label{eq:classconditionalgenerative}
\eta^*(x) := \E[y \given x] = 2 \mathbf{P}(y=1 \given x) - 1
\end{equation}
for some $\eta^* \in G$.\footnote{The assumption that $\eta^*$ lies in the subspace $G$ is for convenience and can be relaxed at the expense of an extra term in our error expressions.} In other words, given $x$, $y$ equals $1$ with probability $\frac{\eta^*(x) + 1}{2}$ and equals $-1$ otherwise. Equation~\eqref{eq:classconditionalgenerative} constitutes a natural generative model for classification through performing regression on (an affine transformation of) the classification conditional probability; similar models appear in the literature on plug-in classifiers (e.g., \cite{Audibert2007}), albeit with slightly different regularity assumptions on $\eta^*$.
Accordingly, we consider classification procedures which yield an estimator $\etahat$ of $\eta^*$ and predict the label of a new point $x \in X$ via $\yhat = \sign(\etahat(x))$. It is often convenient to consider our binary observations as of the form $y_i = \eta^*(x_i) + \xi_i$, where $\xi_i := y_i - \eta^*(x_i)$ is interpreted as observation noise and satisfies $\E[\xi \given x] = 0$. We collect the observations $y_i$ and the noise $\xi_i$ into vectors $y$ and $\xi$, respectively; it will be clear from context whether these symbols refer to the scalar random variables or the vector of observed values.

Given a data set of the form described above, we let $\scrA : \scrH \to \R^n$ denote the sampling operator on $\scrH$, i.e., the operator such that $(\scrA f)_i = f(x_i)$. The adjoint $\scrA^*\colon \R^n \to \scrH$ of the sampling operator with respect to the $\scrH$ and $\ell_2$ inner products is characterized by $\scrA^*\beta = \sum_{i=1}^n \beta_i k_{x_i}$, for any $\beta \in \R^n$.  Finally, we denote the Gram matrix corresponding to the data set as $K := \scrA \scrA^* \in \R^{n\times n}$. The entries of this matrix are given by $K_{ij} = k(x_i, x_j)$. 

For our analysis, we also split the sampling operator into its components which act on $G$ and $G^\perp$, denoted $\scrA_G \coloneqq \scrC$ and  $\scrR \coloneqq \scrA_{G^\perp}$, respectively. We let $\scrC^* = \scrT_G^{-1}\scrA_G^*$ be the adjoint operator of $\scrA_G$ corresponding to the inner product $\ip{\cdot}{ \cdot}_{L_2}$. Finally, the \textit{excess classification risk} of an estimator $\hat{\eta}$ is defined as
\[
\scrE(\hat{\eta}) \coloneqq \P(\yhat \neq y) - \P(y \neq \sign (\eta^*)).
\]


\subsection{Ridgeless kernel regression}

The ridgeless kernel regression, or \textbf{minimum-norm interpolation (MNI)}, procedure solves the problem:
\begin{equation}
\begin{aligned}\label{mni}
    \etahat  &= \argmin_{\eta \in \scrH}~\norm*{\eta}_\scrH\\
    &\text{s.t. } \eta(x_i) = y_i \text{ for all } i = 1, \dots, n.
\end{aligned}
\end{equation}
This problem admits a closed form solution $\etahat(x) = \sum_{i=1}^n \betahat_i k(x, x_i)$, where $\betahat = K^{-1} y$. The properties of such estimators have been studied extensively in several recent works (e.g., \cite{Liang2020, Bartlett2020, Hastie2022, Mei2022, McRae2022}). At a high level, most of these works decompose the learned estimate as 
\begin{equation}\label{eq:mni_decomp}
\begin{aligned}
\etahat = \scrA^*(\scrA \scrA^*)^{-1} y &= \underbrace{\scrA^*(\scrA \scrA^*)^{-1} \scrA \eta^*}_{\eqqcolon \etahat_0} + \underbrace{\scrA^*(\scrA \scrA^*)^{-1} \xi}_{\eqqcolon \epsilon}\\
\end{aligned}
\end{equation}
and then, under various assumptions, characterize when (\emph{i}) the first term approximates $\eta^*$ and (\emph{ii}) the second term is negligible. Most central to our analysis, \cite{McRae2022} show that, under certain spectral conditions on the kernel, the estimate satisfies $\etahat \approx \scrSbr \eta^*$, where $\scrSbr = n \scrT_G \parens*{\alphabr \scrI_G + n\scrT_G}^{-1}$ is the \textit{idealized survival operator},\footnote{We will formally define $\alphabr$ in Assumption \ref{assumps}.} which represents how well the true function is recovered by MNI. We will see below that SVP can occur in a variety of settings where $\scrSbr$ uniformly attenuates $\eta^*$ in $L_\infty$ norm.

\subsection{Kernel SVM}
The \textbf{hard-margin kernel support vector machine (SVM)} procedure aims to find a linear classifier that maximizes the training data margin in the space $\scrH$. Particularly, the primal and dual forms of this problem can be written as
\begin{equation}
\begin{aligned}[c]\label{svm}
    \etahat  &= \argmin_{\eta \in \scrH}   \norm*{\eta}_\scrH\\
    &\text{s.t. } y_i \eta(x_i) \geq 1 \text{ for } i = 1, \dots, n.\\
\end{aligned}
\qquad \overset{\makebox[0pt]{\mbox{\normalfont\tiny\sffamily}}}{\iff} \qquad
\begin{aligned}[c]
        \betahat &= \argmax_{\beta \in \mathbb{R}^n}  y^\top \beta - \frac{1}{2}\beta^T K \beta\\
        &\text{s.t. } y_i \beta_i \geq 0 \text{ for } i=1, \dots, n,
\end{aligned}
\end{equation}
with $\etahat= \sum_{i=1}^n \betahat_i k_{x_i}$. Observing the dual form, we note here that the estimators $\etahat$ obtained by the MNI and SVM learning procedures both have a representation as a linear combination (with coefficients given in the vector $\betahat$) of the input features $k_{x_i}$, for $i = 1, \dots, n$. As was originally noted in the discussion of \cite{Muthukumar2021, Hsu2021}, it is readily verified that the choice $\betahat = K^{-1}y$ (which characterizes the solution to \eqref{mni}) also maximizes the dual objective function of \eqref{svm}; however, this choice may not satisfy all of the additional constraints of the SVM. Our main results in the following section aim to characterize when the estimators $\etahat$ obtained by solving these two problems coincide \textit{exactly}. Since the \textit{support vectors} are the training points which satisfy $\etahat(x_i) = \pm 1$ (and hence for which the primal constraints are active), this equivalence implies every training point is a support vector in the learned SVM. 

\section{Main results}\label{sec:results}
In this section, we provide our main results characterizing SVP in the RKHS setting described above. After describing our main assumptions and defining the key quantities which appear in our results, in Section~\ref{sec:bosresults} we state a theorem for general bounded feature families and provide an example of a bi-level feature design which exhibits SVP. Then, in Section~\ref{sec:subgresults} we use our analysis framework to strengthen our results in the case of independent, sub-Gaussian features, hence showing that SVP can occur in a strictly expanded regime than that provided by the main results of~\cite{Hsu2021} under our generative label model. Finally, we demonstrate that our result immediately implies good generalization (in the sense of statistical consistency) of the kernel SVM in new overparameterized settings. 
\subsection{Feature model and sampling conditions}\label{sec:assumps} We consider kernels for which the eigenfunctions of the kernel integral operator $v_{\ell}(x)$ satisfy one of the following two properties:
\begin{enumerate}
    \item \textbf{Bounded orthonormal system on $G$ (BOS):} The eigenfunctions of $\scrT_G$ satisfy $\sum_{\ell=1}^p v_{\ell}^2(x) \leq Cp$ with probability $1$ over $x$ for some constant $C \geq 1$.
    \item \textbf{Independent sub-gaussian (SG):} For $x\sim\mu$, the $\{v_{\ell}(x)\}_{\ell\geq 1}$ are independent, zero-mean, sub-Gaussian random variables.
\end{enumerate}

The BOS condition on $G$ is satisfied by several common feature families, such as Fourier features, certain orthonormal polynomial families, discrete orthonormal systems, etc. Notably, in many of these cases, the features are far from independent. 

Our analysis considers the setting where the sampling locations in the training set are drawn independently from the distribution $\mu$ and satisfy certain properties with high probability (which can be separately verified via concentration bounds stated in Appendix \ref{app:bounds}) The first two properties are equivalent to those in Theorem 1 of \cite{McRae2022}. 
The last condition is a property of our analysis and is satisfied for a variety of choices of features and $\eta^*$.
After listing our main assumption below, we provide a brief interpretation and discussion of when it is satisfied.
\begin{assumption}\label{assumps}
For some $0<\delta<1/2$, the training samples $x_1, \dots, x_n \simiid \mu $ satisfy all of the following with probability at least $1-\delta$:
\begin{enumerate}
    \item (Residual Gram matrix concentration) There exist constants $\alpha_L \asymp \alpha_U > 0$ for which the residual Gram matrix satisfies
    $$\alpha_L I_n \preccurlyeq \scrR \scrR^* \preccurlyeq \alpha_U I_n.$$ 

    \item (Feature covariance concentration on $G$) Let $\scrC_{\setminus i}$ be the sampling operator on $G$ for the leave-one-out data set $\{x_1, \dots, x_{i-1}, x_{i+1}, \dots, x_n\}$ with $L_2$-adjoint $\scrC_{\setminus i}^*$. Then, for all $i \in \{1, \dots, n\}$, the following holds  for some $c \leq \frac{1}{2}$:
    $$\frac{\alpha_U - \alpha_L}{\alpha_U + \alpha_L} + \frac{2}{n}\norm{\scrC_{\setminus i}^* \scrC_{\setminus i} - n \scrI_G}_{L_2} \leq c.$$
    \item ($L_\infty$ control on expected recovered signal) Let $\alphabr$ be the harmonic mean of $\alpha_L$ and $\alpha_U$. Let $\scrSbr \coloneqq n \scrT_G \parens*{\alphabr \scrI_G + n\scrT_G}^{-1}$ be the idealized survival operator and $\scrBbr := \scrI_G - \scrSbr$ be the idealized bias operator. Then
$$b \coloneqq \norm{\scrBbr \eta^*}_\infty  \lesssim 1 - \norm{ \scrSbr \eta^*}_{\infty}.$$

\end{enumerate}
\end{assumption}

The first condition of residual Gram matrix concentration is needed to ensure harmless interpolation of noise in MNI and has been shown to hold with high probability in many situations, such as for sub-Gaussian features \cite{Bartlett2020, Tsigler2020} and in the case where the leading eigenspace satisfies a hypercontractivity condition \cite{Mei2022}. We provide relevant concentration bounds in Appendix \ref{app:bounds} which can be used to verify this condition in both the BOS and SG cases.
Note that unlike previous results on SVP which are label-agnostic and depend crucially on concentration of the \textit{entire} Gram matrix to a multiple of identity (including \cite{Muthukumar2021, Hsu2021}), we only require this for the residual component.

The second condition requires covariance concentration on $G$. The full-sample version of this assumption is standard in regression. Our leave-one-out version holds with high probability for BOS features when $n \gtrsim p \log{n}$ (see, e.g., Lemma \ref{lem:C_conc_BOS}) and for SG features when $n \gtrsim p$ (Lemma \ref{lem:C_conc_ind}).

The third condition requires that the idealized survival operator $\scrSbr$ shrinks $\eta^*$ (approximately) \emph{uniformly}.
For example, it is satisfied in any of the following cases:
\begin{itemize}
    \item \textbf{Identical leading eigenvalues:} The leading $p$ eigenvalues of $\scrT$ are identically equal to some $\lambda >0$. Then, $\scrSbr = \frac{n\lambda}{\alphabr + n\lambda} \scrI_G$ and $\scrBbr = \frac{\alphabr}{\alphabr + n\lambda} \scrI_G$, and the fact that $\norm{\eta^*}_\infty \leq 1$ implies that the assumption holds (with the constant in the $\lesssim$ being exactly 1). We explore this scenario in Section \ref{sec:bilevel}.
    \item \textbf{1-sparse target function:} We obtain the same result without identical leading eigenvalues if $\eta^*$ is in the span of a single leading eigenvector, i.e., $\eta^* = \gamma v_{\ell}$ for some $1 \leq \ell \leq p$ and constant $\gamma$.
    \item \textbf{Extreme shrinkage:} In general, if $\alphabr \gg n\lambda_1$, then $\scrSbr \eta^*$ will shrink towards $0$ while $\scrBbr \eta^*$ approaches $\eta^*$ (and sufficient shrinkage coupled with bounded $v_\ell$ yields the desired $L_\infty$ control). Note the similarity to the SVP condition $\sum_{i=1}^n \lambda_i \gg n \lambda_1$ from \cite{Hsu2021}.
\end{itemize}

\subsection{General conditions for SVM Equivalence}\label{sec:bosresults}
Our main result provides sufficient conditions under Assumption \ref{assumps} for every training point to be a support vector, and hence for MNI and SVM to yield identical solutions. The proof of this result is given in Section \ref{sec:proofs}.

\begin{theorem}[SVP in Bounded Orthonormal Systems]\label{thm:bos}
Consider the classification setting described in Section \ref{sec:setup} with BOS features. Suppose Assumption \ref{assumps} is satisfied for $\eta^* \in G$. Then, with probability at least $1-\delta - \frac{2}{n}$, the solutions of the MNI procedure (\ref{mni}) and the SVM procedure (\ref{svm}) coincide if

\begin{equation}
\begin{aligned}
        \frac{1}{b}\sqrt{\frac{p\log{n}}{n}} &+ \frac{\alpha_U - \alpha_L}{\alpha_U + \alpha_L} \parens*{\frac{\sqrt{\log{n}}}{b} + \sqrt{n}} + c^2\sqrt{p} \ll 1.
\end{aligned}
\end{equation}
\end{theorem}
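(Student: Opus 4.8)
The plan is to reduce the claimed equivalence to a pointwise ``no‑overshoot'' condition on \emph{leave‑one‑out} minimum‑norm interpolants, and then verify that condition using the decomposition~\eqref{eq:mni_decomp}, the BOS property, and Assumption~\ref{assumps}.

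\textbf{Step 1: reduction to a leave-one-out condition.} As noted after~\eqref{svm}, $\betahat = K^{-1}y$ is the unique unconstrained maximizer of the strictly concave SVM dual objective, so the SVM and MNI solutions coincide if and only if $\betahat$ is dual‑feasible, i.e.\ $y_i\betahat_i \ge 0$ for every $i$. Fix $i$, reorder so that $i$ is last, and apply the Schur‑complement formula to $K = \bigl(\begin{smallmatrix} K_{\setminus i} & k_i\\ k_i^\top & k(x_i,x_i)\end{smallmatrix}\bigr)$, where $k_i = (k(x_i,x_j))_{j\ne i}$ and $K_{\setminus i}$ is the Gram matrix of the remaining $n-1$ points. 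This gives $\betahat_i = s_i^{-1}\bigl(y_i - k_i^\top K_{\setminus i}^{-1} y_{\setminus i}\bigr)$ with Schur complement $s_i > 0$ (positivity follows from $K \succeq \scrR\scrR^* \succeq \alpha_L I \succ 0$), and $k_i^\top K_{\setminus i}^{-1}y_{\setminus i}$ is precisely $\etahat_{\setminus i}(x_i)$, the MNI predictor trained on the leave‑one‑out sample and evaluated at $x_i$. Since $y_i^2 = 1$, feasibility at index $i$ is equivalent to $y_i\,\etahat_{\setminus i}(x_i) \le 1$, so it suffices to establish this for each $i$ with failure probability $O(n^{-2})$ and union bound over the $n$ indices.

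\textbf{Step 2: reduction to an error estimate.} Condition on the leave‑one‑out sample, so that $(x_i,y_i)$ is fresh. Writing $\etahat_{\setminus i} = \etahat_{0,\setminus i} + \epsilon_{\setminus i}$ (signal plus noise, as in~\eqref{eq:mni_decomp}) for the leave‑one‑out problem and using $|y_i| = 1$,
\[
y_i\,\etahat_{\setminus i}(x_i) \;\le\; \norm{\scrSbr\eta^*}_\infty \;+\; \bigl|\etahat_{0,\setminus i}(x_i) - \scrSbr\eta^*(x_i)\bigr| \;+\; \bigl|\epsilon_{\setminus i}(x_i)\bigr| .
\]
Because $\norm{\eta^*}_\infty \le 1$ and $\scrSbr$ is an $L_\infty$ contraction, $\norm{\scrSbr\eta^*}_\infty < 1$, and Assumption~\ref{assumps}(3) gives $1 - \norm{\scrSbr\eta^*}_\infty \gtrsim b$. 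Hence it is enough to show that the signal error $|\etahat_{0,\setminus i}(x_i)-\scrSbr\eta^*(x_i)|$ and the noise error $|\epsilon_{\setminus i}(x_i)|$ together are $\ll b$ (with a sufficiently small absolute constant); dividing that requirement by $b$ is exactly the hypothesis of the theorem.

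\textbf{Step 3: controlling the two errors.} Conditionally on the sample, $\epsilon_{\setminus i}(x_i) = k_i^\top K_{\setminus i}^{-1}\xi_{\setminus i}$ is a linear functional of the independent, mean‑zero, bounded noise $\xi_{\setminus i}$, so Hoeffding's inequality yields $|\epsilon_{\setminus i}(x_i)| \lesssim \norm{K_{\setminus i}^{-1}k_i}_2\sqrt{\log n}$ with probability $\ge 1-2n^{-2}$; Assumption~\ref{assumps}(1) (which passes to principal submatrices), the BOS bound on $k(x_i,x_i)$, and a short pseudoinverse computation bound $\norm{K_{\setminus i}^{-1}k_i}_2$, producing the $\tfrac1b\sqrt{p\log n/n}$ and $\tfrac{\alpha_U-\alpha_L}{\alpha_U+\alpha_L}\tfrac{\sqrt{\log n}}{b}$ contributions. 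For the signal error, the BOS property gives the \emph{deterministic} pointwise bound $|f(x_i)| \le \sqrt{Cp}\,\norm{f}_{L_2}$ for $f\in G$ (the $G^\perp$‑component of $\etahat_{0,\setminus i}$ contributes only a lower‑order term, controlled via $\norm{\scrP_{G^\perp}k_{x_i}}_\scrH^2 = \sum_{\ell>p}\lambda_\ell v_\ell(x_i)^2 \asymp \alphabr$), so it remains to bound $\norm{\etahat_{0,\setminus i} - \scrSbr\eta^*}_{L_2}$. Adapting the analysis behind the approximation $\etahat_0 \approx \scrSbr\eta^*$ of~\cite{McRae2022} to the leave‑one‑out problem and expanding the signal term around the idealized operator, the first‑order discrepancy is driven by the mismatch between $\scrR_{\setminus i}\scrR_{\setminus i}^*$ and $\alphabr\scrI$, which is $O\bigl(\tfrac{\alpha_U-\alpha_L}{\alpha_U+\alpha_L}\bigr)$ by Assumption~\ref{assumps}(1) and is amplified by $\norm{\scrC_{\setminus i}}_{L_2\to\ell_2} \asymp \sqrt n$, while the second‑order remainder is $O(c^2)$ by Assumption~\ref{assumps}(2); crucially, both corrections act on the bias $\scrBbr\eta^*$ rather than on $\eta^*$ itself, producing the $\tfrac{\alpha_U-\alpha_L}{\alpha_U+\alpha_L}\,b\sqrt n$ and $c^2 b\sqrt p$ contributions. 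Summing these bounds, dividing by $b$, and union‑bounding over $i$ (which costs the $\log n$ factors and the extra $2/n$ in the success probability, on top of the probability‑$(1-\delta)$ event of Assumption~\ref{assumps}) yields the stated inequality.

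\textbf{Main obstacle.} The delicate part is the signal‑error bound in Step~3: one has to transfer the interpolation estimates of~\cite{McRae2022} to each leave‑one‑out problem and, in doing so, track precisely which quantities carry the $\sqrt n$ amplification — showing it is the anisotropy ratio $\tfrac{\alpha_U-\alpha_L}{\alpha_U+\alpha_L}$ times the bias $b$, rather than the raw magnitude of $\eta^*$ or of $\alpha$, that appears (otherwise the resulting condition would be far too strong). A secondary technical point is checking that deleting one sample perturbs the residual Gram matrix, the covariance on $G$, and the idealized operator $\scrSbr$ only negligibly, so that Assumption~\ref{assumps} — stated mainly for the full sample — may be invoked uniformly across all $n$ leave‑one‑out problems.
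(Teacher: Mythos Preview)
Your reduction (Step~1) and noise bound match the paper exactly, and the overall architecture---leave-one-out, signal/noise split, compare to $\scrSbr\eta^*$---is right. The gap is in your signal-error analysis, specifically the $G^\perp$ component. You dismiss $\scrP_{G^\perp}\etahat_{\setminus i,0}(x_i)$ as ``lower-order'' via $\norm{\scrP_{G^\perp}k_{x_i}}_\scrH^2 \asymp \alphabr$, but Cauchy--Schwarz in $\scrH$ with that bound gives $\norm{k_{x_i}^R}_\scrH\,\norm{\scrR_{\setminus i}^*K_{\setminus i}^{-1}\scrA_{\setminus i}\eta^*}_\scrH \lesssim \sqrt{\alphabr}\cdot\sqrt{\alpha_U}\cdot b\sqrt{n}/\alpha_L \asymp b\sqrt{n}$, which is far too large (no small factor $\tfrac{\alpha_U-\alpha_L}{\alpha_U+\alpha_L}$). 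The paper instead writes $\scrP_{G^\perp}\etahat_{\setminus i,0}(x_i) = \ip{K_{\setminus i}^{-1}\scrA_{\setminus i}\eta^*}{\scrR_{\setminus i}k_{x_i}^R}_{\ell_2}$ and observes that $\scrR_{\setminus i}k_{x_i}^R$ is precisely the $i$th \emph{off-diagonal} column of $\scrR\scrR^*$, so $\norm{\scrR_{\setminus i}k_{x_i}^R}_2 = \norm{\diag^\perp(\scrR\scrR^*)e_i}_2 \le \alpha_U-\alpha_L$; combined with $\norm{K_{\setminus i}^{-1}\scrA_{\setminus i}\eta^*}_2 \lesssim b\sqrt{n}/\alpha_L$, this is what produces the $\tfrac{\alpha_U-\alpha_L}{\alpha_U+\alpha_L}\sqrt{n}$ term in the theorem. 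In other words, the $\sqrt{n}$ term comes entirely from $G^\perp$, not from a ``$\norm{\scrC}\asymp\sqrt{n}$ amplification'' inside the $G$-component as you claim.

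Relatedly, your plan to bound the $G$-component by $\sqrt{p}\,\norm{\scrP_G\etahat_{\setminus i,0}-\scrSbr\eta^*}_{L_2}$ alone only yields $c\sqrt{p}$ after dividing by $b$ (via Lemma~\ref{lem:bias_approx_error}), not the sharper $c^2\sqrt{p}$ in the statement. The paper gets the extra factor of $c$ by expanding $\scrB-\scrBbr = \scrBbr(\scrMbr-\scrM)\scrBbr + \scrBbr(\scrMbr-\scrM)(\scrB-\scrBbr)$ and handling the first-order piece \emph{pointwise}: the $\scrC^*\scrC-n\scrI_G$ part is a centered i.i.d.\ sum at the fresh point $x_i$ and is controlled by Bernstein (giving $b\sqrt{p\log n/n}$, absorbed into term~$A$), while the $\scrR\scrR^*$-mismatch part uses the crude $L_\infty\le\sqrt{p}\,L_2$ bound to give $b\sqrt{p}\tfrac{\alpha_U-\alpha_L}{\alpha_U+\alpha_L}$ (dominated by the $G^\perp$ term since $p<n$); only the second-order remainder picks up the $c^2\sqrt{p}$. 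So the refinement you need is: (i) treat $G^\perp$ via the off-diagonal of $\scrR\scrR^*$, not via $\norm{k_{x_i}^R}_\scrH$, and (ii) split the first-order $G$-error and apply Bernstein pointwise rather than passing straight to $L_2$.
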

Our result depends in a nuanced manner on the idealized bias term $b$, the number of samples $n$, the dimension $p$ of $G$, the concentration of our residual Gram matrix $\scrR \scrR^*$, and (through the constant $c$) the sample covariance concentration on $G$.
This result shows that SVP can occur more generally in kernel settings without sub-Gaussian or independent features (including the Fourier feature setting in \cite{Muthukumar2021} where this phenomenon was originally observed).
Note that, unlike the results of \cite{Muthukumar2021, Hsu2021}, our analysis assumes a particular model for the labels and does \textit{not} allow for adversarial label selection. However, our results apply under much weaker assumptions on the features, as we discuss in the next section.

\subsubsection{Implications for the bi-level ensemble}\label{sec:bilevel}
As a concrete example, we now introduce the overparameterized \textit{bi-level ensemble} as defined in \cite{Muthukumar2021, McRae2022}. This example is useful for understanding the types of settings in which SVP can occur and is parameterized by a tuple $(n, \beta, q, r)$, where $\beta > 1$, $0 < r < 1$, and $q \leq \beta - r$. Here, $d = n^\beta$ is the number of features used for training, the first $p = n^r$ features correspond to larger eigenvalues, and 
\begin{equation}
    \label{eq:bilevel}
    \lambda_{\ell} = 
    \begin{cases}
        1, & \text{for } 1 \leq \ell \leq p \\
    n^{-\beta+r+q}, & \text{for } p < \ell \leq d.    \end{cases}
\end{equation}
Note that $q \leq \beta - r$ (with equality representing the isotropic case) is required for correct eigenvalue ordering and $r<1$ ensures $p<n$. Here, $q$ can be interpreted as controlling the ``effective overparameterization'', i.e., the separation between the leading and residual eigenvalues. This type of featurization was inspired by spiked covariance models \cite{Muthukumar2021} and has the useful property that it permits harmless interpolation of noise in linear regression throughout its permissible range of parameters.
Given the close connection of our analysis to that of harmless interpolation of the MNI, this ensemble is a natural setting in which to explore the phenomenon of SVP. Below we specialize the result of Theorem \ref{thm:bos} to the case of bounded features under the bi-level ensemble. The proof is provided in Appendix \ref{app:bilevel_proof}.

\begin{corollary}[SVP in BOS: bi-level ensemble]\label{cor:bos}
Consider the bi-level ensemble with parameters $(n, \beta, q, r)$ and bounded features $\{v_{\ell}(x)\}_{\ell=1}^d$. 
Suppose $\eta^* \in G$ in the classification setting of Section \ref{sec:setup} and $\sup_{x} \left|\sum_{\ell >p}(|v_{\ell}(x)|^2 - 1)\right| \lesssim n^{\frac{\beta}{2}+1}$. 
Then, as $n \to \infty$, SVP occurs with probability $\to 1$ if $\beta > 3$, $q > \frac{1-r}{2}$, and  $r < \frac{2}{3}$.
\end{corollary}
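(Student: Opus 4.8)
The plan is to instantiate Theorem~\ref{thm:bos} for the bi-level ensemble: first I would verify that Assumption~\ref{assumps} holds with probability tending to $1$ and record the orders (in $n$) of the quantities $\alphabr$, $\alpha_L$, $\alpha_U$, $b$, $c$, and $p$ that enter the theorem, then substitute these into the sufficient condition of Theorem~\ref{thm:bos} and check that each of its three summands is $o(1)$ precisely under $\beta>3$, $q>\tfrac{1-r}{2}$, and $r<\tfrac23$. Throughout, $\eta^*$ is fixed (so $\norm{\eta^*}_\infty\le1$, and I treat $\norm{\eta^*}_\infty\asymp 1$), $p=n^r$, $d=n^\beta$, and the residual eigenvalues equal $n^{-\beta+r+q}$, so $\bar\alpha_0:=\sum_{\ell>p}\lambda_\ell=(n^\beta-n^r)\,n^{-\beta+r+q}\asymp n^{r+q}$ (using $\beta>1>r$).

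\textbf{Verifying Assumption~\ref{assumps}.} For condition~1, the diagonal entries of $\scrR\scrR^*$ are $\sum_{\ell>p}\lambda_\ell v_\ell(x_i)^2$, which by the hypothesis $\sup_x\abs{\sum_{\ell>p}(\abs{v_\ell(x)}^2-1)}\lesssim n^{\beta/2+1}$ lie within $n^{-\beta+r+q}\cdot n^{\beta/2+1}=n^{1+r+q-\beta/2}$ of $\bar\alpha_0$, while the centered off-diagonal part is controlled at the same order by the residual-Gram concentration bound of Appendix~\ref{app:bounds} (up to logarithmic factors); hence with probability $\to 1$, $\norm{\scrR\scrR^*-\bar\alpha_0 I_n}\lesssim n^{1+r+q-\beta/2}\ll n^{r+q}$ since $\beta>2$, so $\alpha_L\asymp\alpha_U\asymp\alphabr\asymp n^{r+q}$ and $\tfrac{\alpha_U-\alpha_L}{\alpha_U+\alpha_L}\lesssim n^{1-\beta/2}$. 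For condition~2, Lemma~\ref{lem:C_conc_BOS} (valid since $n\gtrsim n^r\log n=p\log n$) together with a union bound over the $n$ leave-one-out sets gives $\tfrac2n\norm{\scrC_{\setminus i}^*\scrC_{\setminus i}-n\scrI_G}_{L_2}\lesssim\sqrt{p\log n/n}=n^{(r-1)/2}\sqrt{\log n}$ for all $i$ with probability $\to1$; combined with condition~1 this yields condition~2 with $c\lesssim n^{1-\beta/2}+n^{(r-1)/2}\sqrt{\log n}\lesssim n^{(r-1)/2}\sqrt{\log n}$ (using $\beta>3>3-r$), which is below $\tfrac12$ for $n$ large. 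For condition~3, the $p$ leading eigenvalues all equal $1$, so $\scrSbr=\tfrac{n}{\alphabr+n}\scrI_G$ and $\scrBbr=\tfrac{\alphabr}{\alphabr+n}\scrI_G$; since $\norm{\eta^*}_\infty\le1$, $b=\norm{\scrBbr\eta^*}_\infty=\tfrac{\alphabr}{\alphabr+n}\norm{\eta^*}_\infty\le\tfrac{\alphabr}{\alphabr+n}\le 1-\tfrac{n}{\alphabr+n}\norm{\eta^*}_\infty=1-\norm{\scrSbr\eta^*}_\infty$, so condition~3 holds, and moreover $b\asymp\tfrac{\alphabr}{\alphabr+n}\asymp\min\{1,n^{r+q-1}\}$.

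\textbf{Substituting into Theorem~\ref{thm:bos}.} The third summand is $c^2\sqrt p\lesssim n^{r-1}\log n\cdot n^{r/2}=n^{3r/2-1}\log n=o(1)$ iff $r<\tfrac23$. The $\sqrt n$ part of the second summand is $\tfrac{\alpha_U-\alpha_L}{\alpha_U+\alpha_L}\sqrt n\lesssim n^{1-\beta/2}\sqrt n=n^{3/2-\beta/2}=o(1)$ iff $\beta>3$. For the two remaining terms I split on $b$. If $r+q\ge1$ then $b\asymp1$, so $\tfrac1b\sqrt{p\log n/n}\asymp n^{(r-1)/2}\sqrt{\log n}=o(1)$ (as $r<1$) and $\tfrac{\alpha_U-\alpha_L}{\alpha_U+\alpha_L}\tfrac{\sqrt{\log n}}{b}\lesssim n^{1-\beta/2}\sqrt{\log n}=o(1)$ (as $\beta>2$); also $q\ge 1-r>\tfrac{1-r}{2}$ holds automatically. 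If $r+q<1$ then $b\asymp n^{r+q-1}$, so $\tfrac1b\sqrt{p\log n/n}\asymp n^{1/2-r/2-q}\sqrt{\log n}=o(1)$ iff $q>\tfrac{1-r}{2}$, while $\tfrac{\alpha_U-\alpha_L}{\alpha_U+\alpha_L}\tfrac{\sqrt{\log n}}{b}\lesssim n^{2-\beta/2-r-q}\sqrt{\log n}=o(1)$ because $q>\tfrac{1-r}{2}$ forces $4-2r-2q<3-r<\beta$. Thus under $\beta>3$, $q>\tfrac{1-r}{2}$, $r<\tfrac23$ the condition of Theorem~\ref{thm:bos} holds for $n$ large; taking $\delta=\delta_n\to0$ for the failure probability of Assumption~\ref{assumps}, the success probability is at least $1-\delta_n-\tfrac2n\to1$.

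\textbf{Main obstacle.} The delicate step is the operator-norm concentration of $\scrR\scrR^*$ about $\alphabr I_n$: the residual eigenfunctions $v_\ell$ ($\ell>p$) are not assumed individually bounded, only the aggregate $\sum_{\ell>p}(\abs{v_\ell}^2-1)$ is, so the deviation must be estimated through that aggregate (invoking the appendix lemma), and it is exactly the resulting bound $\tfrac{\alpha_U-\alpha_L}{\alpha_U+\alpha_L}\lesssim n^{1-\beta/2}$ --- which Theorem~\ref{thm:bos} multiplies by $\sqrt n$ --- that pins the requirement at $\beta>3$. Everything else is bookkeeping of exponents of $n$.
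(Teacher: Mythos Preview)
Your proposal is correct and follows essentially the same route as the paper: verify Assumption~\ref{assumps} for the bi-level ensemble via Lemmas~\ref{lem:R_conc_generic} and~\ref{lem:C_conc_BOS}, record the scalings $\alphabr\asymp n^{r+q}$, $\tfrac{\alpha_U-\alpha_L}{\alpha_U+\alpha_L}\lesssim n^{1-\beta/2}$, $b\asymp\min\{1,n^{r+q-1}\}$, and $c\lesssim n^{1-\beta/2}+n^{(r-1)/2}\sqrt{\log n}$, then split on whether $r+q\gtrless 1$ and plug into Theorem~\ref{thm:bos}. The only cosmetic differences are that you simplify $c$ upfront to $n^{(r-1)/2}\sqrt{\log n}$ (valid since $\beta>3>3-r$), whereas the paper carries both terms through to get $c^2\sqrt p\lesssim(n^{2-\beta}+n^{r-1})n^{r/2}$; and the paper is explicit that Lemma~\ref{lem:R_conc_generic} gives an expectation bound converted to high probability via Markov (absorbing an $n^\epsilon$ slack rather than a logarithm), which you gloss as ``up to logarithmic factors.''
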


This corollary demonstrates that SVP occurs in this setting as long as the degree of effective overparameterization (controlled by $\beta$ and $q$) is sufficiently high. Notably, this permits situations where the features are highly structured. In fact, we only assume the $v_\ell$ are bounded and satisfy the thin-shell type condition $\sup_{x} \left|\sum_{\ell >p}(|v_{\ell}(x)|^2 - 1)\right| \ll n^{\frac{\beta}{2}+1}$, which ensures that $k(x, x)$ does not vary too much and is necessary for concentration of the residual Gram matrix (similar requirements are given in related works which study this concentration phenomenon such as \cite{Mei2022}, \cite{Bartl2022}) This can be easily verified to hold for many structured feature families, including Fourier features, for which $|v_\ell(x)|^2 = 1$ uniformly for all $x$. We further note that, although we state asymptotic results under the bi-level model for clarity of exposition, Theorem \ref{thm:bos} (and the proof in Appendix \ref{app:bilevel_proof}) allows us to obtain finite-sample guarantees that hold with high probability in a wide range of settings with non-uniform eigenvalues.

\subsection{Improvement in the independent, sub-Gaussian case}\label{sec:subgresults}

We now consider the case in which the features $\{v_\ell(x)\}_\ell$ are independent and sub-Gaussian random variables. This is the setting which was considered in earlier works on support vector proliferation \cite{Hsu2021, Muthukumar2021}. These works generally do not assume a particular label model and state conditions in terms of certain notions of ``effective rank'' of the entire feature covariance. To compare to this line of work, we also state results in the independent, sub-Gaussian case. Our results apply to a more specific label model which implicitly imposes an additional distributional assumption compared to previous work, since $\eta^*(x)$, (which is a linear combination of $\{v_i(x)\}_{i=1}^p$) must be uniformly bounded in our model. However, our analysis framework allows us to recover many existing results while also showing that SVP can occur in a broader range of settings. The proof of this theorem is given in Section \ref{sec:subg_proofs}.

\begin{theorem}[SVP for independent, sub-Gaussian  features]\label{thm:subg}
Consider the classification setting described in Section \ref{sec:setup}. Suppose that Assumption \ref{assumps} is satisfied for $\eta^* \in G$ and that the the features $\{v_{\ell}(x)\}_{\ell>1}$ are zero-mean, independent, and sub-Gaussian. Then, with probability at least $1-\delta - ne^{-p}-\frac{3}{n}$, the solutions of the MNI procedure (\ref{mni}) and the SVM procedure (\ref{svm}) coincide if
\begin{equation}
\begin{aligned}
     \frac{1}{b}\sqrt{\frac{p\log{n}}{n}} + \frac{\alpha_U - \alpha_L}{\alpha_U + \alpha_L} \frac{\sqrt{\log{n}}}{b} &+ \sqrt{\frac{\lambda_{p+1} n \log{n}}{\alpha_L}} + c\sqrt{\log{n}} \ll 1.
\end{aligned}
\end{equation}
\end{theorem}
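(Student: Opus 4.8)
The plan is to run the leave-one-out argument that underlies the proof of Theorem~\ref{thm:bos}, but to replace the bounded-orthonormal-system concentration estimates by the sharper ones available under independence and sub-Gaussianity.

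\textbf{Step 1 (reduction to an out-of-sample event).} Since $\betahat = K^{-1}y$ is the global maximizer of the SVM dual objective in~\eqref{svm}, the solutions of~\eqref{mni} and~\eqref{svm} coincide as soon as $y_i(K^{-1}y)_i \ge 0$ for every $i$, and it suffices to establish the strict inequality. Block-inverting $K$ along the $i$-th coordinate and using that the Schur complement $K_{ii} - K_{i,\setminus i}K_{\setminus i}^{-1}K_{\setminus i,i}$ is positive (as $K\succ 0$) gives $y_i(K^{-1}y)_i > 0 \iff y_i\,\etahat_{\setminus i}(x_i) < 1$, where $\etahat_{\setminus i}(x_i) = K_{i,\setminus i}K_{\setminus i}^{-1}y_{\setminus i}$ is the MNI predictor trained on the data set with $(x_i,y_i)$ removed, evaluated at the held-out point (this is the reduction used in~\cite{Muthukumar2021,Hsu2021}). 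So it is enough to show $\max_{1\le i\le n} y_i\,\etahat_{\setminus i}(x_i) < 1$ with the stated probability.

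\textbf{Step 2 (bias/variance split and the role of $b$).} Applying the decomposition~\eqref{eq:mni_decomp} to the leave-one-out estimator, $\etahat_{\setminus i} = \etahat_{0,\setminus i} + \epsilon_{\setminus i}$, and using $y_i = \eta^*(x_i) + \xi_i$, $y_i\xi_i = 1 - y_i\eta^*(x_i)$, and $\eta^* = \scrSbr\eta^* + \scrBbr\eta^*$, one gets $1 - y_i\,\etahat_{\setminus i}(x_i) = \bigl(1 - y_i\,\scrSbr\eta^*(x_i)\bigr) - y_i\bigl(\etahat_{0,\setminus i}(x_i) - \scrSbr\eta^*(x_i)\bigr) - y_i\,\epsilon_{\setminus i}(x_i)$, and the leading term is at least $1 - \norm{\scrSbr\eta^*}_{\infty}$, which by the third part of Assumption~\ref{assumps} is $\gtrsim \max\{b,\,1-\norm{\scrSbr\eta^*}_\infty\}$. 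Hence it suffices to show that, uniformly in $i$, the \emph{bias error} $\abs{\etahat_{0,\setminus i}(x_i) - \scrSbr\eta^*(x_i)}$ and the \emph{variance error} $\abs{\epsilon_{\setminus i}(x_i)}$ together stay below $1 - \norm{\scrSbr\eta^*}_\infty$; we will see the bias error come out as $b$ times the first two summands of the claimed condition and the variance error as the last two. Throughout I would work on the high-probability event of Assumption~\ref{assumps}: its residual-Gram bound passes to every leave-one-out principal submatrix by eigenvalue interlacing, so $\alpha_L I \preceq \scrR_{\setminus i}\scrR_{\setminus i}^* \preceq \alpha_U I$, and its leave-one-out feature-covariance bound is exactly what is needed for $\scrC_{\setminus i}$.

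\textbf{Step 3 (the two error terms).} For the bias error, I would adapt the analysis of~\cite{McRae2022}: under the first two parts of Assumption~\ref{assumps}, $\etahat_{0,\setminus i}$ is $L_2$-close to $\scrSbr\eta^*$, with the error controlled by $p/n$, the residual imbalance $\tfrac{\alpha_U-\alpha_L}{\alpha_U+\alpha_L}$, and the covariance parameter $c$, and proportional to $b$. To convert this $L_2$ bound into control of the pointwise value at the held-out $x_i$, I would use that $x_i$ is independent of $x_{\setminus i}$: for a fixed $f = \sum_\ell d_\ell v_\ell$, the sub-Gaussianity of $\{v_\ell(x_i)\}$ makes $f(x_i)$ sub-Gaussian with proxy variance $\lesssim \norm{f}_{L_2}^2$, so a union bound over the $n$ indices costs only $\sqrt{\log n}$ (this is where the $\sqrt p$ factors of Theorem~\ref{thm:bos} become $\sqrt{\log n}$, yielding the $\tfrac1b\sqrt{p\log n/n}$, $\tfrac{\alpha_U-\alpha_L}{\alpha_U+\alpha_L}\tfrac{\sqrt{\log n}}{b}$, and $c\sqrt{\log n}$ terms). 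For the variance error, I would condition on all covariates, so $\epsilon_{\setminus i}(x_i) = K_{i,\setminus i}K_{\setminus i}^{-1}\xi_{\setminus i} = \sum_{j\ne i} w_j\xi_j$ is a linear combination of the independent, mean-zero, bounded residuals ($\abs{\xi_j}\le 2$); Hoeffding and a union bound give $\abs{\epsilon_{\setminus i}(x_i)} \lesssim \norm{w}_2\sqrt{\log n}$. Since $K_{\setminus i} \succeq \scrR_{\setminus i}\scrR_{\setminus i}^* \succeq \alpha_L I$ we have $\norm{w}_2^2 \le \alpha_L^{-1} K_{i,\setminus i}K_{\setminus i}^{-1}K_{\setminus i,i}$; bounding the $G^\perp$-part of this quadratic form by $\alpha_L^{-1}\sum_{j\ne i} k(x_i,x_j)^2$ and damping its $G$-part against $K_{\setminus i}\succeq\scrC_{\setminus i}\scrC_{\setminus i}^*$, then using sub-exponential concentration (conditioning first on $x_i$) of $\sum_{j\ne i} k(x_i,x_j)^2 \lesssim n\sum_\ell \lambda_\ell^2 \le \lambda_{p+1}\cdot n\sum_{\ell>p}\lambda_\ell \asymp \lambda_{p+1}\,n\,\alpha_L$ together with the analogous $G$-block estimate, I get $\abs{\epsilon_{\setminus i}(x_i)} \lesssim \sqrt{\lambda_{p+1}n\log n/\alpha_L} + c\sqrt{\log n}$. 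The failure probabilities of the sub-exponential bounds, union-bounded over $i$, account for the $ne^{-p}$ loss; the Hoeffding and pointwise-evaluation steps account for the $3/n$ loss; and Assumption~\ref{assumps} contributes its $\delta$.

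\textbf{Main obstacle.} I expect the variance step to be the hard part, specifically obtaining $\lambda_{p+1}$ (rather than $\sum_\ell\lambda_\ell = k(x,x)$) in the quadratic form $K_{i,\setminus i}K_{\setminus i}^{-1}K_{\setminus i,i}$: the naive estimate $\abs{\epsilon_{\setminus i}(x_i)} \le \norm{k_{x_i}}_\scrH\,\norm{\epsilon_{\setminus i}}_\scrH$ is hopelessly lossy, so one must split $k_{x_i}$ into its $G$ and $G^\perp$ parts, control the $G^\perp$ self-interaction via independence and sub-Gaussianity of the residual features (so that $\E_{x_j}k(x_i,x_j)^2$ concentrates near $\sum_{\ell>p}\lambda_\ell^2 \le \lambda_{p+1}\sum_{\ell>p}\lambda_\ell$), and absorb the $G$-part interactions into $\scrC_{\setminus i}\scrC_{\setminus i}^*$ using the leave-one-out covariance control. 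A secondary nuisance is checking that the normalization by $b$ in Step~2 is legitimate uniformly --- i.e.\ that the gap $1-\norm{\scrSbr\eta^*}_\infty$ really dominates every bias-type fluctuation, and that the variance error stays below it, even when $b$ is small --- and tracking where the single remaining power of $c$ (versus $c^2$ in the BOS case) is recovered.
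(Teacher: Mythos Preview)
Your high-level strategy---the leave-one-out reduction, the bias/variance split, and exploiting sub-Gaussianity of $f(x_i)$ to turn $L_2$ control into pointwise control at cost $\sqrt{\log n}$---is exactly the paper's. But you have the attribution of the four summands reversed, and this is not mere bookkeeping: it determines which terms carry the $1/b$ factor and hence whether you recover the theorem as stated.

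In the paper's proof (Section~4.4), the \emph{noise} term $A=|\epsilon_{\setminus i}(x_i)|/b$ produces the first two summands: Lemma~\ref{lem:noise_term} holds verbatim for sub-Gaussian features (with an extra $e^{-p}$ loss, which accounts for the $ne^{-p}$) and gives $|\epsilon_{\setminus i}(x_i)|\lesssim \sqrt{p\log n/n}+\tfrac{\alpha_U-\alpha_L}{\alpha_U+\alpha_L}\sqrt{\log n}$ with no factor of $b$; the $1/b$ appears only after dividing by the gap. The \emph{bias} terms $B,C$ produce the last two summands: because the noiseless error is an operator applied to $\eta^*$, one has $\|\scrP_{G^\perp}\etahat_{0,\setminus i}\|_{L_2}\lesssim b\sqrt{n\lambda_{p+1}/\alpha_L}$ (Lemma~\ref{lem:subg_g_perp_term}, where $\lambda_{p+1}$ enters via $\|\scrI_{G^\perp}\|_{\scrH\to L_2}=\sqrt{\lambda_{p+1}}$) and $\|(\scrB-\scrBbr)\eta^*\|_{L_2}\lesssim bc$; a sub-Gaussian tail bound on evaluation at $x_i$ then gives pointwise control $\lesssim b\sqrt{\lambda_{p+1}n\log n/\alpha_L}$ and $\lesssim bc\sqrt{\log n}$, and the $b$ cancels against the gap---this is exactly why the last two terms in the theorem carry no $1/b$.

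Your assignment (bias $\to$ first two, noise $\to$ last two) cannot reproduce this. The noise error is not proportional to $b$, so whatever terms it contributes acquire a $1/b$ after normalization; you would end up with $\tfrac{1}{b}\sqrt{\lambda_{p+1}n\log n/\alpha_L}+\tfrac{c}{b}\sqrt{\log n}$, a strictly more restrictive condition than the theorem's. Your Step~3 is in fact internally inconsistent on this: you say the bias is ``proportional to $b$,'' which after dividing by the gap cannot yield terms carrying $1/b$. Relatedly, your ``main obstacle''---extracting $\lambda_{p+1}$ from the noise quadratic form---is a non-issue in the paper's route: $\lambda_{p+1}$ comes from the bias via Lemma~\ref{lem:subg_g_perp_term}, while for the noise the paper bounds $\|\scrR_{\setminus i}k_{x_i}^R\|_2\le\alpha_U-\alpha_L$ deterministically from the first part of Assumption~\ref{assumps}. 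Finally, the inequality $\sum_\ell\lambda_\ell^2\le\lambda_{p+1}\sum_{\ell>p}\lambda_\ell$ in your variance sketch is false as written; you would need $k^R$ rather than $k$, so that only $\ell>p$ appears.
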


We can see that the last two terms are much sharper than the corresponding terms in the BOS case (e.g., $c\sqrt{\log n}$ instead of $c^2\sqrt{p}$). This improved bound, coupled with the faster concentration rates of the operators $\scrR \scrR^*$ and $\scrC^* \scrC$ for sub-Gaussian features, can be shown to weaken the conditions for SVP under the bi-level ensemble. We state this condition in the following corollary, proved in Appendix \ref{app:bilevel_proof}.

\begin{corollary}[SVP for independent, sub-Gaussian case: bi-level ensemble]\label{cor:subg}
Consider the bi-level ensemble with parameters $(n, \beta, q, r)$ and independent, zero-mean, sub-Gaussian features $\{v_{\ell}(x)\}_{\ell=1}^d$. Suppose $\eta^* \in G$ in the classification setting of Section \ref{sec:setup}. Then, as $n \to \infty$, SVP occurs with probability $\to 1$ if $\beta > \max{\parens{1, 3-2r-2q}}$ and $q > \frac{1-r}{2}$.
\end{corollary}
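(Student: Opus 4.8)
The plan is to instantiate Theorem~\ref{thm:subg} for the bi-level ensemble \eqref{eq:bilevel}. This has two parts: first, check that Assumption~\ref{assumps} holds with probability at least $1-\delta$ and read off the induced values of $\alpha_L$, $\alpha_U$, $b$, and $c$; second, substitute these into the SVP condition of Theorem~\ref{thm:subg} and show each of its four terms is $o(1)$ exactly when $\beta>\max(1,3-2r-2q)$ and $q>(1-r)/2$.

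For the first part, the residual block of $\scrT$ consists of $d-p\asymp n^{\beta}$ eigenvalues all equal to $\lambda_{p+1}=n^{-\beta+r+q}$, so $\E[\scrR\scrR^*]=\big(\sum_{\ell>p}\lambda_\ell\big)I_n\asymp n^{r+q}I_n$, and the effective ranks $\frac{\sum_{\ell>p}\lambda_\ell}{\lambda_{p+1}}$ and $\frac{(\sum_{\ell>p}\lambda_\ell)^2}{\sum_{\ell>p}\lambda_\ell^2}$ that control residual-Gram concentration are both $\asymp n^{\beta}$. The sub-Gaussian residual-Gram bound of Appendix~\ref{app:bounds} then shows that, provided $\beta>1$, one may take $\alpha_L\asymp\alpha_U\asymp n^{r+q}$ with $\tfrac{\alpha_U-\alpha_L}{\alpha_U+\alpha_L}\lesssim n^{(1-\beta)/2}$ up to logarithmic factors, so Part~1 holds. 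Since all leading eigenvalues equal $1$, the leading block of $\scrT$ is $\scrI_G$, putting us in the ``identical leading eigenvalues'' case: $\scrSbr=\tfrac{n}{\alphabr+n}\scrI_G$ and $\scrBbr=\tfrac{\alphabr}{\alphabr+n}\scrI_G$ with $\alphabr\asymp n^{r+q}$, so Part~3 holds with the implied constant equal to $1$ (as in the first bullet after Assumption~\ref{assumps}), and $b=\tfrac{\alphabr}{\alphabr+n}\norm{\eta^*}_\infty\asymp n^{-\max(0,1-r-q)}$, treating $\norm{\eta^*}_\infty$ as a fixed constant. For Part~2, $p=n^{r}$ with $r<1$ gives $n\gg p$, so Lemma~\ref{lem:C_conc_ind} yields $\tfrac2n\norm{\scrC_{\setminus i}^*\scrC_{\setminus i}-n\scrI_G}_{L_2}\lesssim\sqrt{p/n}$ uniformly in $i$; together with the residual-Gram bound we may take $c\asymp n^{(r-1)/2}+n^{(1-\beta)/2}$ up to logarithmic factors, which is below $1/2$ for $n$ large. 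A union bound over the two concentration events gives Assumption~\ref{assumps} with probability at least $1-\delta$.

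For the second part, substitute into the condition of Theorem~\ref{thm:subg}; up to logarithmic factors its four terms are $\tfrac1b\sqrt{p\log n/n}\asymp n^{\max(0,1-r-q)}\,n^{(r-1)/2}$, $\tfrac{\alpha_U-\alpha_L}{\alpha_U+\alpha_L}\tfrac{\sqrt{\log n}}{b}\asymp n^{(1-\beta)/2}\,n^{\max(0,1-r-q)}$, $\sqrt{\lambda_{p+1}n\log n/\alpha_L}\asymp n^{(1-\beta)/2}$, and $c\sqrt{\log n}\asymp n^{(r-1)/2}+n^{(1-\beta)/2}$. If $r+q\ge1$ then $\max(0,1-r-q)=0$ and all four are $o(1)$ provided $r<1$ and $\beta>1$; this is consistent because $r+q\ge1$ forces $3-2r-2q\le1$. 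If $r+q<1$ then $\max(0,1-r-q)=1-r-q$: the first term has exponent $\tfrac{1-r}{2}-(1-r-q)=\tfrac{2q+r-1}{2}$, which is negative iff $q>\tfrac{1-r}{2}$; the second has exponent $\tfrac{1-\beta}{2}+(1-r-q)=\tfrac{3-\beta-2r-2q}{2}$, negative iff $\beta>3-2r-2q$; and the last two are $o(1)$ under $r<1$, $\beta>1$. Taking the union of the two cases gives precisely $\beta>\max(1,3-2r-2q)$ and $q>(1-r)/2$. Finally the failure probability $\delta+ne^{-p}+3/n=\delta+ne^{-n^{r}}+3/n$ tends to $0$ once $\delta=\delta_n\to0$ slowly, so SVP occurs with probability $\to1$.

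The main obstacle is the bookkeeping around $b$, which degrades like $n^{-(1-r-q)}$ when $r+q<1$ and thereby inflates the first two error terms of Theorem~\ref{thm:subg}; tracking how this degradation competes with the residual-Gram rate $n^{(1-\beta)/2}$ is exactly what splits the argument into two regimes and produces the sharp threshold $\beta>3-2r-2q$. A secondary technical point is to confirm that the logarithmic factors buried in the sub-Gaussian concentration lemmas of Appendix~\ref{app:bounds} are genuinely sub-polynomial, so that they do not affect any of the exponent comparisons above.
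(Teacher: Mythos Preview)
Your proposal is correct and follows essentially the same approach as the paper: instantiate the sub-Gaussian concentration lemmas to obtain $\alpha_L,\alpha_U\asymp n^{r+q}$, $\frac{\alpha_U-\alpha_L}{\alpha_U+\alpha_L}\lesssim n^{(1-\beta)/2}$, $c\asymp n^{(1-\beta)/2}+n^{(r-1)/2}$, and $b\asymp n^{-\max(0,1-r-q)}$, then split into the cases $r+q\ge1$ and $r+q<1$ and read off the exponent conditions. One minor arithmetic slip: in the case $r+q<1$ the first exponent is $(1-r-q)+(r-1)/2=\frac{1-r-2q}{2}$, not $\frac{1-r}{2}-(1-r-q)=\frac{2q+r-1}{2}$ (you flipped the sign), though your final conclusion $q>\frac{1-r}{2}$ is correct.
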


This corollary is consistent with the results of \cite{Hsu2021}, which show that SVP occurs in the bi-level model for $q > 1-r$ and $\beta > 1$ for arbitrary labels $y_i$. We recover this result under our label model while also finding that SVP can occur in the range $\frac{1-r}{2} < q < 1-r$ as long as $\beta$ is sufficiently large (i.e., as long as there is sufficient overparameterization). Unlike previous settings in which SVP was shown to occur, this is a regime in which the MNI procedure turns out to be regression-consistent, that is, it can recover $\eta^*$ exactly in the limit as $n \to \infty$ \cite{Muthukumar2021}\footnote{We note here that~\cite{ardeshir2021support} showed that SVP is \emph{unlikely} to happen (on sub-Gaussian data) for the regime where $d_2 := \frac{(\sum_{\ell \geq 1} \lambda_j)^2}{\sum_{\ell \geq 1} \lambda_j^2} \lesssim n \log n$ and $d_\infty^2 := \frac{(\sum_{j \geq 1} \lambda_j)^2}{\lambda_1^2} \gtrsim d_2 n$. However, this regime is not captured by any parameterization of the bilevel ensemble as we always have $d_\infty^2 \ll d_2 n$. Our results apply to a different regime in which we allow $d_{\infty} \leq n$ and still show that SVP can hold.}.

\subsection{Generalization of the kernel SVM in new settings}\label{sec:genresults}
In settings where support vector proliferation occurs, the equivalence between the MNI and the kernel SVM provides an approach for characterizing the generalization error of the SVM classifier. Specifically, we can use the MNI, which has a closed-form solution and has been analyzed extensively in recent literature, to understand the generalization of the kernel SVM. Indeed, as noted in \cite{Hsu2021}, traditional SVM generalization bounds are often based on the fraction of support vectors and hence become uninformative in the types of overparameterized settings where SVP occurs (and this fraction approaches $1$). Nevertheless, the equivalence to MNI can be used to demonstrate that the kernel SVM can indeed generalize well (in the sense of asymptotic consistency) even in highly overparameterized settings where \textit{every} training point is a support vector. Previously, such results were shown for linear models on Gaussian and independent-sub-Gaussian data~\cite{Muthukumar2021,Hsu2021,Wang2021,wang2021multiclass,Cao2021}.
We now demonstrate that such results can also be shown for general kernel methods under the bi-level ensemble, addressing in part an open question raised in~\cite{Muthukumar2021}. Recall that under this ensemble, classical margin bounds can be used to show that the kernel SVM is classification-consistent for $q<1-r$. By leveraging generalization results for the MNI from \cite{McRae2022}, we show below that SVP can be used to show that the kernel SVM is also consistent in some cases where $q>1-r$, i.e. where there is a high degree of overparameterization.

\begin{corollary}[Generalization of kernel SVM in the bi-level ensemble]\label{cor:gen}
Consider the bi-level ensemble with parameters $(n, \beta, q, r)$. Suppose $\eta^* \in G$ in the classification setting of Section \ref{sec:setup}. Then, 
\begin{enumerate}
    \item If $\{v_{\ell}(x)\}_{\ell=1}^d$ satisfy the BOS condition and the thin-shell condition of Corollary \ref{cor:bos}, the excess classification risk of the kernel SVM goes to $0$ with probability $\to 1$ as $n \to \infty$ if 
    \begin{equation*}
        \beta > 3, r<\frac{2}{3}, \text{ and } 0 \leq q < \frac{3}{2}(1-r).
    \end{equation*}
    \item If $\{v_{\ell}(x)\}_{\ell=1}^d$ are zero-mean, independent, and sub-Gaussian, the excess classification risk of the kernel SVM goes to $0$ with probability $\to 1$ as $n \to \infty$ if 
    $$0\leq q < \min{\parens*{1-r+\frac{\beta-1}{2}, \frac{3}{2}(1-r)}}.$$
\end{enumerate}
\end{corollary}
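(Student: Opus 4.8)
\textbf{Proof proposal for Corollary~\ref{cor:gen}.}

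The plan is to transfer an existing generalization bound for the MNI onto the kernel SVM via support vector proliferation, and to handle separately the low-overparameterization regime that our SVP theorems do not reach. Write $\etahat_{\mathrm{SVM}}$ and $\etahat_{\mathrm{MNI}}$ for the solutions of \eqref{svm} and \eqref{mni}. The deterministic observation is that on the event where every training point is a support vector these two functions coincide in $\scrH$, hence induce the same classifier $\sign(\etahat(\cdot))$, the same population misclassification probability, and the same excess risk $\scrE$. Consequently, for every $t>0$,
\[
\P\parens*{\scrE(\etahat_{\mathrm{SVM}}) > t} \;\le\; \P\parens*{\scrE(\etahat_{\mathrm{MNI}}) > t} \;+\; \P(\text{SVP fails}),
\]
so it suffices to (i) establish $\scrE(\etahat_{\mathrm{MNI}}) \to 0$ in probability in the bi-level ensemble over the claimed ranges, and (ii) apply Corollary~\ref{cor:bos} (BOS) or Corollary~\ref{cor:subg} (sub-Gaussian) to drive $\P(\text{SVP fails}) \to 0$. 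Since those corollaries only yield SVP for $q > \tfrac{1-r}{2}$, a different argument is needed when $0 \le q \le \tfrac{1-r}{2}$.

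For step (i) I would take the MNI excess-classification-risk bound of \cite{McRae2022}, which controls $\scrE(\etahat_{\mathrm{MNI}})$ through the idealized survival operator $\scrSbr$, the idealized bias $\scrBbr\eta^*$, and a contamination term built from $\epsilon = \scrA^*(\scrA\scrA^*)^{-1}\xi$, and specialize each quantity to the eigenvalue profile \eqref{eq:bilevel}. The computations are routine: with $d=n^\beta$, $p=n^r$, and $\lambda_\ell = n^{-\beta+r+q}$ for $\ell>p$, one gets $\alphabr \asymp \sum_{\ell>p}\lambda_\ell \asymp n^{r+q}$, so on $G$ (where $\scrT_G=\scrI_G$) the survival is $\scrSbr = \tfrac{n}{\alphabr+n}\scrI_G \asymp n^{-(r+q-1)_+}\scrI_G$, the bias is $\scrBbr\eta^* = \tfrac{\alphabr}{\alphabr+n}\eta^*$, and the contamination obeys the standard bi-level variance estimate $\norm*{\epsilon}_{L_2}^2 \lesssim (p/n + n^{1-\beta})\log n \asymp (n^{r-1}+n^{1-\beta})\log n$. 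Because $\sign(\scrSbr\eta^*) = \sign(\eta^*)$ is the Bayes classifier, classification consistency follows once the contamination is small relative to $\scrSbr\eta^*$ in the sense the \cite{McRae2022} bound requires, which reduces to two exponent inequalities: beating the $n^{r-1}$ term gives $q < \tfrac32(1-r)$, and beating the $n^{1-\beta}$ term gives $q < 1-r+\tfrac{\beta-1}{2}$. In the BOS case the hypothesis $\beta>3$ makes $1-r+\tfrac{\beta-1}{2} > 2-r > \tfrac32(1-r)$, so only the first binds; in the sub-Gaussian case both can bind, producing the $\min$ in the statement. The concentration needed to invoke the \cite{McRae2022} bound (conditions 1--2 of Assumption~\ref{assumps}) is exactly what is verified with high probability in the proof of Corollary~\ref{cor:bos} under the thin-shell hypothesis $\sup_x\abs*{\sum_{\ell>p}(\abs*{v_\ell(x)}^2-1)}\lesssim n^{\beta/2+1}$ (and of Corollary~\ref{cor:subg} under sub-Gaussianity), and condition~3 holds with constant $1$ since the leading eigenvalues are identical; thus the MNI bound and the SVP conclusion can be placed on a single good event, which is all the union bound above requires.

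Step (ii) and the low-$q$ regime are bookkeeping. Corollary~\ref{cor:bos} gives SVP for $\beta>3$, $r<\tfrac23$, $q>\tfrac{1-r}{2}$, and Corollary~\ref{cor:subg} for $\beta>\max(1,3-2r-2q)$, $q>\tfrac{1-r}{2}$; intersecting with the consistency range from step (i) covers all $q \in (\tfrac{1-r}{2},\tfrac32(1-r))$ in part 1 and all $q \in (\tfrac{1-r}{2},\min(1-r+\tfrac{\beta-1}{2},\tfrac32(1-r)))$ in part 2, where one checks that the extra constraint $\beta>3-2r-2q$ holds throughout the relevant range. For $0 \le q \le \tfrac{1-r}{2}$, where our SVP conditions are not guaranteed, I would instead invoke the classical SVM generalization bounds recalled in Section~\ref{sec:genresults}---margin-based in the BOS case, where the kernel is bounded, and sample-compression-based in the sub-Gaussian case, where $k(x,x)$ is unbounded but the support-vector fraction is $o(1)$---both of which already give classification consistency for all $q<1-r$; since $\tfrac{1-r}{2}<1-r$, this closes the gap. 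A union bound over the failure events (SVP failure, the $O(1/n)$, $ne^{-p}$, and $\delta$ terms of Corollaries~\ref{cor:bos}--\ref{cor:subg}, and the failure of the \cite{McRae2022} bound) then completes the proof of both parts.

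The hard part is step (i): pushing the bi-level spectral quantities through the \cite{McRae2022} classification bound carefully enough to land exactly on the exponents $\tfrac32(1-r)$ and $1-r+\tfrac{\beta-1}{2}$ rather than something lossier---in particular, ensuring that the $L_\infty$-versus-$L_2$ gap in controlling the contamination on $G$ does not cost a spurious factor of $\sqrt p$, and tracking how the residual-Gram constants $\alpha_L\asymp\alpha_U$ and the thin-shell parameter propagate. A secondary subtlety is the low-$q$ regime for sub-Gaussian features, where the bounded-kernel margin argument does not apply and one must instead argue consistency via the vanishing support-vector fraction.
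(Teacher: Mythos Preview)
Your proposal is correct and follows essentially the same route as the paper: transfer the MNI excess-risk bound of \cite{McRae2022} to the SVM via SVP (Corollaries~\ref{cor:bos} and~\ref{cor:subg}), and fall back on classical margin/compression bounds for $q \le \tfrac{1-r}{2}$. The paper's proof differs only in presentation: for the BOS case it simply cites Corollary~1 of \cite{McRae2022} for the MNI consistency range rather than rederiving the exponent inequalities, and for the sub-Gaussian case it reruns that corollary's argument with the sharper concentration of Lemmas~\ref{lem:R_conc_ind} and~\ref{lem:C_conc_ind}, splitting $\norm{\etahat_r}_{L_2}$ into a noise piece (Theorem~2 of \cite{McRae2022}) and a bias piece (Lemma~6 of \cite{McRae2022}) to arrive at exactly the two exponents you identify. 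Your worry about a spurious $\sqrt{p}$ from an $L_\infty$/$L_2$ gap is avoided because the relevant classification bound in \cite{McRae2022} is $\scrE \le \norm{\etahat_r}_{L_2}/s$, so everything stays in $L_2$. For the low-$q$ sub-Gaussian regime the paper is actually less careful than you are: it simply asserts that this range ``is covered by previous results as discussed in~\cite{Hsu2021}'' without addressing the unbounded-kernel issue you raise.
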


We note that the second part of Corollary \ref{cor:gen} is very similar to the condition implied by the works \cite{Muthukumar2021, Hsu2021}, while requiring a slightly stronger condition on $q$. However, these prior results assume that the true function is 1-sparse and linear; by contrast, we consider a flexible classification setting where the target function $\eta^*$ is assumed to belong to the subspace $G$ of $\scrH$, and the $y_i$ follow the label model described in Section \ref{sec:setup}. Furthermore, the first part of Corollary \ref{cor:gen} applies to a broad range of non-independent feature families with possibly heavier-tailed distributions. 

\section{Proofs of main theorems} \label{sec:proofs}
In this section, we provide proofs of our main results. We begin by describing our overall proof technique, before considering the main components of the analysis separately, deferring the proofs of useful technical lemmas to Appendix \ref{app:bounds}.
\subsection{Overview of analysis}
Our results build off of the deterministic conditions for support vector proliferation provided in \cite{Hsu2021}. For self-containment, we first restate this result for the kernel setting with the notation introduced in Section \ref{sec:setup}:

\begin{proposition}[Lemma 1 in \cite{Hsu2021}]\label{prop:conditions}
For a kernel $k(x, y)$ corresponding to RKHS $\scrH$,
let the minimum-norm label interpolant $\etahat$ be given by the formula
\[
	\etahat(x) = \sum_{i=1}^n z_i k(x, x_i),
\]
where $z = K^{-1} y$, and $K = [k(x_i, x_j)]_{ij}$ is the kernel Gram matrix, assumed to be non-singular. Furthermore, let $\etahat_{\setminus i}$ be the minimum-norm label interpolant constructed using the leave-one-out data set consisting of points $\{(x_j, y_j)\}_{j=1, j\neq i}^n$. 

Then, the following conditions are equivalent:
\begin{enumerate}
	\item Every data point $(x_i, y_i)$ is a support vector for the max-margin SVM, i.e., the SVM boundary and the binary label interpolant are identical.
	\item For every $i \in \{1, \dots, n\}$, $z_i y_i > 0$, i.e., $z_i \neq 0$ and $\sign(z_i) = y_i$.
	\item For every $i \in \{1, \dots, n\}$, $y_i \etahat_{\setminus i}(x_i) < 1$.
\end{enumerate}
\end{proposition}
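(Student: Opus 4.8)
The plan is to prove the two equivalences $(1)\Leftrightarrow(2)$ and $(2)\Leftrightarrow(3)$, since condition $(2)$ is the natural bridge between the optimization statement $(1)$ and the leave-one-out statement $(3)$. Throughout I use that $K$ is nonsingular, hence $K\succ 0$ (it is a Gram matrix), which makes the relevant optimization problems strictly convex and the kernel sections $k_{x_i}$ linearly independent; this is the source of every strict inequality in the claim. I will take ``$(x_i,y_i)$ is a support vector'' to mean that its dual coefficient $\betahat_i$ is nonzero, which by complementary slackness forces the primal constraint $y_i\etahat(x_i)\ge 1$ to be active, and conversely agrees with the active-constraint description except in the genuinely degenerate case $\betahat_i = 0$ with active constraint.

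For $(1)\Leftrightarrow(2)$, I work with the SVM dual in \eqref{svm}, whose objective $D(\beta):=y^\top\beta - \tfrac12\beta^\top K\beta$ is strictly concave since $K\succ 0$; setting $\nabla D(\beta)=y-K\beta=0$ shows its unique unconstrained maximizer is $z=K^{-1}y$. A strictly concave function attains its maximum over a closed convex set at its unconstrained maximizer exactly when the latter is feasible, so the SVM dual solution satisfies $\betahat = z$ if and only if $y_iz_i\ge 0$ for all $i$. When this holds, $\etahat=\sum_i z_ik_{x_i}$ satisfies $\etahat(x_j)=(Kz)_j=y_j$, so the SVM solution interpolates the labels, every primal constraint is active, and (being the minimum-norm element over a feasible set containing all interpolants) it coincides with the MNI solution \eqref{mni}; moreover $\betahat_i = z_i\ne 0$ for every $i$ precisely when $(2)$ holds, giving $(2)\Rightarrow(1)$. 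Conversely, if every point is a support vector then every primal constraint is active, so the SVM solution interpolates, hence equals the (unique) minimum-norm interpolant, so by linear independence of the $k_{x_i}$ its dual coefficients are $z=K^{-1}y$; dual feasibility gives $y_iz_i\ge 0$, and nonvanishing of each $\betahat_i$ upgrades this to $y_iz_i>0$, which is $(2)$.

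The heart of the argument is $(2)\Leftrightarrow(3)$, which comes from an exact leave-one-out identity. Fix $i$ and relabel so $i=n$; write $K=\left(\begin{smallmatrix} K_{\setminus n} & c\\ c^\top & \kappa\end{smallmatrix}\right)$ with $c=(k(x_n,x_j))_{j<n}$ and $\kappa=k(x_n,x_n)$, where $K_{\setminus n}$ is the leave-one-out Gram matrix. Block inversion shows the last row of $K^{-1}$ equals $s^{-1}(-c^\top K_{\setminus n}^{-1},\,1)$ with Schur complement $s=\kappa-c^\top K_{\setminus n}^{-1}c>0$ (positive since $K\succ 0$), so
\[
z_n=(K^{-1}y)_n=\frac{1}{s}\bigl(y_n-c^\top K_{\setminus n}^{-1}y_{\setminus n}\bigr)=\frac{1}{s}\bigl(y_n-\etahat_{\setminus n}(x_n)\bigr),
\]
using that $\etahat_{\setminus n}=\sum_{j<n}(K_{\setminus n}^{-1}y_{\setminus n})_jk_{x_j}$ and hence $\etahat_{\setminus n}(x_n)=c^\top K_{\setminus n}^{-1}y_{\setminus n}$. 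Multiplying by $y_n$ and using $y_n^2=1$ gives $y_nz_n=s^{-1}\bigl(1-y_n\etahat_{\setminus n}(x_n)\bigr)$ with $s>0$, so $y_nz_n>0$ if and only if $y_n\etahat_{\setminus n}(x_n)<1$. Since $n$ was arbitrary, $(2)\Leftrightarrow(3)$.

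The individual steps are routine; the step needing the most care is $(1)\Leftrightarrow(2)$, because one must pin down the meaning of ``every point is a support vector'' (nonvanishing of every dual coefficient) and combine it with uniqueness of both the SVM solution and the minimum-norm interpolant — all of which rely on $K\succ 0$ — to get the strict inequalities rather than merely $y_iz_i\ge 0$. The only genuine computation is the Schur-complement identity in $(2)\Leftrightarrow(3)$, where the points to get right are the sign of the off-diagonal block of $K^{-1}$, positivity of the Schur complement $s$, and the identification $c^\top K_{\setminus n}^{-1}y_{\setminus n}=\etahat_{\setminus n}(x_n)$.
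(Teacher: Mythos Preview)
The paper does not actually prove this proposition; it is quoted verbatim as Lemma~1 of \cite{Hsu2021} and used as a black box, so there is no ``paper's own proof'' to compare against. Your argument is correct and is essentially the standard one: the Schur-complement identity $y_i z_i = s^{-1}\bigl(1 - y_i\,\etahat_{\setminus i}(x_i)\bigr)$ with $s>0$ is exactly how the equivalence $(2)\Leftrightarrow(3)$ is derived in \cite{Hsu2021}, and your treatment of $(1)\Leftrightarrow(2)$ via strict concavity of the dual and uniqueness of the representation $\sum_i\beta_i k_{x_i}$ (using nonsingularity of $K$) is the natural route. Your explicit choice to read ``support vector'' as ``nonzero dual coefficient'' is the right way to reconcile the strict inequalities in $(2)$ and $(3)$ with condition $(1)$; without that convention, the borderline case $z_i=0$ (where SVM and MNI still coincide but $y_iz_i=0$) would block the strict equivalence, so it is good that you flagged this.
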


Our analysis aims to give conditions under which $|\etahat_{\setminus i}(x_i)| < 1$ for all $i$. When this occurs, Proposition \ref{prop:conditions} immediately implies that all training points are support vectors. Recalling the decomposition (\ref{eq:mni_decomp}), we can write
\begin{equation*}
 |\etahat_{\setminus i}(x_i) | \leq |\epsilon_{\setminus i}(x_i)| + |\scrP_{G^\perp} \etahat_{\setminus i, 0}(x_i)| + |\scrP_{G}\etahat_{\setminus i, 0}(x_i) - \scrSbr\eta^*(x_i)| + |\scrSbr \eta^*(x_i)|.
\end{equation*}
By Assumption \ref{assumps}, $1 - |\scrSbr \eta^*(x_i)| \gtrsim b$, so the above bound will be strictly less than 1 when
\begin{equation}\label{eq:svm-cond}
\begin{aligned}
\underbrace{\frac{|\epsilon_{\setminus i}(x_i)|}{b}}_{A} &+ \underbrace{\frac{|\scrP_{G^\perp} \etahat_{\setminus i, 0}(x_i)|}{b}}_{B} + \underbrace{\frac{|\scrP_{G}\etahat_{\setminus i, 0}(x_i) - \scrSbr\eta^*(x_i)|}{b}}_{C}
\lesssim 1
 \end{aligned}
\end{equation}
for all $i$. Term $A$ represents the interpolation of label noise. Terms $B$ and $C$ represent how well the noiseless estimate $\etahat_{\setminus i, 0}$ is approximated at $x_i$ by the idealized survival $\scrSbr \eta^*$: $B$ represents the component orthogonal to $G$, and $C$ is the component in $G$.
In the following three subsections, which focus on the BOS feature case, we bound these three terms separately, yielding general conditions under which the condition \eqref{eq:svm-cond} is satisfied (Theorem \ref{thm:bos}). Then, in Section \ref{sec:subg_proofs}, we consider the case of independent, sub-Gaussian features and obtain sharper bounds for terms $B$ and $C$ (yielding Theorem \ref{thm:subg}).

\subsection{Noise error}
To bound term $A$ in \eqref{eq:svm-cond}, first observe that the noise variables $\xi$ are bounded with $|\xi_i| \leq 2$ for all $i\leq n$. Hence, the vector $\xi$ is sub-Gaussian in the following sense:
for any fixed vector $z \in \R^n$, $\ip{z}{\xi}$ is a sub-Gaussian random variable with sub-Gaussian norm of $\norm{z}_2$ within a constant.
This means that, for any fixed $z \in \R^n$, with probability at least $1 - \delta'$ with respect to $\xi$,
$\abs{ \ip{z}{\xi} } \lesssim \norm{z}_2 \sqrt{\log \delta'^{-1}}$. So, since 
\[
    \abs{\epsilon(x)} = \abs{\ip{k_x}{\scrA^*(\scrA \scrA^*)^{-1}\xi}_\scrH} = \abs{\ip{(\scrA \scrA^*)^{-1} \scrA k_x}{\xi}},
\]
it suffices to bound $\norm{(\scrA \scrA^*)^{-1} \scrA k_x}_{2}$. We leverage this observation in the following lemma.

\begin{lemma}[Upper bound on noise term]\label{lem:noise_term}
Let $x \in X$ be independent of the training data, and suppose Assumption \ref{assumps} holds. Then, with probability at least $1-\frac{1}{n^2}$,
\[
    \abs{\epsilon(x)} \lesssim \sqrt{\frac{p \log{n}}{n}} + \frac{1}{\alphatl} \norm{\scrR k_x^R}_2\sqrt{\log{n}},
\]
where $\alphatl = \frac{\alpha_L + \alpha_U}{2}$ and $k_x^R = \scrP_{G^{\perp}} k_x$.
\end{lemma}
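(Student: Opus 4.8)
The plan is to exploit the representation $\abs{\epsilon(x)} = \abs{\ip{(\scrA \scrA^*)^{-1}\scrA k_x}{\xi}}$ already derived above, so that it suffices to control $\norm{(\scrA\scrA^*)^{-1}\scrA k_x}_2$ with high probability over the training data (the sub-Gaussian concentration over $\xi$ then contributes the $\sqrt{\log n}$ factor, with $\delta' = n^{-2}$). The natural first step is to split $k_x = k_x^G + k_x^R$ where $k_x^G = \scrP_G k_x$ and $k_x^R = \scrP_{G^\perp}k_x$, giving $\scrA k_x = \scrC^* {}^{\!*}(\dots)$—more precisely $\scrA k_x = \scrA_G k_x^G + \scrA_{G^\perp}k_x^R$; writing this in terms of $\scrC = \scrA_G$ and $\scrR = \scrA_{G^\perp}$, we get $\scrA k_x = \scrC(\scrT_G k_{x,G}^{\mathrm{coord}}) + \scrR k_x^R$ after using the reproducing/Mercer structure ($k_x^G = \sum_{\ell\le p}\lambda_\ell v_\ell(x) v_\ell$, so that $(\scrC k_x^G)_i = \sum_{\ell\le p}\lambda_\ell v_\ell(x) v_\ell(x_i)$). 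By the triangle inequality,
\[
\norm{(\scrA\scrA^*)^{-1}\scrA k_x}_2 \le \norm{(\scrA\scrA^*)^{-1}\scrA k_x^G}_2 + \norm{(\scrA\scrA^*)^{-1}\scrR k_x^R}_2,
\]
and I would bound the two pieces separately: the first should produce the $\sqrt{p\log n / n}$ term and the second the $\alphatl^{-1}\norm{\scrR k_x^R}_2$ term.

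For the residual piece, the key tool is Assumption \ref{assumps}(1): since $\scrA\scrA^* = \scrC\scrC^* + \scrR\scrR^* \succcurlyeq \scrR\scrR^* \succcurlyeq \alpha_L I_n$, we have $\norm{(\scrA\scrA^*)^{-1}}_{\ell_2\to\ell_2}\le \alpha_L^{-1}$, and more carefully one can write $(\scrA\scrA^*)^{-1}\scrR k_x^R$ and bound its norm by $\alpha_L^{-1}\norm{\scrR k_x^R}_2$; converting $\alpha_L$ to $\alphatl = (\alpha_L+\alpha_U)/2$ costs only the constant factor $\alpha_U/\alpha_L \asymp 1$ hidden in Assumption \ref{assumps}(1) (recall $\alpha_L \asymp \alpha_U$), which is absorbed into $\lesssim$. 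This yields the second term cleanly. For the leading piece $\norm{(\scrA\scrA^*)^{-1}\scrA k_x^G}_2$, I would again use $\norm{(\scrA\scrA^*)^{-1}}\le \alpha_L^{-1}$ together with an upper bound on $\norm{\scrA k_x^G}_2 = \norm{\scrC\,\scrT_G(\text{coordinates of }k_x^G)}_2$; concretely $(\scrA k_x^G)_i = \ip{k_x^G}{k_{x_i}}_\scrH$ and $\norm{\scrA k_x^G}_2^2 = \sum_i \ip{k_x^G}{k_{x_i}}_\scrH^2 = \norm{\scrC k_x^G}_2^2$, which via the feature-covariance concentration on $G$ (Assumption \ref{assumps}(2), or equivalently $\scrC^*\scrC \approx n\scrI_G$) is at most $\lesssim n\norm{\scrT_G^{1/2}(\dots)}^2$, and using the BOS bound $\sum_{\ell\le p}v_\ell^2(x)\le Cp$ pointwise gives a factor $p$; combined with $\alpha_L \asymp n$ (which follows in the relevant regime, or is part of the normalization) this produces $\sqrt{p\log n/n}$ after the $\xi$-concentration step. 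One has to be slightly careful that $x$ is independent of the training data, so these concentration statements hold conditionally and the BOS/pointwise bounds apply to the fresh $x$.

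The main obstacle I anticipate is bookkeeping the exact dependence on $\alpha_L$ versus $n$ in the leading term: one wants $\norm{\scrA k_x^G}_2 \cdot \alpha_L^{-1} \lesssim \sqrt{p/n}$ (before the $\sqrt{\log n}$), which requires both that $\scrC^*\scrC$ concentrates around $n\scrI_G$ (giving $\norm{\scrA k_x^G}_2 \lesssim \sqrt{n}\cdot\sqrt{p}\cdot(\text{eigenvalue scaling})$) and that $\alpha_L \gtrsim n \lambda_{p+1}$ or similar, so that the ratio comes out right after the eigenvalue factors from $\scrT_G$ cancel appropriately against $(\scrA\scrA^*)^{-1}$—in fact the cleanest route is probably to not bound $(\scrA\scrA^*)^{-1}$ crudely by $\alpha_L^{-1}I$ but rather to recognize $(\scrA\scrA^*)^{-1}\scrA k_x^G$ as (close to) the MNI applied to the "sample of $k_x^G$", i.e. to mimic the survival-operator analysis of \cite{McRae2022} so that the $\scrT_G$ factors telescope and only $\sqrt{p/n}$ survives. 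I would also need to track the failure probabilities: $\delta$ from Assumption \ref{assumps}, plus $n^{-2}$ from the union bound over the $\xi$-concentration, but the lemma as stated only claims probability $1 - n^{-2}$, so presumably Assumption \ref{assumps} is being conditioned on (its event is assumed to hold), and the only randomness left is $\xi$ and possibly the fresh $x$ — I would state the bound on the event of Assumption \ref{assumps} and then apply the one-sided sub-Gaussian tail for $\ip{(\scrA\scrA^*)^{-1}\scrA k_x}{\xi}$ with $\delta' = n^{-2}$.
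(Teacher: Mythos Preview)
Your overall scaffolding---split $k_x = k_x^G + k_x^R$, bound $\norm{(\scrA\scrA^*)^{-1}\scrA k_x}_2$ termwise, then apply the sub-Gaussian tail for $\xi$ with $\delta' = n^{-2}$---matches the paper, and your handling of the $G^\perp$ piece and the probability bookkeeping is fine (the paper gets $\alphatl^{-1}$ rather than $\alpha_L^{-1}$, but these agree up to constants).

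The gap is in the $G$ piece. The crude bound $\norm{(\scrA\scrA^*)^{-1}\scrA k_x^G}_2 \le \alpha_L^{-1}\norm{\scrA k_x^G}_2$ does \emph{not} yield $\sqrt{p/n}$: the covariance concentration gives $\norm{\scrA k_x^G}_2^2 \asymp n\norm{k_x^G}_{L_2}^2 = n\sum_{\ell\le p}\lambda_\ell^2 v_\ell^2(x)$, which still carries the eigenvalues, and there is \emph{no} assumption that $\alpha_L \asymp n$ (e.g., in the bi-level model $\alpha_L \asymp n^{r+q}$). Your fallback ``mimic the survival-operator analysis'' gestures in the right direction but omits the mechanism that makes it work. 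The paper's resolution has two ingredients you are missing. First, one replaces $(\scrA\scrA^*)^{-1}$ by $(\alphatl I_n + \scrA_G\scrA_G^*)^{-1}$ via Lemma~8 of \cite{McRae2022}, paying only the constant $\tfrac{1}{2}(\alpha_U/\alpha_L + 1)$; this step is essential because $\scrA\scrA^* = \scrA_G\scrA_G^* + \scrR\scrR^*$ is not of the form $cI + \scrA_G\scrA_G^*$, so you cannot push through directly. Second, after the push-through identity one obtains
\[
(\alphatl I_n + \scrA_G\scrA_G^*)^{-1}\scrA_G k_x^G = \scrC(\alphatl\scrT_G^{-1} + \scrC^*\scrC)^{-1}\,\scrT_G^{-1}k_x^G,
\]
and the key observation is that $\norm{\scrT_G^{-1}k_x^G}_{L_2}^2 = \sum_{\ell\le p}v_\ell^2(x) \le Cp$ by BOS---the $\scrT_G^{-1}$ is precisely what strips the eigenvalues---while $\norm{(\alphatl\scrT_G^{-1}+\scrC^*\scrC)^{-1}}_{L_2}\lesssim 1/n$ follows from $\scrC^*\scrC \succcurlyeq (n/2)\scrI_G$ alone, with no reference to $\alpha_L$ or the $\lambda_\ell$ whatsoever. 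Combined with $\norm{\scrC}_{L_2\to\ell_2}\lesssim\sqrt{n}$ this gives the clean $\sqrt{p/n}$.
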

\begin{proof}[Proof of Lemma \ref{lem:noise_term}]
Let $x \in X$ be independent of the data. We have
\[
    \epsilon(x) = \ip{k_x}{\scrA^*(\scrA \scrA^*)^{-1}\xi}_\scrH = \ip{(\scrA \scrA^*)^{-1} \scrA k_x}{\xi}.
\]
With respect to the randomness in $\xi$, this is a sub-Gaussian random variable whose sub-Gaussian norm is within a constant of $\norm{(\scrA \scrA^*)^{-1} \scrA k_x}_2$.
By Lemma 8 of \cite{McRae2022}, we have
\begin{equation*}
\begin{aligned}
    \norm{(\scrA \scrA^*)^{-1} \scrA k_x}_2 &= \norm{(\scrA \scrA^*)^{-1} (\alphatl I_n + \scrA_G \scrA_G^*)(\alphatl I_n + \scrA_G \scrA_G^*)^{-1}\scrA k_x}_2\\
    &\leq \frac{1}{2}\parens*{\frac{\alpha_U}{\alpha_L} + 1} \norm{(\alphatl I_n + \scrA_G \scrA_G^*)^{-1}\scrA k_x}_2\\
    &\lesssim \norm{(\alphatl I_n + \scrA_G \scrA_G^*)^{-1}\scrA_G k_x^G}_2 + \norm{(\alphatl I_n + \scrA_G \scrA_G^*)^{-1}\scrR k_x^R}_2,
    \end{aligned}
\end{equation*}
where $\alphatl = \frac{\alpha_L + \alpha_U}{2}$ and $k_x^G$ and $k_x^R$ denote $\scrP_G k_x$ and $\scrP_{G^{\perp}} k_x$, respectively. For the first of these terms, we have 
\begin{equation*}
    \begin{aligned}
    \norm{ (\alphatl I_n + \scrA_G\scrA_G^*)^{-1}\scrA_G k_x^G}_2
    &= \norm{\scrC (\alphatl \scrI_G + \scrT_G \scrC^* \scrC)^{-1} k_x^G}_2\\
    &=  \norm{\scrC (\alphatl \scrT_G^{-1} + \scrC^* \scrC)^{-1}  \scrT_G^{-1} k_x^G}_2\\
    &\leq \norm{\scrC}_{L_2 \to \ell_2} \norm{ (\alphatl \scrT_G^{-1} + \scrC^* \scrC)^{-1}}_{L_2 \to L_2} \norm{ \scrT_G^{-1}  k_x^G }_{L_2}\\
    &\lesssim \sqrt{n}\parens*{\frac{1}{n}} \sqrt{p} = \sqrt{\frac{p}{n}}.
    \end{aligned}
\end{equation*}
Here, the last inequality holds from the assumption on the concentration of $\scrC^* \scrC$ and the fact that $\norm{ \scrT_G^{-1}  k_x^G }_{L_2}^2 = \norm{\sum_{\ell=1}^p v_{\ell}(x) v_\ell}_{L_2}^2 = \sum_{\ell=1}^p v_{\ell}^2(x) \lesssim p$ by Parseval's identity and the BOS assumption. 

For the second term, we have 
\begin{equation*}
    \norm{(\alphatl I_n + \scrA_G \scrA_G^*)^{-1}\scrR k_x^R}_2 \leq \norm{(\alphatl I_n + \scrA_G \scrA_G^*)^{-1}} \norm{\scrR k_x^R}_2
    \leq \frac{1}{\alphatl} \norm{\scrR k_x^R}_2
\end{equation*}
Applying a sub-Gaussian tail bound yields the result. 

We also note here that for independent, sub-Gaussian features, the above argument also applies, except we use the fact that 
\[
    \norm{\scrT_G^{-1} k_x^G}_{L_2} = \parens*{\sum_{\ell = 1}^p v_\ell^2(x)}^{1/2} \lesssim \sqrt{p}
\]
with probability at least $1-e^{-p}$ (see, e.g. \cite[Theorem 3.1.1]{Vershynin2018}) (note that this inequality holds deterministically in the BOS case). So, the final result holds with probability at least $1-e^{-p}-\frac{1}{n^2}$, rather than $1-\frac{1}{n^2}$. We will use this fact in Section \ref{sec:subg_proofs}.
\end{proof}


Now, we apply this lemma to the leave-one-out training set corresponding to index $i$ and let $x = x_i$. In this case, $\norm{\scrR_{\setminus i} k_{x_i}^R}_2 =\norm{\diag^{\perp}(\scrR \scrR^*)e_i}_2 \leq \norm{(\scrR \scrR^* - \alpha_L I_n)e_i}_2 \leq \alpha_U - \alpha_L$. Recalling the definition of $\alphatl$, this yields
\begin{equation*}
    |\epsilon_{\setminus i}(x_i)| \lesssim \sqrt{\frac{p \log{n}}{n}} + \frac{\alpha_U - \alpha_L}{\alpha_U + \alpha_L} \sqrt{\log{n}}
\end{equation*}
with probability at least $1-\frac{1}{n^2}$. We can then take a union bound over the $n$ leave-one-out sets and divide by $b$ to obtain the final bound on term $A$: with probability at least $1-\frac{1}{n}$, for all $i =1, \cdots, n$, 
\begin{equation}\label{eq:termA}
A \lesssim \frac{1}{b} \cdot \sqrt{\frac{p \log{n}}{n}} + \frac{1}{b} \cdot \frac{\alpha_U - \alpha_L}{\alpha_U + \alpha_L} \sqrt{\log{n}}.
\end{equation}

\subsection{Bias error due to $G^\perp$}
For term $B$ in \eqref{eq:svm-cond},
\begin{equation*}
| \scrP_{G^{\perp}}\etahat_{\setminus i, 0}(x_i)|
    = \abs*{\sum_{j = 1, j\neq i}^n (K_{\setminus i}^{-1}\scrA_{\setminus i} \eta^*)_j k^{R}(x_j, x_i)} \leq \norm*{K_{\setminus i}^{-1}\scrA_{\setminus i} \eta^*}_2 \norm*{\diag^{\perp}(\scrR \scrR^*)e_i}_2
\end{equation*}
by the Cauchy-Schwarz inequality. The following lemma bounds the first of these terms.
\begin{lemma}[Useful bound for $G^\perp$ term]\label{lem:g_perp_term}
Under Assumption \ref{assumps} and the classification model described in Section \ref{sec:setup}, 
$
\norm{K^{-1}\scrA \eta^*}_2 \lesssim \frac{b\sqrt{n}}{\alpha_L}.
$
\end{lemma}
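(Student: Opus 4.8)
The plan is to compute $\betahat := K^{-1}\scrA\eta^*$ in closed form via a push‑through identity and then reduce the bound to an $L_2$ estimate on an ``empirical bias'' function that the idealized bias operator $\scrBbr$ controls. Since $\eta^*\in G$ we have $\scrA\eta^* = \scrC\eta^*$, and since $\scrA_G^* = \scrT_G\scrC^*$ the Gram matrix splits as $K = \scrA\scrA^* = \scrC\scrT_G\scrC^* + \scrR\scrR^*$. As $\scrR\scrR^*\succeq\alpha_L I_n$ is invertible, the push‑through (Woodbury) identity gives
\[
	\betahat \;=\; \bigl(\scrR\scrR^* + \scrC\scrT_G\scrC^*\bigr)^{-1}\scrC\eta^* \;=\; (\scrR\scrR^*)^{-1}\scrC\,w,\qquad w := (\scrI_G + \scrT_G P)^{-1}\eta^*,\quad P := \scrC^*(\scrR\scrR^*)^{-1}\scrC,
\]
where $P$ is positive and self‑adjoint on $G$. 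Hence $\norm{\betahat}_2 \le \norm{(\scrR\scrR^*)^{-1}}\,\norm{\scrC w}_2 \le \alpha_L^{-1}\,\norm{\scrC^*\scrC}_{L_2\to L_2}^{1/2}\,\norm{w}_{L_2}\lesssim \alpha_L^{-1}\sqrt{n}\,\norm{w}_{L_2}$, using $\norm{\scrC w}_2^2 = \bigl\langle w,\scrC^*\scrC w\bigr\rangle_{L_2}$ and the covariance bound $\norm{\scrC^*\scrC}_{L_2\to L_2}\lesssim n$ from Assumption~\ref{assumps}. So it remains to prove $\norm{w}_{L_2}\lesssim b$. Morally, $P$ is a constant‑factor surrogate for $\tfrac{n}{\alphabr}\scrI_G$, so $w$ is a constant‑factor surrogate for $\scrBbr\eta^* = (\scrI_G + \tfrac{n}{\alphabr}\scrT_G)^{-1}\eta^*$, whose $L_2$‑norm is at most $\norm{\scrBbr\eta^*}_\infty = b$.

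To make this precise, by conditions~1--2 of Assumption~\ref{assumps} together with $\alpha_L\asymp\alpha_U\asymp\alphabr$ there are absolute constants $0<c_1\le c_2$ with $c_1\tfrac{n}{\alphabr}\scrI_G \preccurlyeq P \preccurlyeq c_2\tfrac{n}{\alphabr}\scrI_G$. Put $R := \tfrac{n}{\alphabr}\scrT_G$ and $N := \scrT_G^{1/2}P\scrT_G^{1/2}$, a positive invertible operator on $G$ with $c_1 R \preccurlyeq N \preccurlyeq c_2 R$. The similarity $\scrT_G^{-1/2}(\scrI_G+\scrT_G P)^{-1} = (\scrI_G+N)^{-1}\scrT_G^{-1/2}$ gives $w = \scrT_G^{1/2}(\scrI_G+N)^{-1}g$ with $g := \scrT_G^{-1/2}\eta^*$, so, using $\scrT_G\preccurlyeq\tfrac{\alphabr}{c_1 n}N$ and that $N$ commutes with $(\scrI_G+N)^{-1}$,
\[
	\norm{w}_{L_2}^2 \;=\; \bigl\langle (\scrI_G+N)^{-1}g,\ \scrT_G(\scrI_G+N)^{-1}g \bigr\rangle_{L_2} \;\le\; \frac{\alphabr}{c_1 n}\,\bigl\langle g,\ N(\scrI_G+N)^{-2}g \bigr\rangle_{L_2}.
\]

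The one delicate step is the operator inequality $N(\scrI_G+N)^{-2} \preccurlyeq c_*^{-1}\,R(\scrI_G+R)^{-2}$ with $c_* := \min(c_1, c_2^{-1}, 1)$. This is where the non‑commutativity of $\scrT_G$ and $P$ bites: $N$ and $R$ are not simultaneously diagonalizable and $t\mapsto t(1+t)^{-2}$ is not monotone, so the two‑sided bound on $N$ does not transfer directly. The resolution is the functional‑calculus identity $t(1+t)^{-2} = (t^{-1}+2+t)^{-1}$ (valid for any positive invertible operator), which exhibits $N(\scrI_G+N)^{-2}$ as the inverse of $N^{-1}+2\scrI_G+N$; since $t\mapsto t$ is operator monotone and $t\mapsto t^{-1}$ operator antitone, $c_1 R \preccurlyeq N \preccurlyeq c_2 R$ yields $N^{-1}+2\scrI_G+N \succeq c_*(R^{-1}+2\scrI_G+R)$, and inverting gives the claim.

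Plugging this in and using that $\scrT_G^{-1/2}$ commutes with every function of $R$, together with $\scrT_G^{-1}R = \tfrac{n}{\alphabr}\scrI_G$ (so that $\scrT_G^{-1/2}R(\scrI_G+R)^{-2}\scrT_G^{-1/2} = \tfrac{n}{\alphabr}(\scrI_G+R)^{-2}$), gives
\[
	\norm{w}_{L_2}^2 \;\le\; \frac{1}{c_1 c_*}\,\bigl\langle \eta^*,\ (\scrI_G+R)^{-2}\eta^* \bigr\rangle_{L_2} \;=\; \frac{1}{c_1 c_*}\,\norm{\scrBbr\eta^*}_{L_2}^2 \;\le\; \frac{b^2}{c_1 c_*},
\]
since $(\scrI_G+R)^{-1} = \scrI_G - n\scrT_G(\alphabr\scrI_G+n\scrT_G)^{-1} = \scrBbr$ and $\norm{\scrBbr\eta^*}_{L_2}\le\norm{\scrBbr\eta^*}_\infty = b$ on the probability space $(X,\mu)$. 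Combined with the first paragraph this gives $\norm{\betahat}_2\lesssim \alpha_L^{-1}\sqrt{n}\,b$. When the lemma is invoked on a leave‑one‑out data set in the proof of Theorem~\ref{thm:bos}, the hypotheses carry over at no cost: eigenvalue interlacing gives $\alpha_L I_{n-1}\preccurlyeq\scrR_{\setminus i}\scrR_{\setminus i}^*\preccurlyeq\alpha_U I_{n-1}$ from condition~1, and the covariance bound is exactly condition~2, stated for $\scrC_{\setminus i}$. I expect the operator‑monotonicity step to be the main obstacle; everything else is routine manipulation with the push‑through identity and the spectral assumptions.
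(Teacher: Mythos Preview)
Your proof is correct and follows the same global structure as the paper: apply the push-through identity to write $K^{-1}\scrA\eta^* = (\scrR\scrR^*)^{-1}\scrA_G\,\scrB\eta^*$ with $\scrB = (\scrI_G + \scrA_G^*(\scrR\scrR^*)^{-1}\scrA_G)^{-1}$ (your $w$ is exactly $\scrB\eta^*$, since $\scrA_G^* = \scrT_G\scrC^*$), then bound by $\alpha_L^{-1}\norm{\scrA_G}_{L_2\to\ell_2}\norm{\scrB\eta^*}_{L_2} \lesssim \alpha_L^{-1}\sqrt{n}\,\norm{\scrB\eta^*}_{L_2}$.

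The difference lies in how you establish $\norm{\scrB\eta^*}_{L_2}\lesssim b$. The paper simply invokes Lemma~\ref{lem:bias_approx_error} (a Neumann-series perturbation estimate from \cite{McRae2022}), which gives $\norm{\scrB\eta^*}_{L_2}\le(1+\tfrac{c}{1-c})\norm{\scrBbr\eta^*}_{L_2}\le(1+\tfrac{c}{1-c})b$ directly. You instead symmetrize to $N=\scrT_G^{1/2}P\scrT_G^{1/2}$ and use the functional-calculus identity $t(1+t)^{-2}=(t^{-1}+2+t)^{-1}$ together with operator monotonicity of $t\mapsto t$ and antitonicity of $t\mapsto t^{-1}$ to transfer the two-sided bound $c_1R\preccurlyeq N\preccurlyeq c_2R$ through the non-monotone map $t\mapsto t(1+t)^{-2}$. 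This is a genuinely different (and self-contained) argument; it avoids the Neumann series entirely but is more intricate. The paper's route is shorter because it reuses a lemma needed elsewhere anyway, and it yields an explicit constant $\tfrac{1}{1-c}$ tied to the quantity $c$ in Assumption~\ref{assumps}; your route gives constants depending on $\alpha_U/\alpha_L$ and the covariance concentration, which under $\alpha_L\asymp\alpha_U$ are equivalent up to universal factors.
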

\begin{proof}[Proof of Lemma \ref{lem:g_perp_term}]
Recall the decomposition $K = \scrA_G \scrA_G^* + \scrR \scrR^*$. Then, by the push-through identity,
\begin{equation*}
\begin{aligned}
    \norm{(\scrR \scrR^* + \scrA_G\scrA_G)^{-1}\scrA\eta^*}_2 &= \norm{ (\scrR\scrR^*)^{-1}\scrA_G (\scrI_{G} + \scrA_G^*(\scrR\scrR^*)^{-1}\scrA_G)^{-1} \eta^*}_2\\
    &= \norm{(\scrR\scrR^*)^{-1}\scrA_G \scrB \eta^*}_2\\
    &\leq \norm{(\scrR\scrR^*)^{-1}}_{\ell_2 \to \ell_2} \norm{\scrA_G }_{L_2 \to \ell_2} \norm{\scrB\eta^* }_{L_2}\\
    &\lesssim \frac{1}{\alpha_L} \sqrt{n}   b\parens*{1 + \frac{c}{1-c}} \lesssim \frac{b\sqrt{n}}{\alpha_L}
    \end{aligned}
\end{equation*}

In the last line, we use the fact that $\norm{\scrA_G }_{L_2 \to \ell_2} = \sqrt{\norm{\scrC^* \scrC}_{L_2}} \asymp \sqrt{n}$ by the second part of Assumption \ref{assumps}. Furthermore, we have denoted $\scrB := (\scrI_{G} + \scrA_G^*(\scrR\scrR^*)^{-1}\scrA_G)^{-1}$ and used Lemma \ref{lem:bias_approx_error} to bound this ``bias operator" by the ``idealized bias operator" defined in Assumption \ref{assumps}: $\norm{\scrB\eta^* }_{L_2} \leq (1 + \frac{c}{1-c})\norm{\scrBbr\eta^* }_{L_2} \leq (1+\frac{c}{1-c})b$.
\end{proof}

Using this lemma and noting (as in the previous section) that $\norm*{\diag^{\perp}(\scrR \scrR^*)e_i}_2 \leq \alpha_U - \alpha_L$ and dividing by $b$, we conclude that term $B$ is bounded (within a constant) by $\parens*{\frac{\alpha_U}{\alpha_L} - 1}\sqrt{n} \asymp \frac{\alpha_U - \alpha_L}{\alpha_U + \alpha_L} \sqrt{n}$.

\subsection{Bias error in $G$}
For Term $C$, we first note that $|\scrP_{G}\etahat_{\setminus i, 0}(x_i) - \scrSbr\eta^*(x_i)|$ corresponds to the absolute error between the predictions of the $G$-projected estimator and its ``idealized" version, given by $\scrSbr\eta^*$. In Lemma \ref{lem:g_term}, we prove a high probability bound for $|\scrP_{G}\etahat_{0}(x) - \scrSbr\eta^*(x)|$, where $x$ is independent of all training points.

\begin{lemma}[Error of the $G$-projected estimator]\label{lem:g_term}
Let $x \in X$ be independent of the training data, and suppose Assumption \ref{assumps} holds. Then, with probability at least $1-\frac{1}{n^2}$,
\[
    \frac{1}{b}|\scrP_{G}\etahat_{0}(x) - \scrSbr\eta^*(x)| \lesssim \sqrt{\frac{p \log{n}}{n}} + \sqrt{p} \frac{\alpha_U - \alpha_L}{\alpha_U + \alpha_L} + c^2 \sqrt{p}.
\]
\end{lemma}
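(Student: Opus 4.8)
The plan is to follow the template of the two preceding lemmas: pull the evaluation functional out as an $L_2$ inner product, use the BOS bound to control the resulting test function, and then expand the relevant operator around its idealized version, feeding in the residual‑Gram and leading‑block covariance concentration from Assumption~\ref{assumps} and Appendix~\ref{app:bounds}. Since $\eta^*\in G$ we have $\scrA\eta^*=\scrC\eta^*$, and since $\scrP_G\scrA^*=\scrA_G^*=\scrT_G\scrC^*$, the $G$-projected noiseless estimator is $\scrP_G\etahat_0=\scrT_G\scrC^*(\scrA\scrA^*)^{-1}\scrC\eta^*$. For any $g\in G$ one has $g(x)=\ip{g}{w_x}_{L_2}$ where $w_x:=\sum_{\ell=1}^p v_\ell(x)v_\ell$, with $\norm{w_x}_{L_2}^2=\sum_{\ell=1}^p v_\ell^2(x)\leq Cp$ deterministically by BOS and $\E_x[w_x\otimes w_x]=\scrI_G$ (the eigenfunctions are $L_2$-orthonormal). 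Hence it suffices to bound $\ip{w_x}{(\scrT_G\scrC^*(\scrA\scrA^*)^{-1}\scrC-\scrSbr)\eta^*}_{L_2}$, and I would split
\[
\scrT_G\scrC^*(\scrA\scrA^*)^{-1}\scrC-\scrSbr
=\scrT_G\scrC^*\big[(\scrA\scrA^*)^{-1}-(\alphatl I_n+\scrA_G\scrA_G^*)^{-1}\big]\scrC
+\big(\widehat{\scrS}-\scrSbr\big),
\]
where $\widehat{\scrS}:=\scrT_G\scrC^*(\alphatl I_n+\scrA_G\scrA_G^*)^{-1}\scrC$ is the ``half-idealized'' survival operator (residual block frozen at $\alphatl I_n$, but $\scrC^*\scrC$ still empirical). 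The first piece carries the dependence on the residual‑Gram fluctuation; the second carries the dependence on the covariance concentration on $G$.

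For the residual‑Gram piece I would insert the resolvent identity $(\scrA\scrA^*)^{-1}-(\alphatl I_n+\scrA_G\scrA_G^*)^{-1}=(\scrA\scrA^*)^{-1}(\alphatl I_n-\scrR\scrR^*)(\alphatl I_n+\scrA_G\scrA_G^*)^{-1}$, use $K=\scrA_G\scrA_G^*+\scrR\scrR^*$ with Assumption~\ref{assumps}(1) to bound $\norm{\alphatl I_n-\scrR\scrR^*}\leq\tfrac12(\alpha_U-\alpha_L)$ and $\norm{(\scrA\scrA^*)^{-1}},\norm{(\alphatl I_n+\scrA_G\scrA_G^*)^{-1}}\lesssim 1/\alpha_L$, and then run the same push-through manipulations as in Lemmas~\ref{lem:noise_term} and~\ref{lem:g_perp_term} to extract $\norm{\scrC}_{L_2\to\ell_2}\asymp\sqrt n$, $\norm{(\alphatl\scrT_G^{-1}+\scrC^*\scrC)^{-1}}_{L_2}\lesssim 1/n$, and a bias factor $\norm{\scrB\eta^*}_{L_2}\lesssim b$ via Lemma~\ref{lem:bias_approx_error}; combined with $\norm{w_x}_{L_2}\lesssim\sqrt p$ this gives a $\tfrac1b$-normalized contribution $\lesssim\sqrt p\,\tfrac{\alpha_U-\alpha_L}{\alpha_U+\alpha_L}$. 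For the covariance piece I would write $\widehat{\scrS}$ and $\scrSbr$ as resolvents in $\scrT_G\scrC^*\scrC$ and $n\scrT_G$ (absorbing the arithmetic-vs-harmonic-mean discrepancy $\alphatl$ vs $\alphabr$ as in~\cite{McRae2022}) and Taylor-expand in the perturbation $\scrC^*\scrC-n\scrI_G$: the zeroth-order term equals $\scrSbr\eta^*$ and cancels; the first-order term, treated via the isotropy $\E_x[w_x\otimes w_x]=\scrI_G$ so that it is governed by the \emph{operator} fluctuation $\tfrac1n\norm{\scrC^*\scrC-n\scrI_G}_{L_2}\lesssim\sqrt{p\log n/n}$ (the Appendix~\ref{app:bounds} bound at failure level $1/n^2$, which is where the $1-1/n^2$ probability is spent), produces the $\sqrt{p\log n/n}$ contribution; and the Neumann remainder, bounded crudely by $\sqrt p\,(\tfrac1n\norm{\scrC^*\scrC-n\scrI_G}_{L_2})^2\leq\sqrt p\,c^2$ using Assumption~\ref{assumps}(2), produces the $c^2\sqrt p$ contribution. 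Summing the three pieces and dividing by $b$ yields the claim.

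The main obstacle is the bookkeeping in the covariance piece, and specifically extracting the sharp $\sqrt{p\log n/n}$ rather than the lossy $p\sqrt{\log n/n}$: the naive estimate $|\ip{w_x}{r}_{L_2}|\leq\norm{w_x}_{L_2}\norm{r}_{L_2}\lesssim\sqrt p\,\norm{r}_{L_2}$ applied to the first-order perturbation term would lose an extra $\sqrt p$, so one must instead use the isotropy of $w_x$ (and the fact that the relevant residual vectors are independent of $x$) to show that this term behaves like $b$ times the covariance fluctuation $\tfrac1n\norm{\scrC^*\scrC-n\scrI_G}_{L_2}$, with no further $\sqrt p$. A secondary subtlety, handled exactly as in the proof of Lemma~\ref{lem:g_perp_term}, is keeping track of which error factors carry the bias scale $b$ (routing them through the empirical bias operator $\scrB$ and Lemma~\ref{lem:bias_approx_error}) so that the final division by $b$ leaves precisely the stated expression; the resolvent and push-through algebra itself is routine given the machinery already in place.
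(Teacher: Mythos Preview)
Your decomposition into a residual-Gram piece and a ``half-idealized'' covariance piece is a legitimate reorganization of the paper's expansion of $\scrB-\scrBbr$, and your handling of the residual-Gram contribution ($\sqrt{p}\,\tfrac{\alpha_U-\alpha_L}{\alpha_U+\alpha_L}$) and of the Neumann remainder ($c^2\sqrt{p}$) is essentially the same as the paper's and would go through.

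The gap is precisely where you flag it: the first-order covariance term. You want to bound $\bigl|\bigl(\scrD(\tfrac1n\scrC^*\scrC-\scrI_G)f\bigr)(x)\bigr|$ for $f=\scrBbr\eta^*$ and a contraction $\scrD$ on $G$, and you correctly note that the naive route $\norm{w_x}_{L_2}\cdot\tfrac1n\norm{\scrC^*\scrC-n\scrI_G}_{L_2}\cdot\norm{f}_{L_2}$ gives an extra $\sqrt{p}$. But your proposed fix---``isotropy $\E_x[w_x\otimes w_x]=\scrI_G$'' combined with the Appendix~\ref{app:bounds} operator-norm bound---does not close the gap. Isotropy only controls the \emph{second moment} of $g(x)$ over random $x$ for $g$ independent of $x$; in the BOS setting $g(x)$ has no usable tail beyond the deterministic $|g(x)|\le\sqrt{Cp}\,\norm{g}_{L_2}$, and Markov at level $1/n^2$ gives only $|g(x)|\le n\norm{g}_{L_2}$. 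There is no way to convert the operator-norm concentration of $\scrC^*\scrC$ into a pointwise bound at $x$ without paying the $\sqrt{p}$.

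The paper's route is the missing idea: rather than bounding an $L_2$ norm and then evaluating, it evaluates at $x$ \emph{first} and writes
\[
\parens*{\scrD\parens*{\scrI_G-\tfrac1n\scrC^*\scrC}f}(x)
=\frac1n\sum_{i=1}^n\bigl[(\scrD f)(x)-f(x_i)\,(\scrD\delta_{x_i}^G)(x)\bigr],
\]
a sum of i.i.d.\ zero-mean scalars over the \emph{training points} $x_i$ (with $x$ and $f$ fixed, since $\scrBbr$ depends only on the deterministic $\alphabr$). Each summand has variance at most $\norm{f}_\infty^2\,\norm{\scrD\delta_x^G}_{L_2}^2\lesssim p\,b^2$ and is bounded by $\lesssim p\,b$, so a scalar Bernstein inequality gives $\sqrt{p\log n/n}\cdot b$ directly with probability $1-1/n^2$. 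The probability budget is spent \emph{here}, on this Bernstein bound over the training data, not on the Appendix~\ref{app:bounds} operator concentration.
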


\begin{proof}[Proof of Lemma \ref{lem:g_term}]
First note that we can write
    $|\scrP_{G}\etahat_{0}(x) - \scrSbr\eta^*(x)| = |((\scrB - \scrBbr)\eta^*)(x)|,$
where $\scrB = (\scrI_G + \scrA_G^* (\scrR \scrR^*)^{-1} \scrA_G)^{-1} \eqqcolon (\scrI_G + \scrM)^{-1}
$
and
$
	\scrBbr = \parens*{ \scrI_G + \frac{n}{\alphabr} \scrT_G}^{-1} \eqqcolon (\scrI_G + \scrMbr)^{-1}.
$
Then note that
\begin{equation*}
	\begin{aligned}
	\scrB - \scrBbr
	&= \scrBbr(\scrMbr - \scrM) \scrB = \scrBbr(\scrMbr - \scrM) \scrBbr + \scrBbr(\scrMbr - \scrM) (\scrB - \scrBbr).
	\end{aligned}
\end{equation*}

So, we get
\begin{equation*}
    \begin{aligned}
        |\scrP_{G}\etahat_{0}(x) - \scrSbr\eta^*(x)| &\leq |(\scrBbr(\scrMbr - \scrM) \scrBbr \eta^*)(x)| + |(\scrBbr(\scrMbr - \scrM) (\scrB - \scrBbr) \eta^*)(x)|\\
    &\leq |(\scrBbr(\scrMbr - \scrM) \scrBbr \eta^*)(x)| + \norm{\scrBbr(\scrMbr - \scrM) (\scrB - \scrBbr) \eta^*}_{L_\infty}
    \end{aligned}
\end{equation*}

We then use the decomposition 
\begin{equation*}
    \begin{aligned}
    	\scrMbr - \scrM
    	&= \frac{n}{\alphabr} \scrT_G - \scrA_G^* (\scrR \scrR^*)^{-1} \scrA_G = \frac{n}{\alphabr} \parens*{ \scrT_G - \frac{1}{n}\scrA_G^* \scrA_G } + \scrA_G^* \parens*{ \frac{1}{\alphabr} I_n - (\scrR \scrR^*)^{-1} } \scrA_G,
     \end{aligned}
\end{equation*}
which, applied to the first term, yields
\begin{equation}
    \label{eq:bias_err_decomp}
    \begin{aligned}
        |\scrP_{G}\etahat_{0}(x) - \scrSbr\eta^*(x)| &\leq \abs*{\parens*{\scrBbr\frac{n}{\alphabr} \parens*{ \scrT_G - \frac{1}{n}\scrA_G^* \scrA_G } \scrBbr \eta^*}(x)}\\
        &\text{ }+ \abs*{\parens*{\scrBbr\scrA_G^* \parens*{ \frac{1}{\alphabr} I_n - (\scrR \scrR^*)^{-1} } \scrA_G \scrBbr \eta^*}(x)} + \norm*{\scrBbr(\scrMbr - \scrM) (\scrB - \scrBbr) \eta^*}_{L_\infty}\\
        &\leq \abs*{\parens*{\scrBbr\frac{n}{\alphabr} \parens*{ \scrT_G - \frac{1}{n}\scrA_G^* \scrA_G } \scrBbr \eta^*}(x)}\\
        &\quad + \norm*{\scrBbr\scrA_G^* \parens*{ \frac{1}{\alphabr} I_n - (\scrR \scrR^*)^{-1} } \scrA_G \scrBbr \eta^*}_{L_\infty} +  \norm*{\scrBbr(\scrMbr - \scrM) (\scrB - \scrBbr) \eta^*}_{L_\infty}
    \end{aligned}
\end{equation}

We obtain the desired bound by separately bounding each of the three terms in (\ref{eq:bias_err_decomp}).

For the first term, note that
\begin{equation*}
    \begin{aligned}
	\scrBbr \cdot \frac{n}{\alphabr} \parens*{ \scrT_G - \frac{1}{n}\scrA_G^* \scrA_G } \scrBbr
	&= \frac{n}{\alphabr} \parens*{ \scrI_G + \frac{n}{\alphabr} \scrT_G}^{-1} \parens*{ \scrT_G - \frac{1}{n}\scrA_G^* \scrA_G } \scrBbr \\
	&= \parens*{ \frac{\alphabr}{n} \scrT_G^{-1} + \scrI_G }^{-1} \parens*{\scrI_G - \frac{1}{n} \scrC^* \scrC} \scrBbr.
    \end{aligned}
\end{equation*}
Rather than bounding an $L_\infty$ norm,
we directly analyze the application of this operator to $\eta^*$.
Let $\scrD \coloneqq \parens*{ \frac{\alphabr}{n} \scrT_G^{-1} + \scrI_G }^{-1}$; note that $\scrD$ is a contraction on $G$ (i.e., $\norm{Df}_{L_2} \leq \norm{f}_{L_2}$ for $f \in G$). Furthermore, let $\delta^G_{x_i} = \scrT_G^{-1} k_{x_i}^G$. Then, for fixed $f \in G$ and $x \in X$, we have
\begin{equation*}
    \begin{aligned}
    	\abs*{\parens*{\scrD \parens*{\scrI_G - \frac{1}{n} \scrC^* \scrC} f}(x)}
    	&= \abs*{\frac{1}{n} \sum_{i=1}^n ( f(x_i) \scrD \delta_{x_i}^G(x)  - (\scrD f)(x))},
    \end{aligned}
\end{equation*}
which is a sum of independent zero-mean random variables.
Specifically, each of the variables $f(x_i) (\scrD \delta_{x_i}^G)(x) - (\scrD f)(x)$ is zero-mean and has variance
\[
	\E_{x_i} \abs{f(x_i) (\scrD \delta_{x_i}^G)(x) - (\scrD f)(x)}^2 \leq \E_{x_i} \abs{f(x_i) (\scrD \delta_{x_i}^G)(x)}^2 \leq \norm{f}_\infty^2 \norm{\scrD \delta^G_x}_{L_2}^2 \lesssim p \norm{f}_\infty^2,
\]

where we use the fact $D$ is a contraction and the BOS assumption in the last inequality. Moreover, each term is bounded as
$
\abs{f(x_i) (\scrD \delta_{x_i}^G)(x) - (\scrD f)(x)} \lesssim \norm{f}_\infty p.
$
So, Bernstein's inequality (e.g., \cite[Section 2.8]{Vershynin2018}) gives, with probability at least $1-\frac{1}{n^2}$,
\[
	\abs*{\parens*{\scrD \parens*{\scrI_G - \frac{1}{n} \scrC^* \scrC} f}(x)} \lesssim \sqrt{\frac{p \log{n}}{n}} \norm{f}_\infty + \frac{p\log{n}}{n} \norm{f}_\infty \lesssim \sqrt{\frac{p \log{n}}{n}} \norm{f}_\infty.
\]
We bound the first term by applying this with $f = \scrBbr \eta^*$ and recalling that $\norm{\scrBbr \eta^*}_\infty = b$.

For the second term of (\ref{eq:bias_err_decomp}), it is proved in Lemma 7 of \cite{McRae2022} (which we state as Lemma \ref{lem:bias_approx_error}) that
\begin{equation*}
	\norm*{\scrBbr \scrA_G^* \parens*{ \frac{1}{\alphabr} I_n - (\scrR \scrR^*)^{-1}} \scrA_G}_{L_2}
	\leq \frac{\alpha_U - \alpha_L}{\alpha_U + \alpha_L}\parens*{ 1 + \frac{1}{n} \norm{\scrC^* \scrC - n \scrI_G}_{L_2} } \asymp \frac{\alpha_U - \alpha_L}{\alpha_U + \alpha_L},
\end{equation*}
where the last expression follows by the second part of Assumption \ref{assumps}. Recall that, by the BOS assumption, for all $f \in G$, $\norm{f}_{L_\infty} \lesssim \sqrt{p} \norm{f}_{L_2}$. So, applying this to $f = \scrBbr \eta^*$ and noting that $\norm{\scrBbr \eta^*}_{L_2} \leq b$,
we get

\[
	\norm*{\scrBbr \scrA_G^* \parens*{ \frac{1}{\alphabr} I_n - (\scrR \scrR^*)^{-1}} \scrA_G \scrBbr \eta^*}_{L_\infty}
	\lesssim \sqrt{p} \frac{\alpha_U - \alpha_L}{\alpha_U + \alpha_L} b .
\]

To bound the third term, note that $\scrBbr (\scrMbr - \scrM)$ is precisely the operator $\scrQ$ from Lemma \ref{lem:bias_approx_error}. Hence,$
\norm{\scrBbr (\scrMbr - \scrM)}_{L_2}  \leq c.
$
Furthermore, we have (again using the result of Lemma \ref{lem:bias_approx_error})
\begin{equation}
    \label{eq:bias-norm}
    \begin{aligned}
    	\norm{(\scrBbr - \scrB)\eta^*}_{L_2} \leq \frac{c}{1-c} \norm{\scrBbr \eta^*}_{L_2} \leq\frac{c}{1-c} \norm{\scrBbr \eta^*}_{L_\infty} \leq \frac{bc}{1-c}
    \end{aligned}
\end{equation}
Now, using the BOS assumption, the third term in (\ref{eq:bias_err_decomp}) can be bounded as:
\begin{equation*}
	\norm{(\scrBbr(\scrMbr - \scrM) (\scrB - \scrBbr) \eta^*}_{L_\infty}
	\leq \sqrt{p} \cdot \norm{\scrBbr (\scrMbr - \scrM)}_{L_2} \norm{(\scrBbr - \scrB)\eta^*}_{L_2} \leq \frac{bc^2 \sqrt{p}}{1-c} \lesssim bc^2 \sqrt{p}
\end{equation*}


Combining all of the above bounds, we finally arrive at
\begin{equation*}
    |\scrP_{G}\etahat_{0}(x) - \scrSbr\eta^*(x)| \lesssim \sqrt{\frac{p \log{n}}{n}}\cdot b +  \frac{\alpha_U - \alpha_L}{\alpha_U + \alpha_L}\cdot b \sqrt{p} + bc^2 \sqrt{p}
\end{equation*}
with probability at least $1-\frac{1}{n^2}$.

\end{proof}

To bound term $C$ in (\ref{eq:svm-cond}), we can apply this lemma to the $n$ leave-one-out training sets and take a union bound to get the final bound, with probability at least $1-\frac{1}{n}$. Combining the bounds on terms $A$, $B$, and $C$ in the previous three subsections yields Theorem $\ref{thm:bos}$.

\subsection{Proofs for sub-Gaussian features}\label{sec:subg_proofs}
Suppose that instead of satisfying the BOS condition, the features $v_{\ell}(x)$ are sub-Gaussian and independent. In this case, we first note that for all $f \in \scrH$, $f(x) = \langle k_{x}, f \rangle_{\scrH}$ is a sub-Gaussian random variable with norm at most $\norm{f}_{L_2}$, up to constants. Using this observation, we can obtain sharp bounds for each term in (\ref{eq:svm-cond}).

For term $A$, we note that the statement of Lemma \ref{lem:noise_term} also holds in the independent, sub-Gaussian case, with probability at least $1-\frac{1}{n^2} - e^{-p}$ (this fact is verified in the proof of the lemma). So, after applying Lemma \ref{lem:noise_term} to each of the $n$ leave-one-out sets, we obtain the same upper bound for term $A$ as in the BOS case, with probability $1-ne^{-p}-\frac{1}{n}$.

For term $B$ we note that the term $\abs{\scrP_{G^\perp} \etahat_{\setminus i, 0} (x)}$ is sub-Gaussian with norm at most $\norm*{\scrP_{G^\perp} \scrA^* (\scrA \scrA^*)^{-1} \scrA \eta^*}_{L_2}$. We bound this in the following lemma.

\begin{lemma}[$G^\perp$ error term for sub-Gaussian features]\label{lem:subg_g_perp_term}
Suppose Assumption $\ref{assumps}$ holds for independent, sub-Gaussian features. Then, 
\[
\norm*{\scrP_{G^\perp} \scrA^* (\scrA \scrA^*)^{-1} \scrA \eta^*}_{L_2} \lesssim \sqrt{\frac{n \lambda_{p+1}}{\alpha_L}}b.
\]
\end{lemma}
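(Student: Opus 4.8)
The plan is to recognize the quantity of interest as $\scrR^* w$, where $w \coloneqq (\scrA \scrA^*)^{-1}\scrA\eta^* = K^{-1}\scrA\eta^*$ is exactly the coefficient vector already controlled by Lemma~\ref{lem:g_perp_term}. Since $\scrR = \scrA_{G^\perp} = \scrA \scrP_{G^\perp}$ and orthogonal projections are self-adjoint, $\scrP_{G^\perp}\scrA^* = \scrR^*$, so $\scrP_{G^\perp}\scrA^*(\scrA \scrA^*)^{-1}\scrA\eta^* = \scrR^* w$. (Here I use that the $\scrH$- and $L_2$-orthogonal projections onto $G$ coincide, because $G$ is spanned by eigenfunctions of $\scrT$, so $\scrR^* w$ genuinely lies in $G^\perp \cap \scrH$.) It therefore suffices to bound $\norm{\scrR^* w}_{L_2}$ in terms of $\norm{w}_2$.

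The key step — and the only place $\lambda_{p+1}$ enters — is to pass from the $L_2$-norm to the $\scrH$-norm on the range of $\scrR^*$. Writing $g \in G^\perp \cap \scrH$ in the eigenbasis as $g = \sum_{\ell>p} c_\ell v_\ell$, we have $\norm{g}_{L_2}^2 = \sum_{\ell>p} c_\ell^2$ while $\norm{g}_\scrH^2 = \sum_{\ell>p} c_\ell^2/\lambda_\ell$; since the eigenvalues are nonincreasing, $\lambda_\ell \leq \lambda_{p+1}$ for all $\ell > p$, hence $\norm{g}_{L_2}^2 \leq \lambda_{p+1}\norm{g}_\scrH^2$, i.e.\ $\norm{\scrP_{G^\perp}}_{\scrH \to L_2} \leq \sqrt{\lambda_{p+1}}$. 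Applying this with $g = \scrR^* w$ and then using $\norm{\scrR^* w}_\scrH^2 = \ip{\scrR \scrR^* w}{w}_{\ell_2} \leq \alpha_U \norm{w}_2^2$ (from the residual Gram matrix bound in Assumption~\ref{assumps}, together with $\alpha_U \asymp \alpha_L$) yields $\norm{\scrR^* w}_{L_2}^2 \lesssim \lambda_{p+1}\alpha_L \norm{w}_2^2$.

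Finally, I would substitute the estimate $\norm{w}_2 = \norm{K^{-1}\scrA\eta^*}_2 \lesssim b\sqrt{n}/\alpha_L$ from Lemma~\ref{lem:g_perp_term}, giving $\norm{\scrR^* w}_{L_2}^2 \lesssim \lambda_{p+1}\alpha_L \cdot b^2 n/\alpha_L^2 = \lambda_{p+1} n b^2/\alpha_L$; taking square roots yields the claim. I do not anticipate any real obstacle: once one observes that $\scrR^*$ maps into $G^\perp$, where the RKHS metric dominates the $L_2$ metric with the favorable factor $\lambda_{p+1}$, the argument reduces to the two operator estimates already in hand. (Note that, despite the hypothesis, sub-Gaussianity of the features is not actually used in this lemma; it is needed only afterwards, to evaluate the resulting function at the held-out point $x_i$ via a sub-Gaussian tail bound, which is where the extra $\sqrt{\log n}$ in term $B$ of Theorem~\ref{thm:subg} comes from.)
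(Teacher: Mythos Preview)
Your proof is correct and follows essentially the same approach as the paper: both pass from the $L_2$ norm to the $\scrH$ norm on $G^\perp$ via $\norm{\scrP_{G^\perp}}_{\scrH \to L_2} \leq \sqrt{\lambda_{p+1}}$, and then invoke the residual Gram matrix bounds together with the bias-operator estimate $\norm{\scrB\eta^*}_{L_2} \lesssim b$. The only cosmetic difference is that you reuse Lemma~\ref{lem:g_perp_term} as a black box for $\norm{K^{-1}\scrA\eta^*}_2$ and then bound $\norm{\scrR^*}_{\ell_2 \to \scrH} \leq \sqrt{\alpha_U}$, whereas the paper re-derives the push-through step inline and bounds $\norm{\scrR^*(\scrR\scrR^*)^{-1}}_{\ell_2 \to \scrH} \leq 1/\sqrt{\alpha_L}$ directly; up to the harmless factor $\sqrt{\alpha_U/\alpha_L} \asymp 1$ these are identical (and your parenthetical observation that sub-Gaussianity is not used in the lemma itself is also correct).
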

\begin{proof}[Proof of Lemma \ref{lem:subg_g_perp_term}]
We proceed in a similar fashion to in the proof of Lemma \ref{lem:g_perp_term}.
\begin{equation*}
    \begin{aligned}
        \norm*{\scrP_{G^\perp} \scrA^* (\scrA \scrA^*)^{-1} \scrA \eta^*}_{L_2} &\leq \norm{\scrI_{G^\perp}}_{\scrH \to L_2} \norm{\scrR^*(\scrA \scrA^*)^{-1}\scrA \eta^*}_{\scrH}\\
        &= \norm{\scrI_{G^\perp}}_{\scrH \to L_2} \norm{\scrR^*(\scrR \scrR^* + \scrA_G \scrA_G^*)^{-1}\scrA \eta^*}_{\scrH}\\
        &= \norm{\scrI_{G^\perp}}_{\scrH \to L_2} \norm{\scrR^*(\scrR \scrR^*)^{-1}\scrA_G \scrB \eta^*}_{\scrH}\\
        &\leq \norm{\scrI_{G^\perp}}_{\scrH \to L_2} \norm{\scrR^*(\scrR \scrR^*)^{-1}}_{\ell_2 \to \scrH} \norm{\scrA_G}_{L_2 \to \ell_2} \norm{\scrB \eta^*}_{L_2}\\
        &\lesssim \sqrt{\lambda_{p+1}}\frac{1}{\sqrt{\alpha_L}} \sqrt{n}  b = \sqrt{\frac{n \lambda_{p+1}}{\alpha_L}}b.
    \end{aligned}
\end{equation*}
In the second to last line, we used the result of Lemma \ref{lem:bias_approx_error} to bound $\norm{\scrB \eta^*}_{L_2}$.
\end{proof}

Using a sub-Gaussian tail bound, term $B$ is thus bounded by 
$\sqrt{\frac{\lambda_{p+1}n \log{n} }{\alpha_L}}$
with probability at least $1-\frac{1}{n^2}$. We then apply this to each of the $n$ leave-one-out sets to obtain the bound on term $B$ with probability $1-\frac{1}{n}$.

For term $C$, we note, as in the proof of Lemma \ref{lem:g_term}, that 
\begin{equation*}
    |\scrP_{G}\etahat_{0}(x) - \scrSbr\eta^*(x)| = |((\scrB - \scrBbr)\eta^*)(x)|,
\end{equation*}
where $\scrB := (\scrI_{G} + \scrA_G^*(\scrR\scrR^*)^{-1}\scrA_G)^{-1}$ (this follows from the definition of $\etahat$ and the pushthrough identity). This term hence has sub-Gaussian norm at most
\begin{equation*}
    \norm{(\scrB - \scrBbr)\eta^*}_{L_2} \leq \frac{bc}{1-c} \lesssim bc,
\end{equation*}
where we have used the result of Lemma \ref{lem:bias_approx_error} in the Appendix.
So, using a tail-bound, we have that 
\[
|\scrP_{G}\etahat_{0}(x) - \scrSbr\eta^*(x)| \lesssim bc \sqrt{\log{n}}
\]
with probability at least $1-\frac{1}{n^2}$ (so this holds for all the leave-one-out sets with probability $1-\frac{1}{n}$). Combining the above bounds yields Theorem \ref{thm:subg}.

\section{Simulations and discussion}

\begin{figure}[tbhp]
    \centering
    \subfloat[$q = -0.4$]{\label{fig:a}\includegraphics{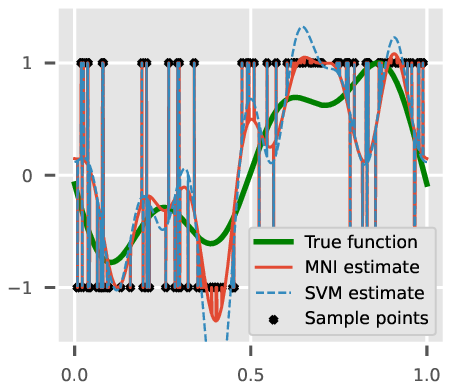}}
    \subfloat[$q = 0.4$]{\label{fig:b}\includegraphics{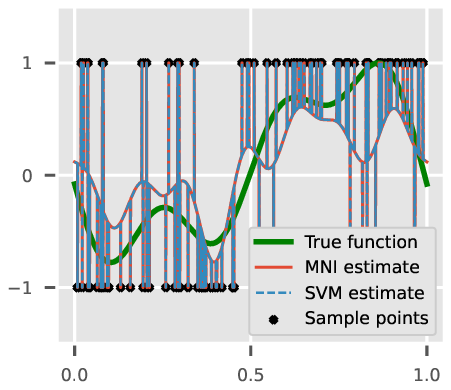}}
    \subfloat[$q = 0.8$]{\label{fig:c}\includegraphics{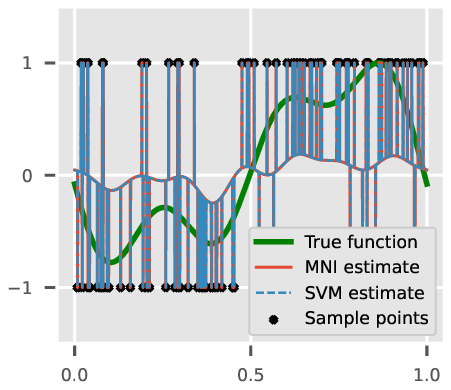}}
    \caption{SVM and MNI solutions with Fourier features in the bi-level ensemble, for various levels of effective overparameterization.}
    \label{fig:fig1}
\end{figure}

To verify our theoretical findings, we now perform simulations of the MNI and SVM procedures under the bi-level model. We consider the case where the input space $X$ is the interval $[0,1]$ with the uniform measure. Furthermore, we consider the Fourier featurization, specified by the eigenfunctions%
\footnote{For simplicity, we use complex exponential eigenfunctions, but we will always use complex-conjugate--symmetric coefficients, resulting in real functions. We could use purely real sines and cosines as our eigenfunctions at the expense of somewhat more complicated notation.}
$v_\ell(x) = e^{j2\pi \ell x}$ for $\ell = -d, \dots, d$. Performing MNI or SVM with these features and the bi-level weighting on features corresponds to the kernel function
\[
k(x, y) = \sum_{\ell = -d}^d \lambda_{\ell} v_\ell(x) \overline{v_{\ell}(y)} = \sum_{\ell = -d}^d \lambda_{\ell} e^{j2\pi \ell (x - y)}
\]
where, the $\lambda_{\ell}$ are given in Equation \ref{eq:bilevel} for $\ell > 0$, and $\lambda_{-\ell} = \lambda_\ell$. For all experiments, we use a training size of $n=100$ (note that the choice of $n$ affects the kernel eigenvalues). 

We first generate a random ground-truth function $\eta^*$ as a linear combination of $\{v_{\ell}\}_{\ell = -3}^3$, with standard normal coefficients, scaled so that $\eta^*(x) \in [-1,1]$. For each $x_i$ in the training set, we generate $y_i$ according to the rule $\P(y_i = 1) = \frac{1+\eta^*(x_i)}{2}$. We then compute and plot the MNI and SVM estimates for three choices of bi-level parameters: $(\beta, r, q) = (3.2, 0.4, -0.4)$,  $(3.2, 0.4, 0.4)$, and $(3.2, 0.4, 0.8),$ corresponding to different levels of effective overparameterization. The results are shown in Figure \ref{fig:fig1}. Based on Corollary \ref{cor:bos}, the two estimates should coincide with high probability when $q > \frac{1-0.4}{2} = 0.3$. Indeed, from Figures \ref{fig:b} and \ref{fig:c}, we see that this is the case for the second and third choice of parameters. Our results do not draw any conclusions about the occurrence of SVP in the range $q < 0.3$, but we see that in this example, SVP does not occur for a very low degree of effective overparameterization, as shown in Figure \ref{fig:a}. 

To measure the effect of the bi-level ensemble parameters, we also plot a heatmap of SVP occurrence for various values of $r$ (the number of favored features) and $q$ (the effective overparameterization). Here, we fix $n=100$, $\beta = 3.2$, and the ground-truth function $\eta^*$ (generated as in Fig. \ref{fig:fig1}), and we plot the proportion of trials (out of 25) that SVP occurs for each $(r,q)$. A heatmap of these results is provided in Figure \ref{fig:fig2}, along with an overlay of the regime for which Corollary \ref{cor:bos} predicts SVP. We note that, at least for the case of Fourier features, there exist values of $(r,q)$ for which SVP occurs but which are not predicted by our theoretical results; understanding SVP in these regimes (e.g., $\frac{2}{3} < r < 1$) (and more generally, providing sharp lower bounds) is an interesting and important direction for future work. 

\begin{figure}[tbhp]
    \centering
    \includegraphics{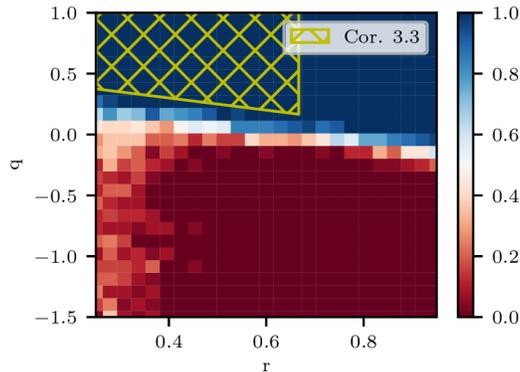}
    \caption{Heatmap showing the proportion of trials (out of 25) that SVP occurs for varying parameter choices in the Fourier bi-level ensemble, and overlay of the SVP regime predicted by our results.}
    \label{fig:fig2}
\end{figure}


\section*{Acknowledgments} 
We gratefully acknowledge the support of the NSF, Adobe Research, Amazon Research and Google Research in carrying out this work.
CK was supported by NSF Graduate Research Fellowship DGE-2039655. AM and MD were supported by NSF grants DMS-2134037 and CCF-2107455. VM was supported by NSF CAREER award CCF-239151, NSF award IIS-2212182, an Adobe Data Science Research Award, an Amazon Research Award, and a Google Research Colabs award.

\appendix

\section{Notation}
In Table \ref{tab:notation}, we review the main symbols which are used in our results and proofs.

\label{app:notation} 
\begin{table}[tbhp]    
	\centering
	\caption{Notation}
	\label{tab:notation}
	\begin{tabular}{|c|l|p{13em}|}
		\hline
		\bf Symbol(s) & \bf Definition(s) & \bf Description \\ \hline
		$k_x$ & $k_x = k(\cdot, x)$ & Kernel function centered at $x$ \\
		$\scrT$ & $\scrT(f) = \int f(x) k_x \ d\mu(x)$ & Integral operator of kernel $k$ \\
		$\{(\lambda_\ell, v_\ell)\}_{\ell=1}^\infty$ & $\scrT(f) = \sum_{\ell=1}^\infty \lambda_\ell \ipsub{f}{v_\ell}{L_2} v_\ell,~\lambda_1 \geq \lambda_2 \geq \cdots$ & Eigenvalue decomposition of $\scrT$ \\
		$\scrA$ & $\scrA(f) = \begin{bmatrix*} f(x_1) \\ \vdots \\ f(x_n) \end{bmatrix*}$ & Sampling operator from $\scrH$ to $\R^n$ \\
		$\scrA^*$ & $\scrA^*(z) = \sum_{i=1}^n z_i k_{x_i}$ & Adjoint of $\scrA$ w.r.t\ $\scrH$ and $\ell_2$ inner products \\
		$G$, $G^\perp$ & $G = \spn\{v_1, \dots, v_p\}$ & Span of first $p$ eigenfunctions of $\scrT$ (and its complement) \\
		$\scrI$ ($\scrI_G$) & & Identity operator (restricted to $G$) \\
		$\scrT_G, \scrT_{G^\perp}$ & $\scrT_G = \scrT \scrP_G$, $\scrT_{G^\perp} = \scrT \scrP_{G^\perp}$ & $\scrT$ restricted to $G$ and $G^\perp$ \\
		$\scrA_G$, $\scrR$ & $\scrA_G = \scrA \scrP_G, \scrR = \scrA \scrP_{G^\perp}$ & Restrictions of sampling operator to $G$, $G^\perp$ \\
		$\scrC, \scrC^*$ & $\scrC = \scrA_G$, $\scrC^* = \scrT_G^{-1} \scrA_G^*$ & Sampling operator and its adjoint on $G$ w.r.t.\ $L_2$ inner product on $G$ \\
		$\alpha_L, \alpha_U$ & $\alpha_L I_n \preceq \alpha I_n + \scrR \scrR^* \preceq \alpha_U I_n$ & Lower and upper bounds for the residual Gram matrix \\
		$c$ & $\frac{\alpha_U - \alpha_L}{\alpha_U + \alpha_L} + \frac{2}{n}\norm{\scrC_{\setminus i}^* \scrC_{\setminus i} - n \scrI_G}_{L_2} \leq c, \forall i \in [n]$ & Upper bound on feature concentration in $G$\\
		$b$ & $b = \norm{\scrBbr \eta^*}_{\infty}$ & Infinity norm of idealized bias applied to $\eta^*$\\
		$\alphabr,\alphatl$ & $\alphabr = \frac{2 \alpha_U \alpha_L}{\alpha_U + \alpha_L}, \alphatl = \frac{\alpha_U + \alpha_L}{2}$ & Harmonic and arithmetic means of $\alpha_U, \alpha_L$ \\
		$\scrB$ & $\scrB = (\scrI_G + \scrA_G^* (\alpha + \scrR \scrR^*)^{-1} \scrA_G)^{-1}$ & Bias operator on $G$ \\
		$\scrS$ & $\scrS = \scrI_G - \scrB$ & Kernel regression operator (``survival'') on $G$ \\
		$\scrBbr$ & $\scrBbr = \parens*{\scrI_G + \frac{n}{\alphabr} \scrT_G}^{-1}$ & Idealized approximation to bias $\scrB$ \\
		$\scrSbr$ & $\scrSbr = \scrI_G - \scrBbr = \frac{n}{\alphabr} \scrT_G \parens*{\scrI_G + \frac{n}{\alphabr} \scrT_G}^{-1}$ & Idealized approximation to survival $\scrS$ \\
		\hline
	\end{tabular}
\end{table}

\section{Concentration bounds and technical lemmas}\label{app:bounds}
In this section we state some useful concentration bounds which can be used to verify the operator concentration conditions in Assumption \ref{assumps}. These inequalities and their proofs appear in \cite{McRae2022}. Similar concentration bounds for sub-Gaussian/independent features can also be found in \cite{Bartlett2020, Tsigler2020}.

\begin{lemma}[General residual Gram matrix concentration] Let $k^R(x, y)$ be the restriction of the kernel $k$ to $G^\perp$. Then, 
    \label{lem:R_conc_generic}
	\begin{equation*}
		\E \norm{\scrR \scrR^* - (\tr \scrT_{G^\perp}) I_n}^2
		\lesssim n^2 \tr(\scrT_{G^\perp}^2)
		+ \norm{k^R(\cdot, \cdot) - \tr \scrT_{G^\perp} }_\infty^2,
	\end{equation*}
	where $\norm{k^R(\cdot, \cdot) - \tr \scrT_{G^\perp} }_\infty = \sup_x \{\abs{k^R(x, x) - \tr \scrT_{G^\perp} } \}$.
\end{lemma}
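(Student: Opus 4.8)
The plan is to split $\scrR\scrR^* - (\tr\scrT_{G^\perp})I_n$ into its diagonal and off-diagonal parts and bound each separately: the diagonal part admits a deterministic pointwise bound matching the second term on the right-hand side, while the off-diagonal part is handled by the crude estimate $\norm{\cdot}\le\norm{\cdot}_F$ followed by a one-line second-moment computation that produces the $n^2\tr(\scrT_{G^\perp}^2)$ term.

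First I would record the elementary facts that $(\scrR\scrR^*)_{ij} = k^R(x_i,x_j) = \sum_{\ell>p}\lambda_\ell v_\ell(x_i)v_\ell(x_j)$, that $\tr\scrT_{G^\perp} = \sum_{\ell>p}\lambda_\ell$ and $\tr(\scrT_{G^\perp}^2) = \sum_{\ell>p}\lambda_\ell^2$, and that these series converge absolutely since $\scrT$ is trace class. Writing $\tau := \tr\scrT_{G^\perp}$, $E := \diag^\perp(\scrR\scrR^*)$, and $D := \scrR\scrR^* - E - \tau I_n$ (a diagonal matrix with entries $k^R(x_i,x_i)-\tau$), the triangle inequality and $(a+b)^2\le 2a^2+2b^2$ give $\norm{\scrR\scrR^* - \tau I_n}^2 \le 2\norm{D}^2 + 2\norm{E}^2$. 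For the diagonal term, $\norm{D} = \max_{i\le n}\abs{k^R(x_i,x_i)-\tau} \le \sup_x\abs{k^R(x,x)-\tau} = \norm{k^R(\cdot,\cdot) - \tr\scrT_{G^\perp}}_\infty$ almost surely, which is exactly the second term claimed (up to the constant). For the off-diagonal term, bound the operator norm by the Frobenius norm, $\norm{E}^2 \le \norm{E}_F^2 = \sum_{i\ne j}k^R(x_i,x_j)^2$, and take expectations; since $x_i,x_j$ are independent for $i\ne j$ and the eigenfunctions are $L_2(X,\mu)$-orthonormal, i.e.\ $\E[v_\ell(x)v_{\ell'}(x)] = \ip{v_\ell}{v_{\ell'}}_{L_2} = \delta_{\ell\ell'}$,
\[
\E[k^R(x_i,x_j)^2] = \sum_{\ell,\ell'>p}\lambda_\ell\lambda_{\ell'}\bigl(\E[v_\ell(x)v_{\ell'}(x)]\bigr)^2 = \sum_{\ell>p}\lambda_\ell^2 = \tr(\scrT_{G^\perp}^2).
\]
Hence $\E\norm{E}^2 \le n(n-1)\tr(\scrT_{G^\perp}^2) \le n^2\tr(\scrT_{G^\perp}^2)$, and combining the two bounds proves the lemma.

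The argument is essentially routine; the only points needing a little care are justifying the termwise manipulation of the Mercer series (absolute convergence / Tonelli, using nonnegativity and $\sum_\ell\lambda_\ell^2<\infty$), and observing that the slack in the step $\norm{E}\le\norm{E}_F$ is precisely what yields the factor $n^2$ — a sharper matrix-concentration bound could improve this factor but is not needed here. I do not anticipate any real obstacle.
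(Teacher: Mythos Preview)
Your proposal is correct and is the standard diagonal/off-diagonal split argument one would expect here. The paper itself does not prove this lemma but defers to \cite{McRae2022}; the argument there follows essentially the same route you outline, so there is nothing to add.
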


\begin{lemma}[Residual Gram matrix concentration for independent, sub-Gaussian features]
	\label{lem:R_conc_ind}
	Suppose the features $\{v_\ell(x)\}_{\ell \geq 1}$ are zero-mean, independent, and sub-Gaussian.
	Then, for $t > 0$, with probability at least $1 - e^{-t}$,
	\[
		\norm{\scrR \scrR^* - (\tr \scrT_{G^\perp}) I_n } \lesssim \sqrt{(n + t) \tr(\scrT_{G^\perp}^2)} + (n + t) \lambda_{p+1}.
	\]
\end{lemma}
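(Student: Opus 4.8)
The plan is to recognize $\scrR \scrR^*$ as a sub-Gaussian sample covariance (Gram) matrix and apply a standard net-plus-Bernstein argument. First I would write out the entries: since $\scrR = \scrA \scrP_{G^\perp}$ and $\scrP_{G^\perp} k_x = \sum_{\ell > p} \lambda_\ell v_\ell(x) v_\ell$, we get $(\scrR \scrR^*)_{ij} = \ip{\scrP_{G^\perp}k_{x_i}}{\scrP_{G^\perp}k_{x_j}}_\scrH = \sum_{\ell > p} \lambda_\ell v_\ell(x_i) v_\ell(x_j) = \ip{\phi(x_i)}{\phi(x_j)}$ for the feature vector $\phi(x) := (\sqrt{\lambda_\ell}\, v_\ell(x))_{\ell > p}$, so $\scrR \scrR^* = Z \Sigma Z^\top$ where $\Sigma := \diag(\lambda_{p+1}, \lambda_{p+2}, \dots)$ and $Z \in \R^{n \times \infty}$ has rows $(v_\ell(x_i))_{\ell > p}$. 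Because the $x_i$ are i.i.d. and the $v_\ell$ are independent, zero-mean, and $L_2$-orthonormal, the rows of $Z$ are i.i.d. with independent, mean-zero, unit-variance, uniformly sub-Gaussian coordinates; in particular $\E\, \scrR \scrR^* = (\tr \Sigma) I_n = (\tr \scrT_{G^\perp}) I_n$, and $\tr(\Sigma^2) = \tr(\scrT_{G^\perp}^2)$, $\norm{\Sigma} = \lambda_{p+1}$, so the claim is exactly $\norm{Z \Sigma Z^\top - (\tr \Sigma) I_n} \lesssim \sqrt{(n+t)\tr(\Sigma^2)} + (n+t)\norm{\Sigma}$ with probability at least $1 - e^{-t}$.

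Next, since $M := \scrR \scrR^* - (\tr \scrT_{G^\perp}) I_n$ is symmetric, $\norm{M} = \sup_{u \in S^{n-1}} \abs{u^\top M u}$, and I would control this via a fixed $\tfrac14$-net $\mathcal{N}$ of $S^{n-1}$ (of cardinality at most $9^n$), using the standard fact $\norm{M} \le 2 \sup_{u \in \mathcal{N}} \abs{u^\top M u}$. For a fixed $u \in S^{n-1}$, setting $g_\ell := \sum_{i=1}^n u_i v_\ell(x_i)$ gives $u^\top M u = \sum_{\ell > p} \lambda_\ell (g_\ell^2 - 1)$, where the $g_\ell$ are independent across $\ell$, mean-zero, satisfy $\E g_\ell^2 = \sum_i u_i^2 = 1$, and are sub-Gaussian with norm bounded by a universal multiple of the common sub-Gaussian constant of the features. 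Hence the $\lambda_\ell(g_\ell^2 - 1)$ are independent, mean-zero, sub-exponential, and Bernstein's inequality for weighted sums of sub-exponential variables yields, for any $s > 0$, with probability at least $1 - 2e^{-s}$,
\[
\abs*{u^\top M u} \;\lesssim\; \sqrt{s \, \tr(\scrT_{G^\perp}^2)} + s\, \lambda_{p+1}.
\]

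Finally I would apply this pointwise bound with $s = t + Cn$ for a suitable absolute constant $C$ (so that $2 \cdot 9^n e^{-s} \le e^{-t}$) and union bound over $\mathcal{N}$; converting back via $\norm{M} \le 2\sup_{u\in\mathcal N}\abs{u^\top M u}$ gives the stated inequality. The argument is essentially routine — it is a special case of known concentration results for sub-Gaussian sample covariance operators (e.g., \cite{Bartlett2020, Tsigler2020}) — so the care needed is limited to (i) checking that the sums over $\ell > p$ converge and that the Bernstein constants are uniform in $\ell$, which follow from $\tr \scrT_{G^\perp} < \infty$ and the uniform sub-Gaussianity built into the hypothesis (one can truncate at $\ell \le N$ and let $N \to \infty$ if a fully rigorous treatment of the infinite sum is desired), and (ii) bookkeeping so that the $\log \abs{\mathcal N} \asymp n$ term from the union bound is absorbed into the $(n+t)$ factors rather than producing a spurious $\sqrt n$-type loss. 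I expect step (ii), the net/union-bound accounting, to be the part most in need of attention, though it poses no genuine difficulty.
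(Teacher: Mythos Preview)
Your argument is correct and is precisely the standard $\epsilon$-net plus Bernstein approach used in the references the paper cites for this result (\cite{McRae2022}, and analogously \cite{Bartlett2020, Tsigler2020}); the paper itself does not supply a proof but simply imports the lemma from \cite{McRae2022}. The only substantive checks are the two you flagged---uniform sub-Gaussianity of the $g_\ell$ and absorbing the $\log|\mathcal N|\asymp n$ cost into $(n+t)$---and both are handled correctly.
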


\begin{lemma}[Sampling operator concentration on $G$ in a BOS]
	\label{lem:C_conc_BOS}
	Suppose the kernel features $\{v_\ell(x)\}_{\ell \geq 1}$ satisfy the BOS condition with constant $C \geq 1$.
	Then, for $t > 0$, with probability at least $1 - e^{-t}$,
	\[
	\frac{1}{n} \norm{ \scrC^* \scrC - n \scrI_G}_{L_2} \lesssim \sqrt{\frac{C p (t + \log p)}{n}} + \frac{Cp (t + \log p)}{n}.
	\]
\end{lemma}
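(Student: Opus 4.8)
The plan is to recognize $\scrC^* \scrC$ as a sum of $n$ i.i.d.\ rank-one self-adjoint operators on the $p$-dimensional space $G$ and apply a matrix Bernstein inequality (e.g., \cite{Vershynin2018}); this is also the argument in \cite{McRae2022}. Set $\delta^G_x \coloneqq \scrT_G^{-1} k_x^G = \sum_{\ell=1}^p v_\ell(x) v_\ell \in G$ and write $u \otimes u$ for the rank-one operator $f \mapsto \ip{u}{f}_{L_2} u$. From $\scrC = \scrA_G$ and $\scrC^* = \scrT_G^{-1}\scrA_G^*$ one checks directly that $\scrC^* \beta = \sum_{i=1}^n \beta_i \, \delta^G_{x_i}$, hence
\[
\scrC^* \scrC = \sum_{i=1}^n \delta^G_{x_i} \otimes \delta^G_{x_i}.
\]
Since the eigenfunctions are orthonormal in $L_2$, $\E_{x \sim \mu}[v_\ell(x) v_m(x)] = \mathbf{1}_{\{\ell = m\}}$, so $\E[\delta^G_x \otimes \delta^G_x] = \scrI_G$, and therefore $\scrC^* \scrC - n \scrI_G = \sum_{i=1}^n (\delta^G_{x_i} \otimes \delta^G_{x_i} - \scrI_G)$ is a sum of i.i.d.\ centered self-adjoint operators on $G$.

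\textbf{Bernstein ingredients.} First, by Parseval and the BOS condition, $\norm{\delta^G_x}_{L_2}^2 = \sum_{\ell=1}^p v_\ell^2(x) \leq Cp$ almost surely; since $\delta^G_x \otimes \delta^G_x$ is positive semidefinite with operator norm $\norm{\delta^G_x}_{L_2}^2$, the eigenvalues of $\delta^G_x \otimes \delta^G_x - \scrI_G$ lie in $[-1, Cp - 1]$, so each centered summand has $L_2$ operator norm at most $Cp$ (using $C \geq 1$). Second, for the variance, $(\delta^G_x \otimes \delta^G_x)^2 = \norm{\delta^G_x}_{L_2}^2 (\delta^G_x \otimes \delta^G_x) \preccurlyeq Cp \, (\delta^G_x \otimes \delta^G_x)$, so $\E[(\delta^G_x \otimes \delta^G_x - \scrI_G)^2] = \E[(\delta^G_x \otimes \delta^G_x)^2] - \scrI_G \preccurlyeq Cp \, \scrI_G$, giving the matrix variance proxy $\norm{\sum_{i=1}^n \E[(\delta^G_{x_i} \otimes \delta^G_{x_i} - \scrI_G)^2]} \leq nCp$.

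\textbf{Conclusion.} Applying the matrix Bernstein inequality on $G$ (dimension $p$) with uniform bound $Cp$ and variance proxy $nCp$: for $u > 0$, with probability at least $1 - 2p \, e^{-u}$,
\[
\norm{\scrC^* \scrC - n \scrI_G}_{L_2} \lesssim \sqrt{nCp \, u} + Cp \, u .
\]
Choosing $u = t + \log(2p)$ makes the failure probability at most $e^{-t}$; dividing by $n$ and absorbing constants gives
\[
\frac{1}{n} \norm{\scrC^* \scrC - n \scrI_G}_{L_2} \lesssim \sqrt{\frac{Cp (t + \log p)}{n}} + \frac{Cp (t + \log p)}{n},
\]
which is the claim. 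There is no deep obstacle here: the only real subtlety is Step~1 (keeping track of the $L_2$-adjoint structure so that the summands take the clean rank-one form $\delta^G_{x_i} \otimes \delta^G_{x_i}$ with mean exactly $\scrI_G$), together with the observation that the BOS bound only controls each summand at the scale $Cp$ rather than $O(1)$. This $p$-dependence in the per-sample operator norm is not removable by an intrinsic-dimension refinement, since the variance here is isotropic of full rank $p$, and it is precisely what forces the $n \gtrsim p \log n$–type requirement that feeds into Assumption~\ref{assumps}(2) and hence into Theorem~\ref{thm:bos}.
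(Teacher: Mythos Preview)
Your proof is correct and is exactly the standard matrix Bernstein argument the paper defers to (the paper does not reprove this lemma but cites \cite{McRae2022}, which uses precisely the decomposition $\scrC^*\scrC=\sum_i \delta^G_{x_i}\otimes\delta^G_{x_i}$ with the BOS bound $\norm{\delta^G_x}_{L_2}^2\le Cp$ to feed matrix Bernstein). The bookkeeping on the $L_2$-adjoint, the variance computation, and the choice $u=t+\log(2p)$ are all clean and match the intended argument.
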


\begin{lemma}[Sampling operator concentration on $G$ for independent, sub-Gaussian features]\label{lem:C_conc_ind}
    Suppose the features $\{v_\ell(x)\}_{\ell \geq 1}$ are zero-mean, independent, and sub-Gaussian. Then, with probability at least $1 - e^{-t}$,
    \[\frac{1}{n} \norm*{\scrC^* \scrC - n\scrI_G}_{L_2}
    \lesssim \sqrt{\frac{p+t}{n}} + \frac{p+t}{n}.\]
\end{lemma}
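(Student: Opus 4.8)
The plan is to reduce the claim to a standard sub-Gaussian covariance-estimation bound by identifying $G$, equipped with the $L_2$ inner product, isometrically with $\R^p$ via the orthonormal eigenbasis $v_1, \dots, v_p$. Under this identification a function $f = \sum_{\ell=1}^p a_\ell v_\ell \in G$ corresponds to its coefficient vector $a \in \R^p$, and the sampling operator acts by $(\scrC f)_i = f(x_i) = \ip{a}{\phi(x_i)}$, where $\phi(x) \coloneqq (v_1(x), \dots, v_p(x))^\top \in \R^p$; hence $\scrC$ is represented by the matrix $V \in \R^{n \times p}$ whose $i$-th row is $\phi(x_i)^\top$. A short computation using Mercer's identity $\scrP_G k_x = \sum_{\ell=1}^p \lambda_\ell v_\ell(x) v_\ell$ shows that the $L_2$-adjoint $\scrC^* = \scrT_G^{-1}\scrA_G^*$ is represented by $V^\top$ --- the factors $\lambda_\ell$ produced by $\scrA_G^*$ are exactly cancelled by $\scrT_G^{-1}$ --- so $\scrC^*\scrC$ is represented by $V^\top V = \sum_{i=1}^n \phi(x_i)\phi(x_i)^\top$ and $\norm{\scrC^*\scrC - n\scrI_G}_{L_2} = \norm{V^\top V - n I_p}$.

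Next I would observe that, since $\{v_\ell\}$ is orthonormal in $L_2(X,\mu)$ and each $v_\ell$ is zero-mean, $\E[v_\ell v_m] = \ip{v_\ell}{v_m}_{L_2} = \delta_{\ell m}$, so $\E[\phi(x)\phi(x)^\top] = I_p$ and $\E[V^\top V] = n I_p$. By hypothesis the coordinates $v_1(x), \dots, v_p(x)$ of $\phi(x)$ are independent, zero-mean, unit-variance, sub-Gaussian; independence of the coordinates then makes $\phi(x)$ a sub-Gaussian random vector with isotropic second moment and $\norm{\phi}_{\psi_2} \lesssim 1$, the implicit constant depending only on the common sub-Gaussian parameter of the $v_\ell$ (Vershynin, Lemma 3.4.2). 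The vectors $\phi(x_1), \dots, \phi(x_n)$ are i.i.d.\ copies of $\phi(x)$ since the $x_i$ are i.i.d.

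I would then invoke the standard covariance-estimation bound for i.i.d.\ sub-Gaussian rows (e.g.\ \cite[Theorem~4.6.1]{Vershynin2018}), which, after the usual reparametrization of the deviation parameter and absorption of constants, gives for every $t>0$, with probability at least $1-e^{-t}$,
\[
    \frac1n\norm{\scrC^*\scrC - n\scrI_G}_{L_2} = \norm*{\frac1n\sum_{i=1}^n \phi(x_i)\phi(x_i)^\top - I_p} \lesssim \sqrt{\frac{p+t}{n}} + \frac{p+t}{n},
\]
which is exactly the asserted inequality. If a self-contained argument is preferred, this bound follows from an $\varepsilon$-net discretization of the sphere $S^{p-1}$: for fixed $u \in S^{p-1}$ the scalar $\ip{u}{(V^\top V - n I_p)u} = \sum_{i=1}^n \bigl(\ip{\phi(x_i)}{u}^2 - 1\bigr)$ is a sum of independent, centered, sub-exponential random variables, controlled by Bernstein's inequality, and a union bound over a $1/4$-net of cardinality at most $9^p$ combined with the standard net estimate for operator norms yields the claim.

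I do not expect a serious obstacle here; the only step requiring genuine care is the operator bookkeeping in the first paragraph --- verifying that $\scrC^*\scrC$, with $\scrC^*$ the adjoint taken with respect to the \emph{$L_2$} inner product on $G$ rather than the $\scrH$ inner product, is the plain Gram matrix $V^\top V$ rather than a version reweighted by the eigenvalues $\lambda_\ell$ --- and, relatedly, recognizing that the isotropy condition needed for the covariance bound is precisely the $L_2$-orthonormality of the eigenfunctions. (This lemma is also a special case of the concentration results in \cite{McRae2022} and parallels the sub-Gaussian Gram-matrix bounds of \cite{Bartlett2020, Tsigler2020}.)
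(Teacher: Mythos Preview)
Your proposal is correct. The paper does not actually supply a proof of this lemma: Appendix~\ref{app:bounds} simply states that ``these inequalities and their proofs appear in \cite{McRae2022}'' (and notes analogous bounds in \cite{Bartlett2020, Tsigler2020}), so there is no paper-specific argument to compare against. Your reduction---identifying $(G,\ip{\cdot}{\cdot}_{L_2})$ with $\R^p$ via the eigenbasis, checking that $\scrC^*\scrC$ becomes the unweighted Gram matrix $V^\top V$ because the $\scrT_G^{-1}$ in $\scrC^*$ cancels the $\lambda_\ell$'s introduced by $\scrA_G^*$, and then invoking the standard isotropic sub-Gaussian covariance bound \cite[Theorem~4.6.1]{Vershynin2018}---is exactly the canonical argument one would expect and is presumably what the cited reference contains.
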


We next state the following approximation result, proven in \cite{McRae2022}, which we use frequently to prove various technical lemmas.
\begin{lemma}[Lemma 7 in \cite{McRae2022}]
	\label{lem:bias_approx_error}
	Let $\scrB := (\scrI_{G} + \scrA_G^*(\scrR\scrR^*)^{-1}\scrA_G)^{-1}$ and $\scrBbr = \parens*{\scrI_G + \frac{n}{\alphabr} \scrT_G}^{-1} $. Then, under the conditions of Assumption \ref{assumps},
	\[
		\scrB - \scrBbr
		= \parens*{\sum_{k=1}^\infty (-\scrQ)^k } \scrBbr,
	\]
	for $\scrQ \coloneqq \parens*{ \scrT_G^{-1} + \frac{n}{\alphabr} \scrI_G }^{-1} \parens*{ \scrC^* (\scrR \scrR^*)^{-1} \scrC - \frac{n}{\alphabr} \scrI_G }$ satisfying $\norm{\scrQ}_{L_2} \leq c$. Here, $c < 1$ is an upper bound on the quantity $\frac{\alpha_U - \alpha_L}{\alpha_U + \alpha_L} + \frac{2}{n} \norm*{ \scrC^*\scrC - n \scrI_G}_{L_2}$.
\end{lemma}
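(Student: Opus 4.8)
The plan is to express $\scrB^{-1}$ and $\scrBbr^{-1}$ as mutual perturbations, invert one against the other via a Neumann series, and then bound the resulting perturbation operator in $L_2$ operator norm. First I would rewrite both operators on $(G, \ip{\cdot}{\cdot}_{L_2})$: using the relation $\scrA_G^* = \scrT_G\scrC^*$ between the $\scrH$- and $L_2$-adjoints of the sampling operator on $G$, we have $\scrB^{-1} = \scrI_G + \scrM$ with $\scrM := \scrA_G^*(\scrR\scrR^*)^{-1}\scrA_G = \scrT_G\,\scrC^*(\scrR\scrR^*)^{-1}\scrC$, and $\scrBbr^{-1} = \scrI_G + \scrMbr$ with $\scrMbr := \tfrac{n}{\alphabr}\scrT_G$. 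Since $G$ is finite-dimensional with $\scrT_G \succ 0$, every inverse appearing below is well defined.

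Next I would write $\scrB^{-1} = \scrBbr^{-1} + (\scrM - \scrMbr) = \scrBbr^{-1}\bigl(\scrI_G + \scrBbr(\scrM - \scrMbr)\bigr)$, hence $\scrB = \bigl(\scrI_G + \scrBbr(\scrM - \scrMbr)\bigr)^{-1}\scrBbr$. The key algebraic observation is that $\scrBbr(\scrM - \scrMbr)$ is exactly the operator $\scrQ$ in the statement: writing $\scrBbr = \bigl(\scrT_G^{-1} + \tfrac{n}{\alphabr}\scrI_G\bigr)^{-1}\scrT_G^{-1}$ and $\scrM - \scrMbr = \scrT_G\bigl(\scrC^*(\scrR\scrR^*)^{-1}\scrC - \tfrac{n}{\alphabr}\scrI_G\bigr)$, the factors $\scrT_G^{-1}$ and $\scrT_G$ cancel and leave $\scrQ = \bigl(\scrT_G^{-1}+\tfrac n{\alphabr}\scrI_G\bigr)^{-1}\bigl(\scrC^*(\scrR\scrR^*)^{-1}\scrC - \tfrac n{\alphabr}\scrI_G\bigr)$. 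Granting $\norm{\scrQ}_{L_2} < 1$, the Neumann expansion of $(\scrI_G + \scrQ)^{-1}$ then yields $\scrB = \bigl(\sum_{k\ge 0}(-\scrQ)^k\bigr)\scrBbr$, and subtracting $\scrBbr$ gives the claimed identity $\scrB - \scrBbr = \bigl(\sum_{k\ge 1}(-\scrQ)^k\bigr)\scrBbr$.

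The main obstacle — and the step where Assumption~\ref{assumps} actually enters and constants must be tracked precisely to land on the stated $c$ — is the bound $\norm{\scrQ}_{L_2} \le c$. I would set $R := \bigl(\scrT_G^{-1} + \tfrac{n}{\alphabr}\scrI_G\bigr)^{-1}$ and note that, since $\scrT_G^{-1} \succeq 0$, one has $R \preceq \tfrac{\alphabr}{n}\scrI_G$ and so $\norm{R}_{L_2} \le \tfrac{\alphabr}{n}$. Then I would split
\[
	\scrC^*(\scrR\scrR^*)^{-1}\scrC - \tfrac{n}{\alphabr}\scrI_G
	= \scrC^*\bigl((\scrR\scrR^*)^{-1} - \tfrac{1}{\alphabr}I_n\bigr)\scrC + \tfrac{1}{\alphabr}\bigl(\scrC^*\scrC - n\scrI_G\bigr).
\]
For the first summand, Assumption~\ref{assumps}(1) forces the eigenvalues of $(\scrR\scrR^*)^{-1}$ into $[\alpha_U^{-1}, \alpha_L^{-1}]$, whose midpoint is $\alphabr^{-1}$, so $\norm{(\scrR\scrR^*)^{-1} - \alphabr^{-1}I_n} \le \tfrac{1}{\alphabr}\cdot\tfrac{\alpha_U - \alpha_L}{\alpha_U + \alpha_L}$; combining this with $\norm{R}_{L_2}\,\norm{\scrC^*\scrC}_{L_2} \le \tfrac{\alphabr}{n}\bigl(n + \norm{\scrC^*\scrC - n\scrI_G}_{L_2}\bigr)$ bounds that term by $\tfrac{\alpha_U - \alpha_L}{\alpha_U + \alpha_L} + \tfrac{1}{n}\norm{\scrC^*\scrC - n\scrI_G}_{L_2}$ (using $\tfrac{\alpha_U-\alpha_L}{\alpha_U+\alpha_L}\le 1$). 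The second summand is bounded directly by $\tfrac{1}{n}\norm{\scrC^*\scrC - n\scrI_G}_{L_2}$ via $\norm{R}_{L_2}\le\alphabr/n$. Adding the two gives $\norm{\scrQ}_{L_2} \le \tfrac{\alpha_U - \alpha_L}{\alpha_U + \alpha_L} + \tfrac{2}{n}\norm{\scrC^*\scrC - n\scrI_G}_{L_2} \le c < 1$, which in particular retroactively justifies the Neumann series. The only real subtlety I anticipate is keeping every operator-norm inequality on the correct space (matrix norms on $\R^n$ versus $L_2$-operator norms on $G$), which the $\scrC/\scrC^*$ bookkeeping from the notation table makes routine.
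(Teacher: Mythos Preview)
Your proposal is correct. The paper does not actually prove this lemma---it is quoted verbatim from \cite{McRae2022} and only stated---so there is no in-paper proof to compare against; that said, your Neumann-series argument (rewrite $\scrB^{-1} = \scrBbr^{-1}(\scrI_G + \scrQ)$, identify $\scrQ = \scrBbr(\scrM-\scrMbr)$, then split $\scrC^*(\scrR\scrR^*)^{-1}\scrC - \tfrac{n}{\alphabr}\scrI_G$ into the $(\scrR\scrR^*)^{-1}$-deviation piece and the $\scrC^*\scrC$-deviation piece) is exactly the standard route and recovers the stated bound with the right constant.
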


Note that by applying the formula for the sum of an infinite geometric series, the above result implies that $\norm{(\scrB-\scrBbr) \eta^*}_{L_2} \leq \frac{c}{1-c} \norm{\scrBbr \eta^*}_{L_2}$ and $\norm{\scrB \eta^*}_{L_2} \leq \parens*{1 + \frac{c}{1-c}} \norm{\scrBbr \eta^*}_{L_2}$. 

\section{Proofs of Corollaries \ref{cor:bos} and \ref{cor:subg}}\label{app:bilevel_proof}
\begin{proof}[Proof of Corollary \ref{cor:bos}]
We bound each term for the bi-level ensemble as in \cite[Corollary 1]{McRae2022}. First, note that

 
\[
    n\sqrt{\tr(\scrT_{G^\perp}^2)} \asymp n\cdot \parens{\parens{n^\beta - n^r} \cdot n^{-2\beta +2r +2q}}^{1/2} \asymp n^{1-\frac{\beta}{2} +r+q}.
\]
By the thin-shell assumption that $\sup_{x}\abs*{\sum_{\ell > p}(v_{\ell}(x)^2 - 1)} \lesssim n^{\beta/2 + 1}$, 

\begin{equation*}
    \begin{aligned}
        \sup_{x}\abs{k^R(x, x) - \tr \scrT_{G^\perp} } &= \sup_{x}\abs*{\sum_{\ell > p}\lambda_\ell (v_{\ell}(x)^2 - 1)} = n^{-\beta + r + q}\sup_{x}\abs*{\sum_{\ell > p}(v_{\ell}(x)^2 - 1)}\\
        &\lesssim n^{-\beta + r + q}n^{\beta/2 + 1} = n^{1 - \beta/2 + r + q},
        \end{aligned}
\end{equation*}
so Lemma \ref{lem:R_conc_generic} implies that in the limit as $n \to \infty$, we can choose $\alpha_L = \lambda_{min}(\scrR \scrR^*)$ and $\alpha_U = \lambda_{max}(\scrR \scrR^*)$ with $\alpha_U, \alpha_L, \alphabr \asymp n^{r+q}$ as long as $1-\frac{\beta}{2} +r+q < r+q$, which is true by our assumption that $\beta > 2$. 
To be more precise, \Cref{lem:R_conc_generic} gives us an expectation bound, which we can then use with Markov's inequality to get a high-probability bound. We can obtain the stated bounds within factors of $n^\epsilon$ for arbitrarily small $\epsilon > 0$, and the probability of these bounds holding goes to $1$ as $n \to \infty$. We can now bound the term $c$ from Assumption \ref{assumps}. Lemma \ref{lem:R_conc_generic} implies that
\begin{equation*}
    \frac{\alpha_U - \alpha_L}{\alpha_U + \alpha_L} \lesssim \frac{1}{\alphabr} n\sqrt{\tr(\scrT_{G^\perp}^2)} \asymp n^{1-\frac{\beta}{2}}.
\end{equation*}

Then, applying Lemma \ref{lem:C_conc_BOS} for each of the $n$ leave-one-out sets allows us to bound $c$ as
\[
c \lesssim \frac{\alpha_U - \alpha_L}{\alpha_U + \alpha_L} + \frac{2}{n}\norm{\scrC_{\setminus i}^* \scrC_{\setminus i} - n \scrI_G}_{L_2} \lesssim n^{1-\frac{\beta}{2}} + n^{\frac{r-1}{2}}\sqrt{\log{n}}. 
\]

Now, note that for the bi-level ensemble, we have $b = \norm*{\frac{\alphabr}{\alphabr + n} \scrI_G \eta^*}_{L_\infty} = \frac{\alphabr}{\alphabr + n} \norm{\eta^*}_{L_\infty} \asymp \frac{\alphabr}{\alphabr + n} \asymp \min\{1, n^{r+q-1}\}$. We consider two cases (ignoring log factors, which are negligible compared to powers of $n$ as $n \to \infty$):
\begin{enumerate}
    \item If $q > 1-r$, $b \asymp 1$, so the conditions for SVP from Theorem \ref{thm:bos} can be written as
    \[
    n^{(r-1)/2} + n^{1-\beta/2}n^{1/2} + \parens{n^{2-\beta} + n^{r-1}}n^{r/2} \ll 1
    \]
    One can easily check that this holds as $n \to \infty$ when $r<2/3$ and $\beta > 3$.
    \item If $q \leq 1-r$, then $b \asymp n^{r+q-1}$, so the condition is
    \[
    n^{1-r-q}n^{(r-1)/2} + n^{1-\beta/2}\parens{n^{1-r-q} +  n^{1/2}} + \parens{n^{2-\beta} + n^{r-1}}n^{r/2} \ll 1
    \]
    This holds as $n \to \infty$ for $r<2/3$, $q>\frac{1-r}{2}$, and $\beta > 3$.
\end{enumerate}
Combining the above two cases yields the result.

\end{proof}

\begin{proof}[Proof of Corollary \ref{cor:subg}]
The proof of this statement is similar to that of Corollary \ref{cor:bos}, but it uses sharper concentration bounds from Lemmas \ref{lem:R_conc_ind} and \ref{lem:C_conc_ind} for independent, sub-Gaussian features. Again, we first bound the quantities $\alpha_U - \alpha_L$ and $c$. Lemma \ref{lem:R_conc_ind} implies that with probability at least $1-n^{-1}$, 

\[
	\norm{\scrR \scrR^* - (\tr \scrT_{G^\perp}) I_n } \lesssim n^{1/2 - \beta/2 + r + q} + n^{1-\beta+r+q}.
\]
Since $\beta > 1$, we can again choose $\alpha_L = \lambda_{min}(\scrR \scrR^*)$ and $\alpha_U = \lambda_{max}(\scrR \scrR^*)$ with $\alpha_U, \alpha_L, \alphabr \asymp n^{r+q}$. Furthermore,
\begin{equation*}
    \frac{\alpha_U - \alpha_L}{\alpha_U + \alpha_L} \lesssim \frac{1}{n^{r+q}} n^{(1-\beta)/2 + r + q} \asymp n^{(1-\beta)/2},
\end{equation*}
and Lemma \ref{lem:C_conc_ind} (again applied as a union bound over the $n$ leave-one-out sets) implies that we can take
$
    c \asymp n^{(1-\beta)/2} + n^{\frac{r-1}{2}} \gtrsim \frac{\alpha_U - \alpha_L}{\alpha_U + \alpha_L} + \frac{2}{n}\norm{\scrC_{\setminus i}^* \scrC_{\setminus i} - n \scrI_G}_{L_2}.
$
We again consider the two cases in the previous corollary (again omitting log factors):

\begin{enumerate}
    \item If $q > 1-r$, $b \asymp 1$, the conditions for SVP from Theorem \ref{thm:subg} can be written as
    \[
    n^{(r-1)/2} + n^{(1-\beta)/2} + \sqrt{\frac{n^{-\beta+r+q+1}}{n^{r+q}}} +  \parens{n^{(1-\beta)/2} + n^{\frac{r-1}{2}}} \ll 1
    \]
    Here, the left hand side is always decreasing in $n$ for $\beta > 1$ and $r<1$.
    
    \item If $q \leq 1-r$, then $b \asymp n^{r+q-1}$, so the condition is
    \[
    n^{\frac{r-1}{2}+ 1-r-q} + n^{1-r-q} \cdot n^{(1-\beta)/2} + \sqrt{\frac{n^{-\beta+r+q+1}}{n^{r+q}}} +  \parens{n^{(1-\beta)/2} + n^{\frac{r-1}{2}}} \ll 1
    \]
    In this case, the left hand side is decreasing for $q>\frac{1-r}{2}$, and $\beta > 3-2r-2q$.
\end{enumerate}
Combining the above two cases, we arrive at the desired result.
\end{proof}

\section{Proof of Corollary \ref{cor:gen}}\label{app:gen_proof}
\
To prove this corollary, we first characterize the range of parameters $(n, \beta, q, r)$ for which the excess classification risk of the MNI solution goes to $0$ in the limit as $n \to \infty$. In the BOS case, it was proven in Corollary 1 of \cite{McRae2022} that this occurs in two settings:

\begin{enumerate}
    \item $q<1-r$, $\beta > 2$ (for which regression is also consistent)
    \item $1-r < q< \frac{3}{2}(1-r)$ and $\beta > 2(r+q)$ (for which classification is consistent but regression is not). 
\end{enumerate}

Combining this with the conditions for SVP in bounded orthonormal systems from \Cref{cor:bos}, we can immediately conclude that the SVM solution is classification-consistent in the regime where both conditions hold (i.e., when SVP occurs and when the MNI is consistent). The first part of Corollary \ref{cor:gen} follows.

Now, to prove the second part of the corollary, we first modify the proof of Corollary 1 of \cite{McRae2022} using sharper concentration bounds for independent, sub-Gaussian features.
We first recall the following expression of the excess risk from \cite{McRae2022}: $\scrE \leq \frac{\norm{\etahat_r}_{L_2}}{s},$
where $s>0$ is a parameter we can choose and $\etahat = s \eta^* + \etahat_r$. 

Note that the idealized survival operator in the bi-level ensemble is given by $\scrSbr = \frac{n}{n + \alphabr} \scrI_G \asymp \frac{n}{n + n^{r+q}} \scrI_g \asymp \min(1, n^{1-r-q}) \scrI_G$. So, we will apply the above bound on the excess classification risk with the choice $s = \min(1, n^{1-r-q})$. Then, by definition, $\etahat_r = \epsilon + \etahat_0 - \scrSbr\eta^*$,
so
\[
\norm{\etahat_r}_{L_2} \leq \norm{\epsilon}_{L_2} + \norm{\etahat_0 - \scrSbr\eta^*}_{L_2},
\]
The first term can be bounded by Theorem 2 of \cite{McRae2022}, which yields
\[
\norm{\epsilon}_{L_2} \lesssim \sqrt{\frac{p}{n}} + \parens*{\frac{n \sum_{\ell>p} \lambda_\ell^2}{\parens*{\sum_{\ell >p} \lambda_\ell}^2}}^{1/2} \asymp n^{(r-1)/2} + \parens*{\frac{nn^{\beta}n^{-2\beta +2r +2q}}{n^{2\beta}n^{-2\beta+2r+2q}}}^{1/2} = n^{(r-1)/2} + n^{(1-\beta)/2},
\]
where we have used the asymptotic scaling derived in the proof of Corollary \ref{cor:subg}.

Furthermore, we can bound $\norm{\etahat_0 - \scrSbr\eta^*}_{L_2}$ with Lemma 6 of \cite{McRae2022}:
\begin{equation*}
\begin{aligned}
\norm{\etahat_0 - \scrSbr\eta^*}_{L_2} &\lesssim \parens*{c + \sqrt{\frac{n\lambda_{p+1}}{\alphabr}}} \min\parens*{\lambda_1, \frac{\alphabr}{n \sqrt{\lambda_p}}, \sqrt{\frac{\alphabr}{n}}}\norm{\eta^*}_\scrH\\
&\lesssim \parens*{n^{(1-\beta)/2} + n^{\frac{r-1}{2}} + n^{(1-\beta)/2}} \min\parens*{1, n^{r+q-1}, n^{(r+q-1)/2}}\\
&\lesssim \parens*{n^{(1-\beta)/2} + n^{\frac{r-1}{2}}} \min\parens*{1, n^{r+q-1}}
\end{aligned}
\end{equation*}
So, we can obtain the risk bound
\[
\scrE \lesssim \max(1, n^{r+q-1})\parens*{n^{(r-1)/2} + n^{(1-\beta)/2} + \parens*{n^{(1-\beta)/2} + n^{\frac{r-1}{2}}} \min\parens*{1, n^{r+q-1}}}
\]
From this we obtain the following:
\begin{enumerate}
    \item If $q<1-r$, then $\scrE \to 0$ (and, in fact $s \to 1$).
    \item if $q>1-r$, then $\scrE \to 0$ if $q<\frac{3}{2}(1-r)$ and $q<1 + \frac{\beta-1}{2} -r$.
\end{enumerate}
Combining this range of allowable parameters with the conditions for SVP in \Cref{cor:subg} yields the second part of the corollary. Again, the case of $q \leq \frac{1 -r}{2} < 1 - r$ is covered by previous results as discussed in \cite{Hsu2021}.


\printbibliography
\end{document}